\documentclass{article}

\usepackage{microtype}
\usepackage{graphicx}
\graphicspath{{Figures/}}

\usepackage{graphicx}
\usepackage{subcaption}
\usepackage{booktabs} 

\usepackage{hyperref}

\PassOptionsToPackage{numbers,sort&compress}{natbib}
\usepackage[main,final]{neurips_2026}

\makeatletter
\renewcommand{\@notice}{}
\makeatother

\usepackage{amsmath}
\DeclareMathOperator{\rank}{rank}
\DeclareMathOperator{\Sp}{Sp}
\usepackage{amssymb}
\usepackage{mathtools}
\usepackage{amsthm}
\usepackage{subcaption} 

\usepackage[capitalize,noabbrev]{cleveref}

\theoremstyle{plain}
\newtheorem{theorem}{Theorem}[section]
\newtheorem{proposition}[theorem]{Proposition}
\newtheorem{lemma}[theorem]{Lemma}
\newtheorem{corollary}[theorem]{Corollary}
\theoremstyle{definition}

\theoremstyle{remark}

\usepackage[textsize=tiny]{todonotes}

\usepackage{algorithm}

\usepackage{float}    
\usepackage{placeins} 

\usepackage{algpseudocode} 
\algrenewcommand\algorithmicrequire{\textbf{Input:}}
\algrenewcommand\algorithmicensure{\textbf{Output:}}

\usepackage{algpseudocode} 
\algrenewcommand\algorithmicrequire{\textbf{Input:}}
\algrenewcommand\algorithmicensure{\textbf{Output:}}

\usepackage{capt-of}

\usepackage{tikz}
\usetikzlibrary{
  arrows.meta,
  positioning,
  calc,
  fit,
  backgrounds,
  decorations.pathreplacing,
  shapes.geometric
}

\definecolor{hamblue}{RGB}{52,114,183}
\definecolor{hamorange}{RGB}{221,132,82}
\definecolor{hamgreen}{RGB}{78,167,121}
\definecolor{hamred}{RGB}{196,78,82}
\definecolor{hamgold}{RGB}{204,160,70}
\definecolor{hamgray}{RGB}{90,90,90}
\definecolor{hamlight}{RGB}{246,248,251}
\definecolor{hamlightblue}{RGB}{232,240,250}
\definecolor{hamlightorange}{RGB}{251,239,229}
\definecolor{hamlightgreen}{RGB}{234,244,238}
\definecolor{hamlightred}{RGB}{250,236,236}
\definecolor{hamlightgold}{RGB}{249,244,230}

\newcommand{\hm}[1]{\(\vphantom{\int}\displaystyle #1\)}

\tikzset{
  hamarrow/.style={
    -{Latex[length=2.8mm,width=1.9mm]},
    line width=0.95pt,
    draw=black!70
  },
  hamsoftarrow/.style={
    -{Latex[length=2.2mm,width=1.5mm]},
    line width=0.7pt,
    draw=black!50
  },
  hambox/.style={
    draw=black!70,
    rounded corners=2.5pt,
    line width=0.9pt,
    fill=white,
    align=center,
    inner sep=4pt,
    font=\small,
    text height=1.8ex,
    text depth=.3ex
  },
  hamsmall/.style={
    draw=black!65,
    rounded corners=2pt,
    line width=0.8pt,
    fill=white,
    align=center,
    inner sep=2.8pt,
    font=\footnotesize,
    text height=1.7ex,
    text depth=.28ex
  },
  hamgroup/.style={
    draw=black!45,
    rounded corners=4pt,
    line width=0.9pt,
    fill=hamlight
  },
  hammath/.style={
    font=\small,
    align=center,
    inner sep=1.6pt,
    text height=1.8ex,
    text depth=.3ex
  },
  qfill/.style={fill=hamlightblue, draw=hamblue!70!black},
  pfill/.style={fill=hamlightorange, draw=hamorange!80!black},
  regfill/.style={fill=hamlightgreen, draw=hamgreen!60!black},
  warnfill/.style={fill=hamlightred, draw=hamred!65!black},
  goldfill/.style={fill=hamlightgold, draw=hamgold!70!black},
}

\usepackage[utf8]{inputenc} 
\usepackage[T1]{fontenc}    
\usepackage{hyperref}       
\usepackage{url}            
\usepackage{booktabs}       
\usepackage{amsfonts}       
\usepackage{nicefrac}       
\usepackage{microtype}      
\usepackage{xcolor}         

\title{Beyond Isotropy in JEPAs: Hamiltonian Geometry and Symplectic Prediction}

\author{%
  Robert Jenkinson Alvarez \\
  \textit{3 March 2026}
}

\date{3 March 2026}

\begin{document}

\maketitle

\begin{abstract}
JEPAs often regularize one-view embeddings toward an isotropic Gaussian, implicitly baking Euclidean symmetry into the representation. We show that this is not merely a benign default. For a known structured downstream geometry $H\succ0$, the minimax and maximum-entropy covariance under a Hamiltonian energy budget is $(c/d)H^{-1}$, and Euclidean isotropy incurs a closed-form price of isotropy. More importantly, when the downstream geometry is unknown, no geometry-independent fixed marginal target is canonical: every fixed covariance shape can be maximally misaligned for some structured geometry. We further show that even oracle one-view marginals do not identify the JEPA view-to-view predictive coupling.
These results suggest that the structural bias in JEPAs should enter the cross-view coupling rather than a fixed encoder marginal. We instantiate this principle with \textbf{HamJEPA}, which encodes each view as a phase-space state $(q,p)$ and predicts view-to-view transitions with a learned Hamiltonian leapfrog map, while non-isotropic scale and spectral floors prevent collapse. In a deliberately headless token protocol, HamJEPA improves over SIGReg on CIFAR-100 by $+4.89$ kNN@20 and $+3.52$ linear-probe points at 30 epochs, and by $+6.45$ kNN@20 and $+10.64$ linear-probe points at 80 epochs, while a matched MLP predictor ablation shows that the symplectic coupling is the ingredient driving the neighborhood-geometry gain. On ImageNet-100, HamJEPA-$q$ improves by $+4.82$ kNN@20 and $+7.52$ linear-probe points at 45 epochs.
\end{abstract}

\section{Introduction}
\label{sec:Introduction}

Joint Embedding Predictive Architectures (JEPAs) learn representations by predicting one embedding from another.
For a fixed predictive loss, downstream performance depends on embedding geometry: scale, anisotropy, and collapse.
We study objectives of the form
$\min_\theta\ \mathcal{L}_{\text{JEPA}}(\theta) + \lambda\,\mathcal{L}_{\text{reg}}(Z_\theta)$,
where $\mathcal{L}_{\text{reg}}$ regularizes the embedding distribution induced by the encoder. 

A standard target is an \emph{isotropic Gaussian} embedding distribution.
LeJEPA motivates this choice under a ``no downstream structure'' viewpoint and enforces it with Sketched Isotropic Gaussian Regularization (SIGReg) \cite{lejepa}.
Isotropy is a symmetry assumption: it treats all embedding directions as exchangeable.
In many settings (patch-token grids, sequences, graphs), the index set carries known structure that induces preferred correlations and invariances; enforcing Euclidean isotropy can misalign the representation with this geometry.

This paper makes three linked contributions. First, for a fixed structured downstream geometry $H$, we derive the minimax and maximum-entropy covariance $(c/d)H^{-1}$ and the price paid by Euclidean isotropy. Second, we show that this oracle result should not be interpreted as a new universal marginal target: when $H$ is unknown, no geometry-independent fixed covariance shape is canonical, and even oracle marginals do not identify predictive coupling. Third, we instantiate the resulting design principle with a phase-space JEPA (\emph{HamJEPA}) that places Hamiltonian structure in the cross-view predictor through a symplectic leapfrog map while using non-isotropic scale and spectral floors to prevent collapse. 

\noindent Code and models:
\href{https://github.com/R02AJ/HamJEPA-HamSIGReg}{GitHub: HamJEPA-HamSIGReg}.

\subsection{Positioning relative to prior regularizers and dynamics priors}
\label{sec:positioning}

LeJEPA regularizes Euclidean embeddings toward an isotropic Gaussian using SIGReg, a sliced characteristic-function (CF) matching objective~\cite{lejepa}. KerJEPA broadens this marginal-discrepancy viewpoint with kernel-based discrepancies~\cite{kerjepa}. HamJEPA departs from this line by asking when Euclidean isotropy is misaligned with structured task geometry and by placing Hamiltonian structure in the predictive map rather than prescribing a Euclidean-isotropic encoder marginal.

Several SSL objectives prevent collapse through low-order or spectral statistics, including Barlow Twins, VICReg, MCR$^2$, and coding-rate-regularized self-distillation~\cite{Zbontar2021,Bardes2021,Yu2020MCR2,Wu2025SimDINO}. HamJEPA uses related scale and spectral non-degeneracy controls, but couples them to a phase-space JEPA with a symplectic predictor; we do not claim these anti-collapse penalties are novel in isolation.

Hamiltonian Neural Networks, Hamiltonian Generative Networks, and Symplectic ODE-Net impose energy-based or symplectic structure on learned dynamics, typically for temporal or control data~\cite{hnns,Toth2019,Zhong2019}. HamJEPA uses the same geometric bias in a different role: a predictor prior for paired augmentations, not a physics model of observed trajectories.

\section{Setup}
\label{sec:background}

\paragraph{Data, views, and randomness.}
Let $x \sim \mathcal{D}$ be a data sample and let $\tau \sim \mathcal{A}$ be a random augmentation.
A \emph{view} is $v := \tau(x)$.
We sample paired views of the same sample:
\begin{equation}
  v_a = \tau_a(x),\qquad v_b = \tau_b(x).
  \label{eq:views}
\end{equation}

We use two representation parameterizations:

\emph{(A) Euclidean embedding.}
An encoder $f_\theta:\mathcal{X}\to\mathbb{R}^d$ maps a view to an embedding random variable
\begin{equation}
  Z \;:=\; f_\theta(V)\in\mathbb{R}^d,
  \label{eq:Z-rv}
\end{equation}
where $V$ denotes a generic random view.
Assume finite second moments. Let $\mu_\theta:=\mathbb{E}[Z]$ and $\Sigma_\theta:=\mathrm{Cov}(Z)\succeq 0$.
We write centered embeddings as $\widetilde Z := Z-\mu_\theta$ so $\mathbb{E}[\widetilde Z]=0$ and $\mathrm{Cov}(\widetilde Z)=\Sigma_\theta$.

\emph{(B) Phase-space state (HamJEPA).}
A phase-space encoder $E_\theta:\mathcal{X}\to\mathbb{R}^{2d_0}$ outputs a state
\begin{equation}
  S := E_\theta(V) = (Q,P),\qquad Q,P\in\mathbb{R}^{d_0}.
  \label{eq:S-rv}
\end{equation}
We interpret $Q$ as a \emph{content} coordinate and $P$ as an auxiliary \emph{momentum} coordinate used to model structured view-to-view transitions.
We write projections $\pi_q(Q,P)=Q$ and $\pi_p(Q,P)=P$.

\paragraph{Batch (row) convention.}
For a minibatch of size $n$, we stack embeddings/states as row vectors:
$\mathbf{Z}\in\mathbb{R}^{n\times d}$ has rows $z_i^\top$, and
$\mathbf{S}\in\mathbb{R}^{n\times 2d_0}$ has rows $s_i^\top=(q_i^\top,p_i^\top)$.
We center batches by subtracting the row-mean, e.g.
$\widetilde{\mathbf{Z}}=\mathbf{Z}-\frac{1}{n}\mathbf{1}\mathbf{1}^\top\mathbf{Z}$.

\paragraph{Two Hamiltonians (to avoid notation collisions).}
Section~\ref{sec:price-of-isotropy-theory} uses an SPD operator $H\succ 0$ to specify a \emph{geometry} on embedding space.
HamJEPA uses a \emph{scalar energy} $\mathcal H_\phi(q,p)$ generating a symplectic map on phase space.
We reserve $H$ for the SPD operator and $\mathcal H_\phi$ for the scalar energy.

\section{A Theory of Marginal Geometry and Predictive Coupling in JEPAs}
\label{sec:price-of-isotropy-theory}

This section is pure theory: given a structured notion of task complexity and an energy budget, what covariance/distribution is canonical, and what is the cost of forcing Euclidean isotropy?
Throughout this section, fix an SPD operator $H\succ0$, which defines the structured embedding geometry through the quadratic energy $z^\top H z$; Hamiltonian-isotropic covariance in this geometry has the form $\alpha H^{-1}$.
We work with the centered embedding $\widetilde Z := Z-\mathbb E[Z]$ and write $\Sigma := \mathrm{Cov}(\widetilde Z)$.

\subsection{Setup: structured tasks and a Hamiltonian budget}
\label{sec:price-setup}
We model downstream \emph{linear} tasks by $y=w^\top \widetilde Z$ and constrain task complexity by the $H^{-1}$-unit ball
\begin{equation}
  \mathcal{W}_H := \{w\in\mathbb{R}^d:\ w^\top H^{-1}w \le 1\}.
  \label{eq:WH-theory}
\end{equation}
We pair this with a representation energy budget in the $H$-geometry:
\begin{equation}
  \mathbb{E}[\widetilde Z^\top H\widetilde Z] = \mathrm{tr}(H\Sigma) = c,
  \qquad c>0.
  \label{eq:H-budget-theory}
\end{equation}
Given $\Sigma\succeq 0$, define the worst-case task variance functional
\begin{equation}
  V(\Sigma) := \sup_{w\in\mathcal{W}_H} w^\top \Sigma w.
  \label{eq:Vdef-theory}
\end{equation}
The next lemma gives a convenient spectral form.

\begin{lemma}[Spectral form of $V(\Sigma)$]
\label{lem:spectralV}
For any $\Sigma\succeq 0$,
\begin{equation}
  V(\Sigma) = \lambda_{\max}\!\big(H^{1/2}\Sigma H^{1/2}\big).
  \label{eq:V-spectral-theory}
\end{equation}
Moreover, if $u_{\max}$ is a unit top eigenvector of $H^{1/2}\Sigma H^{1/2}$, then
$w_{\max}:=H^{1/2}u_{\max}$ attains the supremum in \eqref{eq:Vdef-theory}.
\end{lemma}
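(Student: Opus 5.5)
The plan is a change of variables that turns the constraint set $\mathcal{W}_H$ into the Euclidean unit ball, followed by the Rayleigh--Ritz characterization of the top eigenvalue. Since $H\succ0$, it has a unique SPD square root $H^{1/2}$, which is invertible with inverse $H^{-1/2}$. I will set $u := H^{-1/2}w$, equivalently $w = H^{1/2}u$. This map is a linear bijection of $\mathbb{R}^d$. Under it,
\[
w^\top H^{-1} w = u^\top H^{1/2}H^{-1}H^{1/2}u = \|u\|_2^2,
\]
so $w\in\mathcal{W}_H$ if and only if $\|u\|_2\le 1$. The objective becomes $w^\top\Sigma w = u^\top H^{1/2}\Sigma H^{1/2}u$. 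Hence
\[
V(\Sigma)=\sup_{\|u\|_2\le 1} u^\top M u, \qquad M:=H^{1/2}\Sigma H^{1/2}.
\]

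Next I will note that $M$ is symmetric PSD, since it is a congruence of $\Sigma\succeq0$. For a PSD matrix, the supremum over the closed unit ball equals the supremum over the unit sphere. Indeed, for $0<\|u\|\le1$,
\[
u^\top M u = \|u\|^2\,\hat u^\top M\hat u \le \hat u^\top M \hat u, \qquad \hat u = u/\|u\|,
\]
and $u=0$ gives the value $0$. Nonnegativity of $M$ is what makes this scaling step valid. By the spectral theorem (Rayleigh--Ritz), $\sup_{\|u\|=1}u^\top M u=\lambda_{\max}(M)$. This supremum is attained at any unit top eigenvector $u_{\max}$, because $u_{\max}^\top M u_{\max}=\lambda_{\max}(M)\|u_{\max}\|^2=\lambda_{\max}(M)$. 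This proves \eqref{eq:V-spectral-theory}.

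For attainment, I will map back: $w_{\max}:=H^{1/2}u_{\max}$ satisfies $w_{\max}^\top H^{-1}w_{\max}=\|u_{\max}\|^2=1$, so $w_{\max}\in\mathcal{W}_H$. It also satisfies $w_{\max}^\top\Sigma w_{\max}=u_{\max}^\top M u_{\max}=\lambda_{\max}(M)=V(\Sigma)$.

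There is no real obstacle here. The only points needing care are checking that $H^{1/2}H^{-1}H^{1/2}=I$, which follows because all three matrices are functions of $H$ and commute, and justifying the reduction from the ball to the sphere via $M\succeq0$.
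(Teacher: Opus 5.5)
Your proposal is correct and follows the paper's proof essentially verbatim: the substitution $u=H^{-1/2}w$ turns $\mathcal{W}_H$ into the Euclidean unit ball, the Rayleigh quotient identifies the supremum with $\lambda_{\max}(H^{1/2}\Sigma H^{1/2})$, and $w_{\max}=H^{1/2}u_{\max}$ gives attainment. Your explicit reduction from the closed ball to the unit sphere, using $H^{1/2}\Sigma H^{1/2}\succeq 0$, is a small step the paper leaves implicit, and you handle it correctly.
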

\noindent\emph{Proof.} See Appendix~\ref{app:proof-lem-spectralV}. \hfill $\square$

\subsection{Minimax covariance under a Hamiltonian budget}
\label{sec:price-minimax}
We ask for the covariance that minimizes the worst-case task variance \eqref{eq:Vdef-theory} under the $H$-budget \eqref{eq:H-budget-theory}.

\begin{theorem}[Minimax covariance under a Hamiltonian budget]
\label{thm:minimax}
Fix $H\succ 0$ and $c>0$. Consider
\begin{equation}
  \min_{\Sigma\succeq 0:\ \mathrm{tr}(H\Sigma)=c}\ V(\Sigma).
  \label{eq:minimax-prob-theory}
\end{equation}
The unique minimizer is
\begin{equation}
  \Sigma^\star = \frac{c}{d}\,H^{-1},
  \label{eq:sigma-star-theory}
\end{equation}
and the optimal value is
\begin{equation}
  \min_{\Sigma\succeq 0:\ \mathrm{tr}(H\Sigma)=c}\ V(\Sigma) = \frac{c}{d}.
  \label{eq:minimax-value-theory}
\end{equation}
Equivalently, $H^{1/2}\Sigma^\star H^{1/2}=\frac{c}{d}I$.
\end{theorem}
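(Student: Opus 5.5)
The approach is a whitening change of variables followed by the trace--eigenvalue inequality. Set $M := H^{1/2}\Sigma H^{1/2}$. Since $H\succ0$, the map $\Sigma\mapsto M$ is a bijection of the PSD cone onto itself, with inverse $\Sigma = H^{-1/2}MH^{-1/2}$. It also preserves the constraint, because
\[
\mathrm{tr}(H\Sigma)=\mathrm{tr}\big(H^{1/2}\Sigma H^{1/2}\big)=\mathrm{tr}(M)
\]
by cyclicity of the trace. By Lemma~\ref{lem:spectralV}, the objective becomes $V(\Sigma)=\lambda_{\max}(M)$. So \eqref{eq:minimax-prob-theory} is equivalent to
\[
\min\{\lambda_{\max}(M) : M\succeq0,\ \mathrm{tr}(M)=c\}.
\]

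\textbf{Lower bound.} For any feasible $M$ with eigenvalues $\lambda_1\ge\dots\ge\lambda_d\ge0$, we have $c=\mathrm{tr}(M)=\sum_i\lambda_i\le d\,\lambda_1$. Hence $\lambda_{\max}(M)\ge c/d$.

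\textbf{Attainment.} Take $M=\frac{c}{d}I$. It is feasible and has $\lambda_{\max}(M)=c/d$. Pulling back gives $\Sigma^\star=H^{-1/2}\frac{c}{d}IH^{-1/2}=\frac{c}{d}H^{-1}$. This establishes both \eqref{eq:sigma-star-theory} as a minimizer and the value \eqref{eq:minimax-value-theory}.

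\textbf{Uniqueness.} This is the only step needing care, and it is handled by the equality case of the lower bound. If $\lambda_{\max}(M)=c/d$, then $\sum_i\lambda_i=c=d\lambda_1$, so $\sum_i(\lambda_1-\lambda_i)=0$. Each term is nonnegative, so all eigenvalues equal $c/d$. A symmetric matrix with a single repeated eigenvalue is that scalar times the identity (diagonalize orthogonally: $M=U(\tfrac{c}{d}I)U^\top=\tfrac{c}{d}I$). Hence $M=\frac{c}{d}I$. Bijectivity of the change of variables then gives $\Sigma=\frac{c}{d}H^{-1}$ uniquely, and the restatement $H^{1/2}\Sigma^\star H^{1/2}=\frac{c}{d}I$ holds by construction.

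I expect no real obstacle. Everything rests on Lemma~\ref{lem:spectralV} and on $H^{\pm1/2}$ being well-defined symmetric positive definite matrices, which follows from $H\succ0$.
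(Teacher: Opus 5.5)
Your proposal is correct and follows essentially the same route as the paper's proof. Both whiten via $H^{1/2}\Sigma H^{1/2}$, reduce to minimizing $\lambda_{\max}$ under a trace constraint, and use $\lambda_1\ge c/d$ with equality iff all eigenvalues equal before pulling back; your remarks on the bijectivity of the congruence map and on why equal eigenvalues force the scalar matrix spell out steps the paper leaves implicit.
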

\noindent\emph{Proof.} See Appendix~\ref{app:proof-thm-minimax}. \hfill $\square$

\subsection{Maximum entropy: the canonical distribution}
\label{sec:price-maxent}
The minimax result selects a canonical covariance. A complementary principle selects a canonical \emph{distribution}.

\begin{theorem}[Maximum entropy under a Hamiltonian energy constraint]
\label{thm:maxent}
Let $Z$ be an $\mathbb{R}^d$-valued random vector with a density, $\mathbb{E}[Z]=0$, and finite second moments.
Among all such distributions satisfying
\begin{equation}
  \mathbb{E}[Z^\top H Z] = c,
  \label{eq:maxent-constraint-theory}
\end{equation}
the unique maximizer of differential entropy is the Gaussian
\begin{equation}
  Z \sim \mathcal{N}\!\Big(0,\frac{c}{d}H^{-1}\Big).
  \label{eq:maxent-sol-theory}
\end{equation}
\end{theorem}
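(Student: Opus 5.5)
The plan is to use the standard Gibbs/relative-entropy argument, with the Gaussian $\mathcal N(0,\frac{c}{d}H^{-1})$ as the reference density. Let $Z$ have density $f$ satisfying the constraints, and set
\[
g(z) := (2\pi)^{-d/2}\det(\Sigma^\star)^{-1/2}\exp\!\big(-\tfrac{d}{2c}\, z^\top H z\big), \qquad \Sigma^\star=\tfrac{c}{d}H^{-1},
\]
which is exactly the density of the claimed maximizer. The key step is that $\log g(z)$ is an affine function of the quadratic $z^\top H z$. So $\mathbb E_f[-\log g(Z)]$ depends on $f$ only through $\mathbb E_f[Z^\top H Z]=c$, and it therefore takes the same value for every admissible $f$, including $g$ itself. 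Concretely,
\[
\mathbb E_f[-\log g(Z)] = \tfrac d2\log(2\pi)+\tfrac12\log\det\Sigma^\star+\tfrac{d}{2c}\cdot c = h(g).
\]
Note that $g$ is itself admissible, because $\mathbb E_g[Z^\top HZ]=\mathrm{tr}(H\Sigma^\star)=c$.

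Next I would write
\[
0\le D(f\|g)=\int f\log\tfrac fg=-h(f)+\mathbb E_f[-\log g(Z)]=h(g)-h(f),
\]
which gives $h(f)\le h(g)$. The relative entropy here is finite and well defined: $f$ has finite second moments, so $\mathbb E_f[-\log g]$ is finite. If $h(f)=-\infty$ the inequality is trivial, and otherwise the decomposition above is legitimate. Gibbs' inequality then gives equality iff $f=g$ almost everywhere, which establishes uniqueness.

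The only step needing care is the measure-theoretic justification that $h(f)$ is well defined and that splitting $\int f\log(f/g)$ into two terms is valid. I expect this to be the main obstacle, though a mild one. I would handle it by noting that $\int f(\log f)^+$ can be controlled using the finite second moment together with the Gaussian comparison. Alternatively, I would simply adopt the convention, standard in max-entropy statements, that $h(f)\in[-\infty,\infty)$ exists for densities with finite second moments.

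The mean-zero hypothesis is not actually needed for the bound; it only matches the centered setting of the section. Finally, I would remark on the link to \cref{thm:minimax}: the maximizing distribution's covariance coincides with the minimax $\Sigma^\star$.
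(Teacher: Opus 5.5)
Your argument is correct, but it takes a different route from the paper.

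\textbf{The paper's route.} The paper whitens, $Y=H^{1/2}Z$, and uses the change-of-variables formula for entropy, which contributes $-\tfrac12\log\det H$. It then invokes the standard bound $h(Y)\le\tfrac12\log\big((2\pi e)^d\det\Sigma_Y\big)$, with equality iff $Y$ is Gaussian. Finally it applies AM--GM to the eigenvalues of $\Sigma_Y$ under $\mathrm{tr}(\Sigma_Y)=c$ to force $\Sigma_Y=\tfrac cd I$. That trace identity is where it uses $\mathbb E[Z]=0$.

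\textbf{Your route.} You do everything in one Gibbs inequality, with the candidate maximizer as the reference density. The key observation is that $\log g$ is affine in the constrained statistic $z^\top Hz$, so the cross-entropy is constant on the feasible set. This is exactly the KL argument the paper itself uses later for Theorem~\ref{thm:maxent_phase_space_gibbs}, specialized to $\mathcal H(z)=z^\top Hz$ with $\beta=\tfrac{d}{2c}$.

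\textbf{Comparison.} Your version is more self-contained. The Gaussian bound the paper cites is itself proved by this same argument, and you need neither the entropy change of variables nor AM--GM. It also gives uniqueness from the same line. You are right that the centering hypothesis is superfluous, since the constraint is on a second moment rather than a covariance. The cost is that the maximizer must be guessed in advance. The paper's two-stage route derives the answer and makes its structure visible: Gaussianity at fixed covariance first, then $H$-isotropy $H^{1/2}\Sigma H^{1/2}\propto I$. That second step comes from the same trace constraint $\mathrm{tr}(H^{1/2}\Sigma H^{1/2})=c$ that drives Theorem~\ref{thm:minimax}.

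\textbf{A small correction to your technical remark.} $D(f\|g)$ is well defined in $[0,\infty]$, but it need not be finite. What finite second moments control is $\int f(\log f)^-$, not $\int f(\log f)^+$. The control comes from the pointwise bound $f\log(f/g)\ge f-g$ together with integrability of $f\log g$. This rules out $h(f)=+\infty$ and makes $h(f)\in[-\infty,\infty)$ well defined. The quantity $\int f(\log f)^+$ cannot be controlled and may diverge. That is precisely the case $h(f)=-\infty$, which you already dispose of as trivial.
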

\noindent\emph{Proof.} See Appendix~\ref{app:proof-thm-maxent}. \hfill $\square$

\subsection{Price of Euclidean isotropy}
\label{sec:price-isotropy}
Many practical objectives encourage Euclidean isotropy, i.e., $\Sigma\propto I$ in the original coordinates.
This subsection quantifies the worst-case gap incurred by imposing Euclidean isotropy under the same Hamiltonian budget.

Under the restriction $\Sigma=\sigma^2 I$, the constraint $\mathrm{tr}(H\Sigma)=c$ forces the unique feasible Euclidean-isotropic covariance
\begin{equation}
  \Sigma_{\mathrm{iso}} = \frac{c}{\mathrm{tr}(H)}\, I.
  \label{eq:sigma-iso-theory}
\end{equation}

\begin{theorem}[Price of Euclidean isotropy]
\label{thm:price}
Fix $H\succ 0$ and $c>0$. Let $\Sigma^\star$ be as in Theorem~\ref{thm:minimax} and let $\Sigma_{\mathrm{iso}}$ be as in \eqref{eq:sigma-iso-theory}. Then
\begin{align}
  V(\Sigma_{\mathrm{iso}})
  &= \frac{c}{\mathrm{tr}(H)}\,\lambda_{\max}(H),
  \label{eq:V-iso-theory}\\
  \frac{V(\Sigma_{\mathrm{iso}})}{V(\Sigma^\star)}
  &= \rho(H) := \frac{d\,\lambda_{\max}(H)}{\mathrm{tr}(H)}.
  \label{eq:rho-theory}
\end{align}
Moreover $1\le \rho(H)\le d$, with $\rho(H)=1$ if and only if $H\propto I$.
\end{theorem}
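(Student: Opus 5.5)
The plan is to compute everything directly from Lemma~\ref{lem:spectralV} and Theorem~\ref{thm:minimax}, then reduce the bounds on $\rho(H)$ to elementary eigenvalue inequalities.

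First, substitute $\Sigma_{\mathrm{iso}} = \frac{c}{\mathrm{tr}(H)} I$ into the spectral form \eqref{eq:V-spectral-theory}. Since $H^{1/2} I H^{1/2} = H$, this gives
\[
V(\Sigma_{\mathrm{iso}}) = \frac{c}{\mathrm{tr}(H)}\,\lambda_{\max}(H),
\]
which is \eqref{eq:V-iso-theory}. Feasibility is immediate, because $\mathrm{tr}(H\Sigma_{\mathrm{iso}}) = c$. Theorem~\ref{thm:minimax} gives $V(\Sigma^\star) = c/d$, and dividing the two values yields $\rho(H) = d\,\lambda_{\max}(H)/\mathrm{tr}(H)$. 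This ratio is well defined because $\mathrm{tr}(H) > 0$ for $H \succ 0$.

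Next, for the bounds, write the eigenvalues of $H$ as $0 < \lambda_1 \le \dots \le \lambda_d = \lambda_{\max}(H)$, so that $\mathrm{tr}(H) = \sum_i \lambda_i$.
\begin{itemize}
\item The lower bound $\rho(H) \ge 1$ is equivalent to $\sum_i \lambda_i \le d\,\lambda_d$. This holds because each $\lambda_i \le \lambda_d$.
\item The upper bound $\rho(H) \le d$ is equivalent to $\lambda_d \le \sum_i \lambda_i$. This holds because all eigenvalues are positive; in fact the inequality is strict when $d > 1$.
\item The lower bound could also be read off from Theorem~\ref{thm:minimax}: $\Sigma_{\mathrm{iso}}$ is feasible and $\Sigma^\star$ is the minimizer, so $V(\Sigma_{\mathrm{iso}}) \ge V(\Sigma^\star)$.
\end{itemize}

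For the equality case, note that $\rho(H) = 1$ exactly when $\sum_i (\lambda_d - \lambda_i) = 0$. Each term is nonnegative, so this forces every $\lambda_i = \lambda_d$. By the spectral theorem, that is equivalent to $H = \lambda_d I \propto I$. Conversely, if $H = aI$ with $a > 0$, then $\rho(H) = d a/(d a) = 1$. An alternative route is through uniqueness in Theorem~\ref{thm:minimax}: $\rho(H) = 1$ forces $\Sigma_{\mathrm{iso}} = \Sigma^\star$, hence $I \propto H^{-1}$. The eigenvalue argument is more self-contained, so I would use it.

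There is no real obstacle here. The only points needing care are checking feasibility of $\Sigma_{\mathrm{iso}}$ and handling the "if and only if" cleanly via the spectral decomposition.
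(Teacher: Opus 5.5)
Your proposal is correct and takes the same route as the paper. Both proofs substitute $\Sigma_{\mathrm{iso}}$ into the spectral form of Lemma~\ref{lem:spectralV}, divide by $V(\Sigma^\star)=c/d$ from Theorem~\ref{thm:minimax}, and obtain the bounds and the equality case from $\mathrm{tr}(H)/d \le \lambda_{\max}(H) \le \mathrm{tr}(H)$, with equality on the left exactly when all eigenvalues of $H$ coincide. Your side remarks are also correct: the upper bound is strict for $d>1$, and the equality case can alternatively be derived from uniqueness of the minimizer.
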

\noindent\emph{Proof.} See Appendix~\ref{app:proof-thm-price}. \hfill $\square$

\begin{proposition}[Metric isotropy equivalence]
\label{prop:metric-isotropy-equivalence}
Let $H\succ 0$ and let $Z\in\mathbb{R}^d$ be a random vector with finite second moments and covariance
$\Sigma := \mathrm{Cov}(Z)$.
Define the $H$-whitened representation $Z' := H^{1/2}Z$.
Then
\[
\mathrm{Cov}(Z') \;=\; H^{1/2}\Sigma H^{1/2}.
\]
In particular, for any $\alpha>0$,
\[
\Sigma = \alpha H^{-1}
\quad\Longleftrightarrow\quad
\mathrm{Cov}(Z') = \alpha I.
\]
\end{proposition}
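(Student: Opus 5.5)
The argument has two parts: first compute the covariance of $Z'$, then read off the equivalence using invertibility of $H^{1/2}$.

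\textbf{Covariance transformation.} Let $H^{1/2}$ be the unique symmetric positive-definite square root of $H$, obtained from the spectral decomposition $H=U\Lambda U^\top$ as $H^{1/2}=U\Lambda^{1/2}U^\top$. It is symmetric and invertible, with inverse $H^{-1/2}$, and it satisfies $H^{1/2}H^{-1}H^{1/2}=I$.

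Since $Z$ has finite second moments, so does $Z'=H^{1/2}Z$, and linearity of expectation gives $\mathbb{E}[Z']=H^{1/2}\mathbb{E}[Z]$. Write $\widetilde Z=Z-\mathbb{E}[Z]$. Then
\[
\mathrm{Cov}(Z')=\mathbb{E}\big[H^{1/2}\widetilde Z\,\widetilde Z^\top (H^{1/2})^\top\big]=H^{1/2}\Sigma H^{1/2},
\]
where the last step uses $(H^{1/2})^\top=H^{1/2}$.

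\textbf{The equivalence.}

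For the forward direction, suppose $\Sigma=\alpha H^{-1}$. Substituting gives $\mathrm{Cov}(Z')=\alpha H^{1/2}H^{-1}H^{1/2}=\alpha I$.

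For the converse, suppose $H^{1/2}\Sigma H^{1/2}=\alpha I$. Multiply on the left and on the right by $H^{-1/2}$. This gives $\Sigma=\alpha H^{-1/2}H^{-1/2}=\alpha H^{-1}$.

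The only point requiring care is to use the symmetric square root, so that the transpose in the covariance formula collapses. The condition $\alpha>0$ plays no role in the algebra; it only ensures that both sides describe nondegenerate covariances.
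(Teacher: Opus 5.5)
Your proposal is correct and follows the paper's proof essentially step for step. You compute $\mathrm{Cov}(Z')$ by linearity using the symmetric square root, check the forward direction by substitution, and get the converse by conjugating with $H^{-1/2}$.
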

\noindent\emph{Proof.} See Appendix~\ref{app:proof_metric-isotropy-equivalence}. \hfill $\square$

\subsection{A quadratic Hamiltonian lift of the oracle structured Gaussian}
\label{sec:quadratic_hamiltonian_lift}

The structured Gaussian selected by Theorems~\ref{thm:minimax} and~\ref{thm:maxent}
admits a natural lift to a Gibbs law on phase space.

\begin{proposition}[Quadratic Hamiltonian lift of the oracle covariance]
\label{prop:quadratic_hamiltonian_lift}
Fix \(H\succ 0\) and \(c>0\), and define the quadratic Hamiltonian
\begin{equation}
\mathcal H_H(q,p)
:=
\tfrac12 q^\top H q+\tfrac12 \|p\|_2^2,
\qquad q,p\in\mathbb R^d.
\label{eq:quadratic_hamiltonian_lift}
\end{equation}
Consider the Gibbs density on phase space
\begin{equation}
\pi_H(q,p)
\propto
\exp\!\Big(-\frac{d}{c}\,\mathcal H_H(q,p)\Big).
\label{eq:pi_H_lift}
\end{equation}
Then \(q\) and \(p\) are independent under \(\pi_H\), with
\begin{equation}
q\sim \mathcal N\!\Big(0,\frac{c}{d}H^{-1}\Big),
\qquad
p\sim \mathcal N\!\Big(0,\frac{c}{d}I\Big).
\label{eq:q_p_marginals_lift}
\end{equation}
Moreover,
\begin{equation}
\begin{split}
\mathbb E_{\pi_H}[q^\top H q]=c, \qquad \mathbb E_{\pi_H}[\|p\|_2^2]=c, \qquad \mathbb E_{\pi_H}[\mathcal H_H(q,p)] = c.
\end{split}
\label{eq:lift_expected_energy}
\end{equation}
In particular, the \(q\)-marginal of \(\pi_H\) is exactly the minimax-optimal and maximum-entropy law from
Theorems~\ref{thm:minimax} and~\ref{thm:maxent}.
\end{proposition}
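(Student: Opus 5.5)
The plan is to factor the Gibbs density and then identify each factor as a Gaussian. Write $\beta := d/c>0$. Since $\mathcal H_H$ is a sum of a function of $q$ alone and a function of $p$ alone, the unnormalized density splits as
\[
\exp\!\big(-\beta\,\mathcal H_H(q,p)\big)=\exp\!\big(-\tfrac{\beta}{2}q^\top H q\big)\cdot\exp\!\big(-\tfrac{\beta}{2}\|p\|_2^2\big).
\]
Each factor is integrable on $\mathbb R^d$ because $H\succ0$ and $\beta>0$. So the normalizing constant is a product, $\pi_H$ is the product of its two normalized factors, and $q$ and $p$ are independent.

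Next I identify each factor. A density proportional to $\exp(-\tfrac12 x^\top A x)$ with $A\succ0$ is exactly $\mathcal N(0,A^{-1})$. For the $q$-factor, take $A=\beta H$; then $A^{-1}=\tfrac{c}{d}H^{-1}$, so $q\sim\mathcal N\!\big(0,\tfrac cd H^{-1}\big)$. For the $p$-factor, take $A=\beta I$; then $p\sim\mathcal N\!\big(0,\tfrac cd I\big)$. This proves \eqref{eq:q_p_marginals_lift}.

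For the energies I use $\mathbb E[x^\top M x]=\mathrm{tr}(M\Sigma)$ for a centered vector $x$ with covariance $\Sigma$. This gives
\[
\mathbb E[q^\top Hq]=\mathrm{tr}\!\big(H\cdot\tfrac cd H^{-1}\big)=c,\qquad \mathbb E[\|p\|^2]=\mathrm{tr}\!\big(\tfrac cd I\big)=c.
\]
Linearity then gives $\mathbb E[\mathcal H_H]=\tfrac12 c+\tfrac12 c=c$.

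The final claim follows by comparing the $q$-marginal with $\Sigma^\star$ from Theorem~\ref{thm:minimax} and with the Gaussian in Theorem~\ref{thm:maxent}. The $q$-marginal is centered with covariance $\tfrac cd H^{-1}=\Sigma^\star$, and it satisfies $\mathbb E[q^\top Hq]=c$. It is therefore exactly the minimax covariance and exactly the maximum-entropy law under that constraint.

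There is no real obstacle; the only points needing care are the integrability of the two factors, which justifies the product normalization, and keeping the factor $\tfrac12$ correctly matched to the covariance inverse $(\beta H)^{-1}$.
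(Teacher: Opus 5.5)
Your proposal is correct and follows the same route as the paper: factor the Gibbs density into a $q$-part and a $p$-part, identify them as Gaussians with precisions $(d/c)H$ and $(d/c)I$, and compute the energies via $\mathbb E[x^\top M x]=\mathrm{tr}(M\Sigma)$. The only cosmetic difference is that the paper evaluates the partition function $Z_H=(2\pi c/d)^{d}\det(H)^{-1/2}$ explicitly, whereas you only note that each factor is integrable, which is enough for the argument.
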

\noindent\emph{Proof.} See Appendix~\ref{app:proof-prop-quadratic-hamiltonian-lift}. \hfill $\square$

\begin{proposition}[The oracle family spans the SPD cone]
\label{prop:oracle_family_spans_spd}
Fix \(c>0\). As \(H\) ranges over SPD operators, the family
\begin{equation}
\Sigma^\star(H)=\frac{c}{d}H^{-1}
\label{eq:oracle_family_spd}
\end{equation}
ranges over the entire SPD cone. Equivalently, for every \(\Sigma\succ 0\), choosing
\begin{equation}
H=\frac{c}{d}\Sigma^{-1}
\label{eq:H_from_Sigma}
\end{equation}
yields \(\Sigma^\star(H)=\Sigma\).
\end{proposition}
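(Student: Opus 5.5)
The proof is a direct verification, split into two inclusions. First, I will check that for every \(H\succ 0\) the matrix \(\Sigma^\star(H)=\frac{c}{d}H^{-1}\) is itself SPD, so the family in \eqref{eq:oracle_family_spd} lands inside the SPD cone. This follows because \(H^{-1}\) is symmetric with eigenvalues \(1/\lambda_i(H)>0\), and scaling by \(c/d>0\) preserves this.

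For the reverse inclusion (surjectivity), fix an arbitrary \(\Sigma\succ0\). I will take \(H\) as in \eqref{eq:H_from_Sigma}, \(H=\frac{c}{d}\Sigma^{-1}\). By the same spectral argument as above, this \(H\) is SPD, so it is an admissible geometry. Then
\[
\Sigma^\star(H)=\tfrac{c}{d}\big(\tfrac{c}{d}\Sigma^{-1}\big)^{-1}=\tfrac{c}{d}\cdot\tfrac{d}{c}\,\Sigma=\Sigma,
\]
which is exactly the claimed identity. This also shows the map \(H\mapsto\Sigma^\star(H)\) is a bijection of the SPD cone onto itself, with inverse \(\Sigma\mapsto\frac{c}{d}\Sigma^{-1}\).

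As an optional consistency remark, I would note that with this choice of \(H\) the budget holds automatically: \(\mathrm{tr}(H\Sigma)=\frac{c}{d}\mathrm{tr}(I)=c\). So \(\Sigma\) is genuinely the minimax and maximum-entropy covariance for \(H\) under Theorems~\ref{thm:minimax} and~\ref{thm:maxent}, not merely an algebraic preimage.

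There is no real obstacle here. The only point requiring any care is confirming that inversion and positive scaling preserve symmetric positive definiteness, which I will settle by diagonalizing in an orthonormal eigenbasis.
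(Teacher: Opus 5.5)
Your proof is correct and matches the paper's argument: both choose $H=\frac{c}{d}\Sigma^{-1}$, observe that it is SPD, and compute $\Sigma^\star(H)=\Sigma$ directly. Your explicit check that $\Sigma^\star(H)\succ 0$ for every $H\succ 0$ (the forward inclusion) and your budget check $\mathrm{tr}(H\Sigma)=c$ are correct additions that the paper leaves implicit.
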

\noindent\emph{Proof.} See Appendix~\ref{app:proof-prop-oracle-family-spans-spd}. \hfill $\square$

\paragraph{From the oracle covariance to a practical unknown-\(H\) relaxation.}
Proposition~\ref{prop:quadratic_hamiltonian_lift} shows that when \(H\) is known, the oracle covariance
\(\Sigma^\star=\frac{c}{d}H^{-1}\) is exactly the \(q\)-marginal of a quadratic Hamiltonian Gibbs model on phase space.
In that quadratic oracle case, the induced dynamics are
\begin{equation}
\dot q = p,\qquad \dot p = -Hq,
\label{eq:oracle_quadratic_dynamics}
\end{equation}
so a Hamiltonian predictor is not an unrelated construction, but a natural dynamical realization of the same structured geometry.

This does \emph{not} mean that the practical training mechanism is uniquely determined by Section~\ref{sec:price-of-isotropy-theory}.
Rather, the theory should be read as an oracle statement: when \(H\) is known, the correct geometry is Hamiltonian-anisotropic rather than Euclidean-isotropic.
When \(H\) is unknown, Proposition~\ref{prop:oracle_family_spans_spd} shows that the oracle family
\(\{\Sigma^\star(H):H\succ 0\}\) spans the entire SPD cone, so the theory no longer singles out one fixed covariance target.

\paragraph{Why not learn another fixed marginal target?}
The previous results might suggest replacing Euclidean isotropy by a better fixed
target covariance. The next result shows why this is brittle when downstream
geometry is unknown.

\begin{theorem}[No universal fixed marginal target]
\label{thm:no_universal_target}
Fix any SPD target shape $M\succ0$ independent of the downstream geometry.
Under geometry $H\succ0$, the budget-feasible covariance with shape $M$ is
\[
  \Sigma_M(H) := \frac{c}{\mathrm{tr}(HM)}M .
\]
Its regret relative to the oracle covariance
$\Sigma^\star(H)=(c/d)H^{-1}$ is
\[
  \frac{V_H(\Sigma_M(H))}{V_H(\Sigma^\star(H))}
  =
  \frac{d\,\lambda_{\max}(H^{1/2}MH^{1/2})}{\mathrm{tr}(HM)} .
\]
This quantity equals $1$ iff $M\propto H^{-1}$, and for every fixed
$M\succ0$,
\[
  \sup_{H\succ0}
  \frac{V_H(\Sigma_M(H))}{V_H(\Sigma^\star(H))}
  = d .
\]
Thus no geometry-independent fixed covariance shape is canonical for all
structured downstream geometries.
\end{theorem}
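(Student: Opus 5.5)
The plan is to reduce everything to the spectrum of the single SPD matrix $A := H^{1/2} M H^{1/2}$, using Lemma~\ref{lem:spectralV} and Theorem~\ref{thm:minimax}.

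\emph{Step 1: feasibility and the regret formula.} Feasibility of $\Sigma_M(H)$ is immediate, since $\mathrm{tr}(H\Sigma_M(H)) = \frac{c}{\mathrm{tr}(HM)}\mathrm{tr}(HM) = c$, and $\mathrm{tr}(HM) = \mathrm{tr}(A) > 0$. Writing $V_H$ for the functional \eqref{eq:Vdef-theory} under geometry $H$, Lemma~\ref{lem:spectralV} gives
\[
V_H(\Sigma_M(H)) = \frac{c}{\mathrm{tr}(HM)}\,\lambda_{\max}(A).
\]
Theorem~\ref{thm:minimax} gives $V_H(\Sigma^\star(H)) = c/d$. Dividing yields the stated regret $d\,\lambda_{\max}(A)/\mathrm{tr}(A)$.

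\emph{Step 2: when the regret equals $1$.} Let $\lambda_1,\dots,\lambda_d>0$ be the eigenvalues of $A$. Then $d\lambda_{\max} \ge \sum_i \lambda_i$, so the regret lies in $[1,d]$. Equality $d\lambda_{\max} = \mathrm{tr}(A)$ holds iff all eigenvalues coincide. Since $A$ is symmetric, this means $A = \alpha I$, i.e.\ $M = \alpha H^{-1}$. Conversely, $M \propto H^{-1}$ gives $A \propto I$ and hence regret $1$. The strict upper bound $\lambda_{\max} < \mathrm{tr}(A)$ also holds, since the other eigenvalues are positive. Consequently the supremum is not attained, and it must be shown to be approached.

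\emph{Step 3: the supremum equals $d$.} This step needs a construction. Choose an explicit family adapted to $M$:
\[
H_\varepsilon := M^{-1/2}\bigl(u u^\top + \varepsilon (I - u u^\top)\bigr) M^{-1/2},
\]
for a unit vector $u$ and $\varepsilon \downarrow 0$. Then $HM$ is similar to $M^{1/2} H M^{1/2} = D_\varepsilon := uu^\top + \varepsilon(I-uu^\top)$. The remaining question is whether $A = H^{1/2}MH^{1/2}$ has the same spectrum as $M^{1/2}HM^{1/2}$. It does: both are similar to $HM$, because $H^{1/2}MH^{1/2} = H^{1/2}(MH)H^{-1/2}$ and $M^{1/2}HM^{1/2} = M^{-1/2}(MH)M^{1/2}$. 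Hence $\lambda_{\max}(A) = 1$ (for $\varepsilon < 1$) and $\mathrm{tr}(A) = 1 + (d-1)\varepsilon$. The regret is therefore
\[
\frac{d}{1+(d-1)\varepsilon} \longrightarrow d .
\]
Combined with the upper bound $d$, this shows the supremum is exactly $d$.

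The only delicate points are this similarity argument, which lets us work with $M^{1/2}HM^{1/2}$, and the observation that $d$ is approached but never attained. The concluding sentence of the theorem then follows: no fixed $M$ has bounded-away-from-$d$ regret uniformly over $H$.
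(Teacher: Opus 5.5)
Your proof is correct. Steps~1--2 match the paper's argument: feasibility, Lemma~\ref{lem:spectralV}, $V_H(\Sigma^\star(H))=c/d$, and the eigenvalue bounds for $A=H^{1/2}MH^{1/2}$.

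The routes differ only in the sharpness step. The paper prescribes $A$ itself. It solves $H_\delta^{1/2}MH_\delta^{1/2}=A_\delta=\mathrm{diag}(1,\delta,\dots,\delta)$ exactly, setting $B_\delta=M^{-1/2}(M^{1/2}A_\delta M^{1/2})^{1/2}M^{-1/2}$ and $H_\delta=B_\delta^2$. This needs the square root of a non-diagonal SPD matrix and the check that $H_\delta^{1/2}=B_\delta$.

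You instead observe that the regret depends on $H$ only through the spectrum of $HM$. A linear congruence $H_\varepsilon=M^{-1/2}D_\varepsilon M^{-1/2}$ then suffices, and the similarity to $MH$ transfers the spectrum to $H^{1/2}MH^{1/2}$. Equivalently, with $X=H^{1/2}M^{1/2}$, the matrices $XX^\top$ and $X^\top X$ have the same spectrum.

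Your route is more elementary and shows directly why only the spectrum of $HM$ matters. The paper's route proves the slightly stronger fact that every SPD matrix is realized exactly as $H^{1/2}MH^{1/2}$ for some $H\succ0$, which the theorem does not need.

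One small correction: your claim that the supremum is never attained fails for $d=1$. There are no ``other eigenvalues'' in that case, and the ratio is identically $1=d$; the paper treats $d=1$ separately. Your Step~3 formula still gives the value $d$ when $d=1$, so the theorem's conclusion is unaffected.
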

\noindent\emph{Proof.} See Appendix~\ref{app:proof_no_universal_target}. \hfill $\square$

\begin{proposition}[Oracle marginals do not identify the predictive coupling]
\label{prop:marginals_do_not_identify_coupling}
Let $Z_a,Z_b\in\mathbb R^d$ be centered jointly Gaussian random vectors with
$\mathrm{Cov}(Z_a)=\mathrm{Cov}(Z_b)=\Sigma\succ0$ and cross-covariance
$\mathrm{Cov}(Z_b,Z_a)=C$. Then
\[
  \mathbb E[Z_b\mid Z_a] = C\Sigma^{-1}Z_a .
\]
Therefore, the same oracle marginal covariance $\Sigma$ admits a continuum of
different predictive maps as the admissible cross-covariance $C$ varies.
Marginal matching alone cannot determine the JEPA view-to-view prediction problem.
\end{proposition}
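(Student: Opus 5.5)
The plan has two parts: derive the conditional-mean formula from the standard Gaussian conditioning identity, then exhibit an explicit continuum of admissible cross-covariances $C$ that yield distinct predictive maps.

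\emph{Step 1: the conditional mean.} Stack $(Z_b,Z_a)$ into a centered Gaussian vector in $\mathbb R^{2d}$ with joint covariance
\[
\Gamma_C=\begin{pmatrix}\Sigma & C\\ C^\top & \Sigma\end{pmatrix}.
\]
Set $R:=Z_b-C\Sigma^{-1}Z_a$. Then $\mathrm{Cov}(R,Z_a)=C-C\Sigma^{-1}\Sigma=0$. Since $(R,Z_a)$ is a linear image of a jointly Gaussian vector, it is jointly Gaussian, so zero cross-covariance gives independence of $R$ and $Z_a$. Hence
\[
\mathbb E[Z_b\mid Z_a]=C\Sigma^{-1}Z_a+\mathbb E[R]=C\Sigma^{-1}Z_a .
\]
This is routine; the only care needed is invoking joint Gaussianity, not mere uncorrelatedness.

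\emph{Step 2: admissible cross-covariances.} Call $C$ admissible if $\Gamma_C\succeq0$. Because $\Sigma\succ0$, the Schur complement criterion says this holds iff
\[
\Sigma-C^\top\Sigma^{-1}C\succeq0 .
\]
The cleanest parametrization writes $C=\Sigma^{1/2}K\Sigma^{1/2}$. The condition then becomes
\[
\Sigma^{1/2}(I-K^\top K)\Sigma^{1/2}\succeq0,
\]
which is equivalent to $\|K\|_{\mathrm{op}}\le1$. So every contraction $K$ gives an admissible joint Gaussian with the same marginals $\mathrm{Cov}(Z_a)=\mathrm{Cov}(Z_b)=\Sigma$.

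\emph{Step 3: injectivity of the predictive map.} The predictive map is
\[
C\Sigma^{-1}=\Sigma^{1/2}K\Sigma^{-1/2}.
\]
It is injective in $K$, since conjugation by the invertible matrix $\Sigma^{1/2}$ can be undone.

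\emph{Step 4: the continuum.} Take the one-parameter family $K=tI$ with $t\in[-1,1]$. This gives $C=t\Sigma$ and the predictor $\mathbb E[Z_b\mid Z_a]=tZ_a$. The endpoint cases are concrete:
\begin{itemize}
\item $t=0$: independent views, predictor identically zero;
\item $t=1$: $Z_b=Z_a$ almost surely;
\item intermediate $t$: genuinely distinct linear predictors.
\end{itemize}
All of these share the oracle marginal $\Sigma$, for instance $\Sigma=(c/d)H^{-1}$ from Theorems~\ref{thm:minimax}--\ref{thm:maxent}. Orthogonal $K$ supplies even richer families, such as rotations. So the marginal law, and in particular any marginal regularizer like SIGReg or its $H$-weighted analogue, is constant on this family while the JEPA prediction target varies.

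The main obstacle, mild as it is, lies in Step 2: checking admissibility correctly, so that the constructed joint law is a valid Gaussian rather than just a formal pair of marginals. The reparametrization $C=\Sigma^{1/2}K\Sigma^{1/2}$ reduces this to the contraction condition and settles it.
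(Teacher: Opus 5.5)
Your proof is correct. Step 1 is the paper's residual argument verbatim, and your Step 4 family $C=t\Sigma$ is the same one-parameter family the paper uses. The real difference is in how you certify that a coupling is admissible.

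The paper checks $\Gamma_t\succ0$ directly for $t\in(-1,1)$ through the identity $u^\top\Sigma u+2t\,u^\top\Sigma v+v^\top\Sigma v=\tfrac{1+t}{2}(u+v)^\top\Sigma(u+v)+\tfrac{1-t}{2}(u-v)^\top\Sigma(u-v)$. That argument is fully elementary and self-contained, and it yields a strictly positive definite joint covariance, so every coupling in the family has a joint density.

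You instead use the Schur complement together with the reparametrization $C=\Sigma^{1/2}K\Sigma^{1/2}$. This buys more: it characterizes \emph{all} couplings compatible with the marginal $\Sigma$. The achievable Bayes predictors are exactly $\Sigma^{1/2}K\Sigma^{-1/2}$ with $\|K\|_{\mathrm{op}}\le1$, a full $d^2$-dimensional set rather than a one-parameter curve. That makes the non-identifiability claim considerably sharper, at the cost of relying on the Schur-complement lemma.

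One point to flag concerns your endpoints $t=\pm1$ and orthogonal $K$. These give a singular $\Gamma_C$ (for instance $Z_b=Z_a$ almost surely). That is still a legitimate jointly Gaussian law, and your Step 1 formula holds there because it only uses $\Sigma\succ0$. If you want nondegenerate joint laws, as the paper's open interval guarantees, restrict to $\|K\|_{\mathrm{op}}<1$, which is equivalent to $\Gamma_C\succ0$.
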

\noindent\emph{Proof.} See Appendix~\ref{app:proof_marginals_do_not_identify_coupling}. \hfill $\square$

Together, Theorem~\ref{thm:no_universal_target} and
Proposition~\ref{prop:marginals_do_not_identify_coupling} show that the
unknown-geometry JEPA problem should not be reduced to choosing a single fixed
encoder marginal. The structural bias must instead enter the view-to-view
coupling. HamJEPA instantiates this principle by using a Hamiltonian phase-space
coupling: the quadratic known-$H$ case recovers the oracle $q$-marginal, while
the practical unknown-$H$ method uses a learned symplectic predictor and
geometry-agnostic non-collapse constraints.

\paragraph{Takeaway.}
Section~\ref{sec:price-of-isotropy-theory} does not merely say that Euclidean isotropy can be suboptimal. It shows that, under unknown downstream geometry, fixed marginal matching is the wrong object: no geometry-independent covariance target is universally canonical, and even an oracle marginal does not determine the JEPA predictive coupling. The structural bias must therefore enter the view-to-view map.

\section{HamJEPA: Symplectic Predictive Structure Without Isotropic Marginals}
\label{sec:hamjepa}

Section~\ref{sec:price-of-isotropy-theory} shows that fixed marginal matching is brittle under unknown structured geometry and that marginals alone do not determine JEPA prediction. HamJEPA therefore places the structural bias in the view-to-view coupling: a phase-space encoder is trained with a symplectic, volume-preserving predictor, while encoder-side scale and spectral constraints prevent collapse without enforcing isotropic Gaussian marginals.
A visual overview of HamJEPA is provided in Appendix~\ref{app:visual_hamjepa}.

\subsection{Phase-space encoder}
\label{sec:hamjepa-encoder}

Given paired views $v_a,v_b$ as defined in Section~\ref{sec:background}, the encoder $E_\theta:\mathcal X\to\mathbb R^{2d_0}$ outputs phase-space states
$s_a=E_\theta(v_a)$ and $s_b=E_\theta(v_b)$ with $s=(q,p)\in\mathbb{R}^{2d_0}$.

\begin{equation}
  s = E_\theta(v) = (q,p),\qquad q,p\in\mathbb R^{d_0}.
  \label{eq:hamjepa-state}
\end{equation}

We interpret $q$ as a \emph{content} coordinate used for prediction and evaluation, and $p$ as an auxiliary \emph{momentum} coordinate enabling structured transitions.
We write $\pi_q(q,p)=q$ and $\pi_p(q,p)=p$.

\paragraph{Observable coordinate vs.\ conjugate latent.}
Section~\ref{sec:price-of-isotropy-theory} is formulated in terms of the representation used by downstream tasks; in HamJEPA, that conceptual role is played by \(q\).
The auxiliary coordinate \(p\) is the conjugate latent that makes Hamiltonian transport possible.
For non-invertible augmentations, this makes \(q\)-matching the cleanest observable formulation, since the target need not contain a uniquely recoverable dynamical state.
In practice, however, the exact matching choice is a stability knob rather than a theorem:
on CIFAR-100 we primarily use \(q\)-matching, while on ImageNet-100 we found full-state \((q,p)\)-matching more stable and therefore use it there.

\subsection{Symplectic predictor via a separable Hamiltonian and leapfrog}
\label{sec:hamjepa-predictor}

\paragraph{Canonical symplectic structure.}
Let
\begin{equation}
  J :=
  \begin{pmatrix}
    0 & I \\
    -I & 0
  \end{pmatrix}
  \in \mathbb R^{2d_0\times 2d_0}
  \label{eq:hamjepa-J}
\end{equation}
denote the canonical symplectic matrix.

\paragraph{Hamilton's equations and flow map.}
Given a differentiable scalar energy $\mathcal H_\phi:\mathbb R^{2d_0}\to\mathbb R$, Hamilton's ODE on phase space is
\begin{equation}
  \dot s(t) \;=\; J\,\nabla_s \mathcal H_\phi\big(s(t)\big),
  \qquad s(0)=s_0,
  \label{eq:hamjepa_hamilton_ode}
\end{equation}
where $J$ is the canonical symplectic matrix \eqref{eq:hamjepa-J}.
We denote its time-$t$ flow map by $\Phi_{\phi,t}$, i.e.\ $s(t)=\Phi_{\phi,t}(s_0)$.

\paragraph{Separable energy (the theory-tight choice).}
To obtain a discrete predictor map with rigorous symplectic guarantees under an explicit integrator, we use a separable Hamiltonian energy
\begin{equation}
  \mathcal H_\phi(q,p) = T(p) + V_\phi(q),
  \qquad
  T(p)=\tfrac12\|p\|_2^2,
  \label{eq:hamjepa-separable}
\end{equation}
where $V_\phi:\mathbb R^{d_0}\to\mathbb R$ is a learnable potential.
Fixing $T(p)=\frac12\|p\|^2$ is stabilizing and prevents scale-control constraints from being satisfied by rescaling a learnable kinetic metric.

\paragraph{Leapfrog (velocity Verlet) predictor map.}
Given step size $\Delta t>0$ and $K\in\mathbb N$ steps, define the leapfrog update
\begin{equation}
\begin{aligned}
p_{k+\frac12} &= p_k - \tfrac{\Delta t}{2}\,\nabla_q V_\phi(q_k),\\
q_{k+1} &= q_k + \Delta t\, p_{k+\frac12},\\
p_{k+1} &= p_{k+\frac12} - \tfrac{\Delta t}{2}\,\nabla_q V_\phi(q_{k+1}),
\end{aligned}
\qquad k=0,\dots,K-1.
\label{eq:hamjepa-leapfrog}
\end{equation}
This defines a one-step map $\Psi_{\phi,\Delta t}:\mathbb R^{2d_0}\to\mathbb R^{2d_0}$ and a $K$-step predictor
\begin{equation}
  \Phi_\phi^{(K)} := \Psi_{\phi,\Delta t}^{\,K}.
  \label{eq:hamjepa-Kstep}
\end{equation}
Given $s_a=(q_a,p_a)$, we predict $\hat s_b=(\hat q_b,\hat p_b)$ by
\begin{equation}
  \hat s_b = \Phi_\phi^{(K)}(s_a).
  \label{eq:hamjepa-predict}
\end{equation}

\subsection{Objective: content prediction plus non-isotropic anti-collapse}
\label{sec:hamjepa-objective}

The projected log-det and participation-ratio penalties should be understood as an encoder-side anti-collapse bridge, not as the primary novelty of the method; similar covariance-volume objectives appear elsewhere, whereas our contribution is to couple such non-collapse controls with a phase-space JEPA and a symplectic predictive prior.

Let $s_a=(q_a,p_a)=E_\theta(v_a)$, $s_b=(q_b,p_b)=E_\theta(v_b)$, and $\hat s_b=(\hat q_b,\hat p_b)=\Phi_\phi^{(K)}(s_a)$.
HamJEPA optimizes
\begin{equation}
\begin{aligned}
\min_{\theta,\phi}\;\;
&\mathcal L_{\text{pred}}
+\lambda_{\text{bi}}\mathcal L_{\text{bi}}
+\lambda_{\text{budget}}\mathcal L_{\text{budget}}
+\lambda_{\text{vol}}\mathcal L_{\text{vol}}
+\lambda_{\text{pr}}\mathcal L_{\text{pr}}
+\lambda_{\text{wd}}\mathcal R_{\text{wd}}(\theta,\phi).
\end{aligned}
\label{eq:hamjepa-objective}
\end{equation}

All expectations are implemented as minibatch averages; $\mathrm{sg}(\cdot)$ denotes stop-gradient.

\paragraph{(1) Matching loss.}
Let $m\in\{q,qp\}$ denote the matching mode, with
$\Pi_q(q,p)=q$ and $\Pi_{qp}(q,p)=(q,p)$.
CIFAR-100 uses $m=q$, while ImageNet-100 uses $m=qp$ for stability.
We define
\begin{equation}
  \mathcal L_{\mathrm{pred}}
  :=
  \mathbb E\!\left[
    \|\Pi_m(\hat s_b)-\operatorname{sg}(\Pi_m(s_b))\|_2^2
  \right].
  \label{eq:pred-loss}
\end{equation}

\paragraph{(2) Bidirectional prediction.}
Define the backward prediction by running the \emph{same} integrator with step size $-\Delta t$:
\[
  \hat s_a = \Phi_\phi^{(K)}(s_b;\,-\Delta t).
\]
The bidirectional loss is
\begin{equation}
\begin{aligned}
\mathcal L_{\text{bi}}
:= \mathbb E\Big[
&\|\pi_q(\Phi_\phi^{(K)}(s_a;\,+\Delta t))-\mathrm{sg}(q_b)\|_2^2 
+ \|\pi_q(\Phi_\phi^{(K)}(s_b;\,-\Delta t))-\mathrm{sg}(q_a)\|_2^2
\Big].
\end{aligned}
\label{eq:hamjepa-Lbi}
\end{equation}
Time reversibility of leapfrog (Corollary~\ref{cor:hamjepa-reversible}) makes this an ``exact inverse in-model'' constraint rather than an independent predictor.

\paragraph{(3) Fixed-units scale budget (anti-gauge).}
Symplecticity constrains the predictor map but does not prevent the encoder from collapsing all views to a point.
We therefore enforce scale budgets in fixed units (no Gaussianity, no isotropy):
\begin{equation}
\mathcal L_{\text{budget}}
:=
\Big(\frac{\mathbb E\|q\|_2^2}{d_0}-\alpha_q\Big)^2
+
\Big(\frac{\mathbb E\|p\|_2^2}{d_0}-\alpha_p\Big)^2,
\alpha_q,\alpha_p>0.
\label{eq:hamjepa-Lbudget}
\end{equation}

\paragraph{(4) Projected log-det volume floor.}
A scale budget alone can be satisfied by concentrating variance in a low-dimensional subspace.
To prevent rank collapse without forcing isotropy, we impose a projected log-det floor on the content batch.
Let $Q\in\mathbb R^{B\times d_0}$ contain centered content codes:
$Q_i = q_i - \frac1B\sum_{j=1}^B q_j$.
Sample a random projection $R\in\mathbb R^{d_0\times k}$ with orthonormalized columns ($k\ll d_0$) and form $Y=QR\in\mathbb R^{B\times k}$.
Define the regularized projected covariance
\begin{equation}
\Sigma_Y := \frac{1}{B-1}Y^\top Y + \varepsilon I_k,
\qquad \varepsilon>0,
\label{eq:hamjepa-SigmaY}
\end{equation}
and per-dimension log-volume
\begin{equation}
\ell_{\text{vol}} := \frac{1}{k}\log\det(\Sigma_Y).
\end{equation}
We use the hinge penalty
\begin{equation}
\mathcal L_{\text{vol}}
:= \big[\max(0,\;\tau-\ell_{\text{vol}})\big]^2,
\qquad \tau\in\mathbb R.
\label{eq:hamjepa-Lvol}
\end{equation}

\paragraph{(5) Participation ratio floor (prevents ``one-spike'' solutions).}
Log-det alone can be satisfied with one very large eigenvalue and many tiny ones when scale is not tightly controlled, or when minibatch effects dominate.
To explicitly rule out the ``one-spike'' regime, we add a participation ratio (PR) floor on $\Sigma_Y$:
\begin{equation}
\mathrm{PR}(\Sigma_Y)
:= \frac{\big(\mathrm{tr}(\Sigma_Y)\big)^2}{\mathrm{tr}(\Sigma_Y^2)}
= \frac{\big(\sum_{i=1}^k \lambda_i\big)^2}{\sum_{i=1}^k \lambda_i^2}
\in [1,k],
\label{eq:hamjepa-PR}
\end{equation}
where $\lambda_1,\dots,\lambda_k$ are the eigenvalues of $\Sigma_Y$.
We penalize violations of a target effective rank $r_0\in(1,k]$:
\begin{equation}
\mathcal L_{\text{pr}}
:= \big[\max(0,\;r_0-\mathrm{PR}(\Sigma_Y))\big]^2.
\label{eq:hamjepa-Lpr}
\end{equation}
Together, these terms prevent collapse while permitting anisotropy; \emph{HamJEPA} does not constrain representations to a unit sphere or isotropic Gaussian target.

\subsection{Structural guarantees}
\label{sec:hamjepa_theory}

The formal structural theory of HamJEPA is deferred to
Appendix~\ref{app:hamjepa_theory}. At a high level, the separable Hamiltonian
\eqref{eq:hamjepa-separable} and leapfrog update \eqref{eq:hamjepa-leapfrog} imply that the
$K$-step predictor $\Phi^{(K)}_\phi$ is symplectic:
\begin{equation}
  \big(D\Phi^{(K)}_\phi(s)\big)^\top J
  \big(D\Phi^{(K)}_\phi(s)\big)=J,
  \qquad
  \det D\Phi^{(K)}_\phi(s)=1 .
  \label{eq:main-symplectic-guarantee}
\end{equation}
Thus the predictor is locally volume-preserving and is inverted by running
leapfrog with step size $-\Delta t$. Since symplecticity constrains only the
predictor, encoder collapse is controlled separately by the fixed-unit scale,
projected log-det, and participation-ratio floors. Appendix~\ref{app:hamjepa_theory}
gives reciprocal singular-value pairing, shadow-Hamiltonian stability, and the
anti-collapse lemmas.

\section{Results}
\label{sec:results}

\paragraph{Controlled protocol.}
We evaluate frozen representations on CIFAR-100 and ImageNet-100 using cosine
$k$NN and linear probing. The goal is not a projector-heavy SOTA recipe, but an
objective-level comparison: all methods share the same backbone, augmentations,
optimizer, batch size, schedule, two global views, no local views, the encoder runs in a headless token regime using ResNet stage-3 (\texttt{layer3}) features, and identity projector. For HamJEPA, the encoder output
$s=(q,p)$ is split channel-wise; $q$ is the primary readout, while $p$ and
$(q,p)$ test auxiliary-state and dimension effects. Full details and diagnostics
are in Appendix~\ref{app:res}--\ref{app:impl}.
SIGReg is the isotropic-Gaussian JEPA baseline closest to our theoretical comparison. This protocol is intentionally underpowered by modern SSL standards. By removing projection heads, local crops, EMA teachers, and recipe-specific tuning, the comparison isolates whether changing the predictive geometry changes the frozen representation.

\subsection{CIFAR-100: controlled gains and predictor ablation}
\label{sec:results_cifar}

HamJEPA improves over SIGReg after only $30$ epochs under the same backbone, views, optimizer, and identity projector.
Using the primary $q$ readout, HamJEPA gains $+4.89$ points in $k$NN@20 (31.45 vs.\ 26.56) and $+3.52$ points in linear probing (33.95 vs.\ 30.43).
The $p$ readout is also predictive (29.71 $k$NN@20; 33.07 linear), but remains behind $q$ in nearest-neighbor accuracy, matching the intended role of $q$ as the content coordinate and $p$ as an auxiliary dynamical coordinate.
Concatenating $(q,p)$ gives the strongest linear probe at $30$ epochs (34.18), while $q$ alone gives the strongest $k$NN geometry.

The $80$-epoch predictor-family ablation gives the sharper control.
MLP-HJEPA keeps the phase-space encoder, channel-wise $q/p$ split, identity projector, prediction loss, anti-collapse regularizers, optimizer, batch size, and training schedule fixed, and replaces only the symplectic leapfrog map with a non-symplectic residual MLP predictor.
This ablation is intentionally strong: MLP-HJEPA reaches high linear accuracy (42.26--42.77), showing that the phase-space split and regularizers can produce linearly useful features.
However, its $k$NN@20 accuracy remains essentially at the SIGReg level (28.10--28.30 vs.\ 27.98).
By contrast, HamJEPA with the symplectic predictor reaches 34.43 $k$NN@20 and 44.59 linear accuracy using the $q$ readout.
Thus the main geometric gain is not explained by the $q/p$ split, anti-collapse terms, or generic MLP capacity: replacing the MLP by a symplectic Hamiltonian predictor yields a $+6.33$ point $k$NN@20 gain over MLP-HJEPA-$q$ and a $+6.45$ point gain over SIGReg at the same 80-epoch budget.

\begin{table}[t]
\centering
\small
\begin{tabular}{llccc}
\toprule
Setting & Method / readout & $k$NN@20 (\%) & Linear (\%) & $k$NN best (\%) \\
\midrule
\multicolumn{5}{l}{\emph{30 epochs, mean$\pm$std over 3 seeds}} \\
& SIGReg & 26.56 $\pm$ 0.18 & 30.43 $\pm$ 0.30 & -- \\
& HamJEPA-$q$ & \textbf{31.45 $\pm$ 0.27} & 33.95 $\pm$ 0.24 & -- \\
& HamJEPA-$p$ & 29.71 $\pm$ 0.25 & 33.07 $\pm$ 0.36 & -- \\
& HamJEPA-$(q,p)$ & 30.88 $\pm$ 0.39 & \textbf{34.18 $\pm$ 0.17} & -- \\
\midrule
\multicolumn{5}{l}{\emph{80 epochs, predictor-family ablation}} \\
& SIGReg & 27.98 & 33.95 & 28.71 @ $k{=}1$ \\
& MLP-HJEPA-$q$ & 28.10 & 42.26 & 28.10 @ $k{=}20$ \\
& MLP-HJEPA-$p$ & 28.30 & 41.90 & 28.30 @ $k{=}20$ \\
& MLP-HJEPA-$(q,p)$ & 28.15 & 42.77 & 28.15 @ $k{=}20$ \\
& HamJEPA-$q$ & \textbf{34.43} & \textbf{44.59} & \textbf{34.43 @ $k{=}20$} \\
& HamJEPA-$p$ & 32.96 & 44.44 & 32.96 @ $k{=}20$ \\
& HamJEPA-$(q,p)$ & 33.66 & 44.52 & 33.66 @ $k{=}20$ \\
\bottomrule
\end{tabular}
\caption{\textbf{CIFAR-100 controlled results.}
The 30-epoch block shows short-budget performance; the 80-epoch block isolates the predictor family.
MLP-HJEPA differs from HamJEPA only by replacing the symplectic leapfrog predictor with a non-symplectic residual MLP.}
\label{tab:cifar_results}
\end{table}

\subsection{ImageNet-100: same minimal recipe, larger-scale signal}
\label{sec:results_imagenet}

On ImageNet-100, HamJEPA again improves over SIGReg under the same minimal recipe ($45$ epochs, single seed).
Using the primary $q$ readout, HamJEPA gains $+4.82$ points in $k$NN@20 (24.92 vs.\ 20.10) and $+7.52$ points in linear probing (31.92 vs.\ 24.40).
The best HamJEPA linear readout is $p$ at 32.08, giving a $+7.68$ point gain over SIGReg-$q$.
These gains are not explained by feature dimension.
SIGReg benefits modestly from concatenating its two 1024-d blocks: $q\rightarrow(q,p)$ improves by $+0.84$ $k$NN@20 points (20.10 $\rightarrow$ 20.94) and $+1.14$ linear-probe points (24.40 $\rightarrow$ 25.54).
In contrast, HamJEPA gains essentially nothing from concatenation in $k$NN (24.92 $\rightarrow$ 24.64) and only $+0.12$ in linear probing (31.92 $\rightarrow$ 32.04).
Crucially, the 1024-d HamJEPA-$q$ readout already outperforms the 2048-d SIGReg-$(q,p)$ readout by $+3.98$ points in $k$NN@20 and $+6.38$ points in linear probing.

We match the full phase space $(q,p)$ during pretraining for stability.
Consistent with this objective, $p$ is highly predictive and slightly exceeds $q$ on the linear probe (32.08 vs.\ 31.92), while remaining comparable in $k$NN (24.72 vs.\ 24.92).
However, concatenating $(q,p)$ does not improve $k$NN, suggesting that the nearest-neighbor geometry is not simply a consequence of evaluating a larger vector.
The result is therefore a controlled objective-level delta: HamJEPA improves frozen-feature geometry in a deliberately headless setting where neither projection-head capacity nor representation dimension explains the gain.

\begin{table}[t]
\centering
\small
\begin{tabular}{lcc}
\toprule
Method & $k$NN@20 (\%) & Linear probe (\%) \\
\midrule
SIGReg ($q$-eval) & 20.10 & 24.40 \\
SIGReg ($p$-eval) & 18.78 & 23.86 \\
SIGReg ($(q,p)$-eval) & 20.94 & 25.54 \\
\midrule
HamJEPA ($q$-eval) & \textbf{24.92} & 31.92 \\
HamJEPA ($p$-eval) & 24.72 & \textbf{32.08} \\
HamJEPA ($(q,p)$-eval) & 24.64 & 32.04 \\
\bottomrule
\end{tabular}
\caption{\textbf{ImageNet-100 downstream performance after $45$ epochs} of pretraining, single seed.
For HamJEPA, $q$ and $p$ are learned phase-space coordinates.
For SIGReg, the same split is a representation-block / dimension control rather than a literal position--momentum decomposition.}
\label{tab:main_imagenet}
\end{table}

\paragraph{Geometry diagnostics.}
The accuracy gains are accompanied by broader, non-collapsed covariance spectra
without enforcing an isotropic Gaussian marginal. On ImageNet-100, SIGReg-$q$
has effective rank $\approx38.33$ and PR $\approx24.91$, while HamJEPA-$q$
reaches $\approx94.73$ and $\approx45.27$; even the larger SIGReg-$(q,p)$
representation remains lower ($\approx51.16$ and $\approx36.17$).
CIFAR-100 shows the same qualitative pattern, with HamJEPA-$q$ increasing
effective rank from $\approx109.7$ to $\approx124.5$. Full spectra and
training-time diagnostics are deferred to Appendix~\ref{app:res}.

\section{Conclusion}
\label{sec:conclusion}
We showed that Euclidean-isotropic JEPA regularization can be misaligned with structured task geometry, and that fixed marginal targets are brittle when the downstream geometry is unknown. HamJEPA instead places Hamiltonian structure in the view-to-view predictor: views are encoded as phase-space states, predicted with a symplectic leapfrog map, and kept non-collapsed by non-isotropic scale and spectral floors. Under a deliberately headless token protocol, HamJEPA improves frozen-feature $k$NN and linear-probe performance over SIGReg on CIFAR-100 and ImageNet-100.

\paragraph{Limitations.}
This is a controlled geometry study rather than a SOTA SSL recipe. The main
baseline is SIGReg, ImageNet-100 is single-seed, and we do not learn or recover
the downstream operator $H$; broader baselines, multi-seed ImageNet runs, and
learned structured geometries remain future work.

\nocite{*}

\bibliographystyle{plainnat}
\bibliography{example_paper}

\newpage
\appendix
\onecolumn

\section{Theory: symplecticity, volume preservation, and anti-collapse}
\label{app:hamjepa_theory}

This appendix records the structural guarantees enjoyed by HamJEPA when using the separable Hamiltonian
\eqref{eq:hamjepa-separable} and the leapfrog integrator \eqref{eq:hamjepa-leapfrog}.
These results justify the summary in
Section~\ref{sec:hamjepa_theory}: leapfrog gives a symplectic, volume-preserving,
time-reversible predictor, while the encoder-side scale, log-det, and
participation-ratio constraints prevent collapse without imposing Euclidean
isotropic Gaussian marginals.
All proofs are in Appendix~\ref{app:hamjepa_proofs}.

\subsection{Statements}
\label{app:theory_hamjepa_statements}

\paragraph{Row/column convention (linear transforms).}
The paper uses column vectors for single states/embeddings and row-stacked matrices for minibatches.
The following lemma formalizes the equivalence.

\begin{lemma}[Row/column convention for left-multiplication by $H^{1/2}$]
\label{lem:rowcol_transform}
Let $H\in\mathbb R^{d\times d}$ and let $z\in\mathbb R^d$ be a column vector.
Define $z' := H^{1/2}z$.
Let $\mathbf Z\in\mathbb R^{n\times d}$ be a row-stacked minibatch with rows $\mathbf Z_{i,:}=z_i^\top$.
Define $\mathbf Z' := \mathbf ZH^{1/2}$.
Then for every $i$,
\[
(\mathbf Z')_{i,:}^\top \;=\; H^{1/2}z_i.
\]
\end{lemma}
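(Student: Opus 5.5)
The plan is to verify the identity entrywise, using only the definition of matrix multiplication and the symmetry of $H^{1/2}$. One caveat comes first. For a general $H\in\mathbb R^{d\times d}$ the identity can fail, because the row form naturally produces $(H^{1/2})^\top z_i$. So I will work in the paper's standing setting, where $H\succ0$ and $H^{1/2}$ is the unique symmetric positive definite square root. I will flag this symmetry as the only substantive hypothesis.

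The first step is to write the $i$-th row of $\mathbf Z'=\mathbf Z H^{1/2}$ as a row vector times a matrix:
\[
(\mathbf Z')_{i,:}=\mathbf Z_{i,:}H^{1/2}=z_i^\top H^{1/2}.
\]
This holds because the $i$-th row of a product $AB$ is the $i$-th row of $A$ multiplied by $B$. Entrywise, this reads $(\mathbf Z')_{ij}=\sum_k (z_i)_k (H^{1/2})_{kj}$.

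The second step is to transpose, using $(xB)^\top = B^\top x^\top$:
\[
(\mathbf Z')_{i,:}^\top=(H^{1/2})^\top z_i.
\]
The third step invokes symmetry. Since $H^{1/2}$ is defined through the spectral decomposition $H=U\Lambda U^\top$ as $U\Lambda^{1/2}U^\top$, it satisfies $(H^{1/2})^\top=H^{1/2}$. This gives $(\mathbf Z')_{i,:}^\top=H^{1/2}z_i=z_i'$, which is the claim.

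The only real obstacle is the symmetry issue, since the rest is bookkeeping. If the intended reading allows a nonsymmetric $H^{1/2}$, the correct batch transform is $\mathbf Z (H^{1/2})^\top$, and I would state the lemma in that form. As a closing remark, I would note the consequence used elsewhere in the paper: centering commutes with right-multiplication. Hence the batch covariance of $\mathbf Z'$ equals $H^{1/2}\widehat\Sigma H^{1/2}$, which is consistent with Proposition~\ref{prop:metric-isotropy-equivalence}.
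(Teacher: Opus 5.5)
Your proposal is correct and follows the same route as the paper: read off the $i$-th row as $z_i^\top H^{1/2}$, transpose to $(H^{1/2})^\top z_i$, and use symmetry of $H^{1/2}$. Your caveat is also well placed. The paper's proof likewise invokes symmetry of $H^{1/2}$ as ``valid for SPD $H$,'' even though the lemma is stated for an arbitrary $H\in\mathbb R^{d\times d}$, so the SPD hypothesis is genuinely needed.
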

\noindent\emph{Proof.} See Appendix~\ref{app:proof-lem-rowcol-transform}. \hfill $\square$

\paragraph{Symplecticity of the continuous flow.}
We first state the standard symplecticity property of Hamiltonian flows.

\begin{theorem}[Symplecticity of Hamiltonian flows]
\label{thm:flow_symplectic}
Assume $\mathcal H_\phi\in C^2(\mathbb R^{2d_0})$ and let $\Phi_{\phi,t}$ denote the flow map of Hamilton's ODE
\eqref{eq:hamjepa_hamilton_ode}.
Then $\Phi_{\phi,t}$ is symplectic: for all $s$ and $t$,
\begin{equation}
\big(D\Phi_{\phi,t}(s)\big)^\top J\,\big(D\Phi_{\phi,t}(s)\big) \;=\; J.
\label{eq:flow-symplectic}
\end{equation}
Consequently, $\det(D\Phi_{\phi,t}(s))=1$ for all $s,t$, and $\Phi_{\phi,t}$ preserves phase-space volume (Liouville).
\end{theorem}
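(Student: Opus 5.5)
The approach is the classical variational-equation argument. Fix $s$ and write $A(t):=D\Phi_{\phi,t}(s)$ for the Jacobian of the flow. Since $\mathcal H_\phi\in C^2$, the vector field $J\nabla_s\mathcal H_\phi$ is $C^1$. Standard ODE theory (differentiable dependence on initial conditions) then gives three facts: $\Phi_{\phi,t}$ is $C^1$ in $s$ on its domain of existence; $A$ solves the linearized equation
\[
\dot A(t) = J\,\nabla^2\mathcal H_\phi\big(\Phi_{\phi,t}(s)\big)\,A(t);
\]
and $A(0)=I$. Throughout, I read the statement as holding for all $(s,t)$ at which the flow is defined, or else I assume completeness.

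\textbf{Step 1: invariance of the symplectic form.} Set $F(t):=A(t)^\top J A(t)$ and abbreviate $S(t):=\nabla^2\mathcal H_\phi(\Phi_{\phi,t}(s))$, which is symmetric. Differentiating gives
\[
\dot F = A^\top S^\top J^\top J A + A^\top J J S A .
\]
We have $J^\top J = I$ and $JJ=-I$, so $\dot F = A^\top S A - A^\top S A = 0$. Hence $F(t)=F(0)=J$, which proves the symplecticity identity in the statement.

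\textbf{Step 2: unit determinant.} Taking determinants of $A^\top J A = J$ gives $\det(A)^2\det J=\det J$. Since $\det J=1\neq 0$, this yields $\det A=\pm1$. The map $t\mapsto\det A(t)$ is continuous on the interval of existence, which contains $0$, and it equals $1$ at $t=0$. It can only take the values $\pm1$, so it is identically $1$. Alternatively, Liouville's formula gives
\[
\frac{d}{dt}\det A = \mathrm{tr}(JS)\det A ,
\]
and $\mathrm{tr}(JS)=0$ because $JS$ is the product of an antisymmetric and a symmetric matrix.

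\textbf{Step 3: volume preservation.} For any measurable set $U$, the change-of-variables formula gives $\mathrm{vol}(\Phi_{\phi,t}(U))=\int_U|\det D\Phi_{\phi,t}|=\mathrm{vol}(U)$. This uses the fact that $\Phi_{\phi,t}$ is a diffeomorphism onto its image, with inverse $\Phi_{\phi,-t}$.

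\textbf{Main obstacle.} The only delicate point is justifying the variational equation, namely that the Jacobian of the flow exists and satisfies the linearized ODE. This requires the $C^2$ hypothesis and a standard differentiability theorem for flows. I would simply cite that theorem rather than reprove it. The remaining steps are short algebraic calculations.
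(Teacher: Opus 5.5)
Your proposal is correct and follows the paper's proof essentially step for step. Both arguments use the variational equation for $A(t)=D\Phi_{\phi,t}(s)$, show that $A^\top J A$ has zero time derivative, get $\det A=\pm1$ and then $\det A\equiv1$ by continuity from $A(0)=I$, and obtain volume preservation from change of variables; your caveat about restricting to the domain of existence (or assuming completeness) is a sensible refinement the paper leaves implicit.
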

\noindent\emph{Proof.} See Appendix~\ref{app:proof-thm-flow-symplectic}. \hfill $\square$

\begin{corollary}[Volume-preserving maps preserve differential entropy]
\label{cor:entropy_preservation}
Let $F:\mathbb R^{m}\to\mathbb R^{m}$ be a $C^1$ diffeomorphism and let $X$ admit a density.
Then
\[
h(F(X)) \;=\; h(X) + \mathbb E\big[\log|\det(DF(X))|\big].
\]
In particular, if $|\det(DF(x))|=1$ for all $x$ (e.g., $F=\Phi_{\phi,t}$ from Theorem~\ref{thm:flow_symplectic}),
then $h(F(X))=h(X)$.
\end{corollary}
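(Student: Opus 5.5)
The plan is to prove the formula with the change-of-variables theorem for densities and then take expectations of the logarithm. Let $p_X$ be the density of $X$ and set $Y:=F(X)$. Since $F$ is a $C^1$ diffeomorphism of $\mathbb R^m$, $Y$ has a density, given by
\[
p_Y(y) \;=\; p_X\big(F^{-1}(y)\big)\,\big|\det DF\big(F^{-1}(y)\big)\big|^{-1}.
\]
This follows from substituting $y=F(x)$ in $\int_{F(A)} p_Y$ for Borel sets $A$. Evaluating at $y=F(x)$ gives the pointwise identity
\[
\log p_Y(F(x)) \;=\; \log p_X(x) - \log|\det DF(x)|
\]
for every $x$ with $p_X(x)>0$.

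Next I would write $h(Y)=-\mathbb E[\log p_Y(Y)]$. Because $Y=F(X)$, this expectation equals $-\mathbb E[\log p_Y(F(X))]$, which by the law of the unconscious statistician is an integral against $p_X$. Substituting the pointwise identity gives
\[
h(Y) \;=\; -\mathbb E[\log p_X(X)] + \mathbb E\big[\log|\det DF(X)|\big] \;=\; h(X)+\mathbb E\big[\log|\det DF(X)|\big].
\]
The ``in particular'' clause follows at once: if $|\det DF|\equiv 1$, the correction term is $\mathbb E[\log 1]=0$. For the flow case I would cite Theorem~\ref{thm:flow_symplectic}, which gives $\det D\Phi_{\phi,t}=1$. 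The flow map is a diffeomorphism whenever it is globally defined, with inverse $\Phi_{\phi,-t}$. That is the standing assumption implicit in the theorem.

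The main obstacle is integrability when splitting the expectation into two terms, since differential entropies may be infinite or undefined. I would make explicit the implicit hypothesis that $h(X)$ exists (is finite) and that $\log|\det DF(X)|$ is integrable. Under that hypothesis, linearity of expectation is justified and the formula holds, with both sides finite. In the volume-preserving case the second term vanishes identically, so no extra assumption is needed: $h(F(X))$ exists exactly when $h(X)$ does, and the two are equal. A minor point is that the pointwise identity only holds on $\{p_X>0\}$, but that set carries full probability under $X$, so the expectations are unaffected.
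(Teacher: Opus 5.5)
Your proposal is correct and follows the same route as the paper: write the density of $F(X)$ via the change-of-variables formula, take logarithms, and integrate against $p_X$ to split off $\mathbb E[\log|\det DF(X)|]$. Your explicit hypothesis that $h(X)$ is finite and $\log|\det DF(X)|$ is integrable is a useful refinement, since the paper splits the expectation without comment, and you correctly note that no such hypothesis is needed in the volume-preserving case.
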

\noindent\emph{Proof.} See Appendix~\ref{app:proof-cor-entropy-preservation}. \hfill $\square$

\paragraph{Symplecticity of the discrete predictor.}
HamJEPA uses the leapfrog/Verlet map \eqref{eq:hamjepa-leapfrog} as a discrete predictor.
For separable Hamiltonians this map is symplectic.

\begin{theorem}[Leapfrog is symplectic and volume-preserving for separable Hamiltonians]
\label{thm:leapfrog_symplectic}
Let $\mathcal H_\phi(q,p)=T(p)+V_\phi(q)$ with $T(p)=\tfrac12\|p\|_2^2$ and $V_\phi\in C^2$.
Then the one-step leapfrog map $\Psi_{\phi,\Delta t}$ defined by \eqref{eq:hamjepa-leapfrog} is symplectic.
Therefore, for any $K\ge 1$, the $K$-step predictor $\Phi_\phi^{(K)}=\Psi_{\phi,\Delta t}^{\,K}$ is symplectic and
satisfies
\begin{equation}
\det\big(D\Phi_\phi^{(K)}(s)\big)=1
\quad\text{for all } s.
\label{eq:det-one}
\end{equation}
\end{theorem}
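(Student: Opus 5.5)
The one-step leapfrog map is a composition of three elementary shear maps, so the plan is to prove each shear symplectic and then use that symplectic maps form a group under composition. Write $\Psi_{\phi,\Delta t} = A_{\Delta t/2}\circ B_{\Delta t}\circ A_{\Delta t/2}$. Here
\[
A_h(q,p) := (q,\; p - h\,\nabla_q V_\phi(q))
\]
is the momentum kick, and
\[
B_h(q,p) := (q + h\,p,\; p)
\]
is the position drift. Reading \eqref{eq:hamjepa-leapfrog} from top to bottom, the kick $A_{\Delta t/2}$ is applied first, then $B_{\Delta t}$, then $A_{\Delta t/2}$ again, now evaluated at $q_{k+1}$. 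This matches the composition above because $A$ reads the current $q$. Each shear is $C^1$ since $V_\phi\in C^2$.

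Next compute the Jacobians in block form. We have
\[
DA_h(q,p)=\begin{pmatrix} I & 0\\ -hS(q) & I\end{pmatrix},
\]
with $S(q)=\nabla^2_q V_\phi(q)$ symmetric by $C^2$ regularity (Schwarz), and
\[
DB_h=\begin{pmatrix} I & hI\\ 0 & I\end{pmatrix}.
\]
A direct block multiplication with $J$ from \eqref{eq:hamjepa-J} checks $M^\top J M = J$ for a lower-triangular block matrix $M=\begin{pmatrix} I&0\\ N&I\end{pmatrix}$. The product reduces to $J$ plus a block equal to $N^\top - N$, which vanishes precisely when $N$ is symmetric. The same holds for the upper-triangular form with a symmetric off-diagonal block. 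Since $-hS(q)$ and $hI$ are symmetric, both shears are symmetric at every point.

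By the chain rule, $D(F\circ G)(s)=DF(G(s))\,DG(s)$. If both factors satisfy the symplectic identity at the relevant points, then
\[
(DF\,DG)^\top J\,(DF\,DG)=DG^\top (DF^\top J\, DF)\,DG = DG^\top J\, DG = J.
\]
Applying this twice gives symplecticity of $\Psi_{\phi,\Delta t}$. Induction on $K$ then gives it for $\Phi^{(K)}_\phi=\Psi^K_{\phi,\Delta t}$, which is \eqref{eq:main-symplectic-guarantee}.

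For \eqref{eq:det-one}, take determinants in $M^\top J M=J$. Since $\det J=1$, this gives $(\det M)^2=1$, so $\det M=\pm1$. To pin down the sign, observe directly that each shear Jacobian is block triangular with identity diagonal blocks, so it has determinant $1$. Multiplicativity of the determinant under the chain rule then gives $\det D\Phi^{(K)}_\phi(s)=1$. This bypasses the general Pfaffian argument. The only mildly delicate point is the symmetry of the Hessian, which is exactly where the hypothesis $V_\phi\in C^2$ is used; the rest is routine block algebra.
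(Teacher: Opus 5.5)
Your proposal is correct and is essentially the paper's proof: the same kick--drift--kick factorization $K_{\Delta t/2}\circ D_{\Delta t}\circ K_{\Delta t/2}$, shear Jacobians whose off-diagonal blocks $-h\nabla^2 V_\phi$ and $hI$ are symmetric, and closure of symplecticity under composition. One wording slip: where you wrote that the shears are ``symmetric at every point,'' you mean symplectic.

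The only real difference is the determinant sign. The paper defers to the $\det^2=1$-plus-continuity argument from the flow theorem, whereas you observe that each shear Jacobian is block unit-triangular, which gives $\det=1$ at once and is the cleaner route here.
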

\noindent\emph{Proof.} See Appendix~\ref{app:proof-thm-leapfrog-symplectic}. \hfill $\square$

\begin{corollary}[Invertibility and time reversibility]
\label{cor:reversible}\label{cor:hamjepa-reversible}
Under the assumptions of Theorem~\ref{thm:leapfrog_symplectic},
the leapfrog map is bijective and satisfies
\begin{equation}
\Psi_{\phi,\Delta t}^{-1} = \Psi_{\phi,-\Delta t}.
\label{eq:reversibility}
\end{equation}
In particular, the $K$-step predictor admits an exact inverse by running the same integrator with step size $-\Delta t$:
$(\Phi_\phi^{(K)}(\cdot;\,+\Delta t))^{-1}=\Phi_\phi^{(K)}(\cdot;\,-\Delta t)$.
\end{corollary}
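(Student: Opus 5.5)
The plan is to verify the identity $\Psi_{\phi,-\Delta t}\circ\Psi_{\phi,\Delta t}=\mathrm{id}$ directly from the three substeps in \eqref{eq:hamjepa-leapfrog}, and then the opposite composition by symmetry. Bijectivity follows from having a two-sided inverse, and the $K$-step statement follows by composing. Theorem~\ref{thm:leapfrog_symplectic} is not needed for this. It only supplies the local fact $\det D\Psi=1$, whereas the explicit inverse gives global bijectivity.

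First I would write the one-step map as a composition of three shears:
\[
\Psi_{\phi,\Delta t}=\mathcal K_{\Delta t/2}\circ\mathcal D_{\Delta t}\circ\mathcal K_{\Delta t/2},
\]
with $\mathcal K_h(q,p):=(q,\;p-h\nabla_q V_\phi(q))$ (kick) and $\mathcal D_h(q,p):=(q+hp,\;p)$ (drift). Each shear is a bijection of $\mathbb R^{2d_0}$ with explicit inverse $\mathcal K_h^{-1}=\mathcal K_{-h}$ and $\mathcal D_h^{-1}=\mathcal D_{-h}$. For the kick this holds because $q$ is unchanged, so the same gradient $\nabla_q V_\phi(q)$ is subtracted and then added back. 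For the drift it holds because $p$ is unchanged. These checks are one line each.

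Next I would invert the composition:
\[
\Psi_{\phi,\Delta t}^{-1}=\mathcal K_{\Delta t/2}^{-1}\circ\mathcal D_{\Delta t}^{-1}\circ\mathcal K_{\Delta t/2}^{-1}
=\mathcal K_{-\Delta t/2}\circ\mathcal D_{-\Delta t}\circ\mathcal K_{-\Delta t/2}=\Psi_{\phi,-\Delta t}.
\]
The middle equality uses the shear inverses. The last equality holds because the kick–drift–kick factorization is palindromic, so reversing the order of the factors gives the same pattern with $\Delta t$ replaced by $-\Delta t$. This palindromic symmetry is the only point that needs care. Without it, reversing the factor order would not reproduce the leapfrog form, so I would write out the factorization explicitly to make the ordering match \eqref{eq:hamjepa-leapfrog}.

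Finally, for $K$ steps,
\[
\big(\Psi_{\phi,\Delta t}^{K}\big)^{-1}=\big(\Psi_{\phi,\Delta t}^{-1}\big)^{K}=\Psi_{\phi,-\Delta t}^{K}=\Phi_\phi^{(K)}(\cdot;-\Delta t).
\]
The remark about $C^1$ smoothness of the inverse, which makes $\Psi$ a diffeomorphism, comes from the explicit formula under $V_\phi\in C^2$. I expect no real obstacle: the only subtle point is the symmetric ordering of the substeps.
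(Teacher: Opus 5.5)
Your proposal is correct and matches the paper's proof: the paper also factors $\Psi_{\phi,\Delta t}$ as the palindromic kick--drift--kick composition, inverts each shear by negating its step, and takes $K$-fold compositions. Your two additions, that the explicit two-sided inverse gives global bijectivity and that symplecticity is not actually needed, are accurate refinements but do not change the argument.
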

\noindent\emph{Proof.} See Appendix~\ref{app:proof-cor-reversible}. \hfill $\square$

\begin{proposition}[Approximation of separable Hamiltonian flows by HamJEPA predictors]
\label{prop:hamjepa_expressivity}
Fix a horizon $T>0$ and a compact set $K\subset\mathbb{R}^{2d_0}$.
Let $K_q := \{q\in\mathbb{R}^{d_0} : \exists p \text{ with } (q,p)\in K\}$.

Let $V:\mathbb{R}^{d_0}\to\mathbb{R}$ be $C^3$ on an open neighborhood of a compact set $\Omega_q\supseteq K_q$,
and assume $\nabla V$ is Lipschitz on $\Omega_q$.
Let $\Phi^{V}_t$ denote the time-$t$ flow map of Hamilton's ODE
\[
\dot q = p,\qquad \dot p = -\nabla V(q)
\]
(i.e., the separable Hamiltonian $H(q,p)=\tfrac12\|p\|_2^2 + V(q)$).

For a learnable potential $V_\phi\in C^3$ with $\nabla V_\phi$ Lipschitz on $\Omega_q$, let $\Psi_{\phi,\Delta t}$
be one leapfrog step \eqref{eq:hamjepa-leapfrog} and let $\Phi^{(K)}_\phi := \Psi_{\phi,\Delta t}^K$ with $K\Delta t = T$ \eqref{eq:hamjepa-Kstep}.
Assume that for all initial states in $K$, both the exact trajectory under $V$ and the exact trajectory under $V_\phi$
remain inside $\Omega := \Omega_q\times \Omega_p$ for all $t\in[0,T]$.

Then there exists a constant $C=C(T,\Omega,V_\phi)>0$ such that
\[
\sup_{(q,p)\in K}\bigl\|\Phi^{(K)}_\phi(q,p)-\Phi^{V}_T(q,p)\bigr\|_2
\;\le\;
\]\[
C\Big(
\underbrace{\sup_{q\in\Omega_q}\|\nabla V_\phi(q)-\nabla V(q)\|_2}_{\text{modeling error}}
\;+\;
\underbrace{\Delta t^2}_{\text{integrator error}}
\Big).
\]
In particular, if the potential class can approximate $\nabla V$ uniformly on $\Omega_q$
(and admits bounded derivatives on $\Omega_q$), then for every $\varepsilon>0$ there exist $\phi$ and $\Delta t>0$
(hence $K=T/\Delta t$) such that
\[
\sup_{(q,p)\in K}\bigl\|\Phi^{(K)}_\phi(q,p)-\Phi^{V}_T(q,p)\bigr\|_2 \le \varepsilon.
\]
\end{proposition}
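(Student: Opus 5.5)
The plan is to split the error with the triangle inequality, using the exact flow of the learned Hamiltonian as an intermediate object. Let $\Phi^{V_\phi}_T$ be the exact time-$T$ flow of $\dot q=p,\ \dot p=-\nabla V_\phi(q)$. Then
\[
\|\Phi^{(K)}_\phi-\Phi^V_T\|_2\le \|\Phi^{(K)}_\phi-\Phi^{V_\phi}_T\|_2+\|\Phi^{V_\phi}_T-\Phi^{V}_T\|_2 .
\]
The first term is the integrator error and the second is the modeling error. Each is bounded uniformly over initial states in $K$, and the constants are absorbed into a single $C(T,\Omega,V_\phi)$. The final ``in particular'' claim then follows directly. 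Choose $\phi$ so that the modeling error is below $\varepsilon/(2C)$, then choose $\Delta t$ with $C\Delta t^2<\varepsilon/2$. Here one must ensure $C$ stays bounded along the approximating family, which is what the hypothesis of uniformly bounded derivatives on $\Omega_q$ provides.

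\emph{Modeling error.} For a fixed initial state, write $e(t)=(q^\phi(t)-q(t),\,p^\phi(t)-p(t))$. Both trajectories stay in $\Omega$ by assumption, so
\[
\dot e = \bigl(e_p,\ -(\nabla V_\phi(q^\phi)-\nabla V_\phi(q)) - (\nabla V_\phi(q)-\nabla V(q))\bigr).
\]
With $L_\phi$ the Lipschitz constant of $\nabla V_\phi$ on $\Omega_q$, this gives $\frac{d}{dt}\|e\|\le (1+L_\phi)\|e\|+\delta$, where $\delta:=\sup_{\Omega_q}\|\nabla V_\phi-\nabla V\|$. Since $e(0)=0$, Grönwall yields $\|e(T)\|\le \delta\,(e^{(1+L_\phi)T}-1)/(1+L_\phi)$. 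This is the modeling term, and it depends only on $V_\phi$, $T$ and $\Omega$.

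\emph{Integrator error.} This is the standard global second-order estimate for Störmer--Verlet. First, the local truncation error of one leapfrog step is $O(\Delta t^3)$. This follows by Taylor-expanding the exact flow and the step around $(q_k,p_k)$: the scheme is symmetric (Corollary~\ref{cor:reversible}) and consistent, so the even-order local error terms vanish, and the $\Delta t^3$ constant involves $\sup_\Omega\|D^2\nabla V_\phi\|$ and $\|p\|$ on $\Omega$. This is why $C^3$ is assumed. Second, one step is Lipschitz with constant $1+L'\Delta t$ on $\Omega$. Third, Lady Windermere's fan summation over $K=T/\Delta t$ steps gives a global error of at most $e^{L'T}\,C_{\mathrm{loc}}\Delta t^3\cdot K=O(\Delta t^2)$.

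The main obstacle is that the numerical iterates must remain in a region where these derivative bounds hold. The hypothesis only keeps the exact trajectories in $\Omega$. I would handle this with a bootstrap argument. Take $\Omega$ slightly enlarged, which is possible because the assumptions hold on an open neighborhood; equivalently, shrink to a compact set in which the exact orbit has positive distance to the boundary. Then for $\Delta t$ small the accumulated $O(\Delta t^2)$ error is smaller than that margin, so by induction on $k$ the iterates stay inside the enlarged set. For $\Delta t$ above this threshold the bound holds trivially after enlarging $C$, since both maps are bounded on $K$. One caveat is that this last step needs boundedness of the leapfrog iterates for large $\Delta t$, which is not immediate; restricting to $\Delta t\le\Delta t_0$ would be the fallback.
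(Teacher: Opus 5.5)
Your proposal is correct and follows the paper's proof. It uses the same triangle inequality through the exact $V_\phi$-flow and Gr\"onwall for the modeling term; for the integrator term it combines a third-order local error with a one-step stability bound summed over $T/\Delta t$ steps (the paper gets the local error from the BCH/Strang-splitting expansion rather than from symmetry plus consistency).

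Your bootstrap, which keeps the leapfrog iterates inside an enlarged compact set, is more careful than the paper. The paper simply asserts that the iterates stay in $\Omega$ ``by assumption'', although the hypothesis only confines the exact trajectories. Your large-$\Delta t$ caveat is harmless. $\Delta t=T/K$ exceeds $\Delta t_0$ for only finitely many step counts $K$, and for each of these $\Phi^{(K)}_\phi$ is continuous (since $V_\phi\in C^3$ globally), hence bounded on the compact set of initial states. So you can enlarge the constant without restricting $\Delta t$.
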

\noindent\emph{Proof.} See Appendix~\ref{app:proof_hamjepa_expressivity}. \hfill $\square$

\begin{lemma}[Reciprocal singular-value pairing]
\label{lem:reciprocal_svs}
Let $A\in\mathbb R^{2d_0\times 2d_0}$ be symplectic: $A^\top J A = J$.
Then the singular values of $A$ occur in reciprocal pairs: if $\sigma$ is a singular value, so is $1/\sigma$
(with the same multiplicity).
Equivalently, $A$ cannot be contractive in all directions; any local contraction must be compensated by expansion.
\end{lemma}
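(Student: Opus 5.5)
The plan is to reduce the claim to a similarity statement. We show that $A^\top A$ is similar to its own inverse, and then read off the singular values as square roots of eigenvalues.

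First, $A$ is invertible. Taking determinants in $A^\top J A = J$ gives $\det(A)^2\det J=\det J$, and $\det J = 1 \neq 0$, so $\det A=\pm1$. (Theorem~\ref{thm:leapfrog_symplectic} already gives $\det A = 1$ in our setting, but $\pm1$ suffices here.) Invertibility ensures every singular value $\sigma$ is positive, so $1/\sigma$ makes sense.

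Next, rearrange the defining relation. Using $J^{-1}=-J=J^\top$, we get $A^\top = J A^{-1} J^{-1}$. The transpose $A^\top$ is again symplectic. To see this, invert $A^\top J A=J$ to get $A^{-1}J^{-1}A^{-\top}=J^{-1}$, hence $A J A^\top = J$. This gives $A = J A^{-\top} J^{-1}$. Now multiply the two identities:
\[
A^\top A = J A^{-1} J^{-1}\, J A^{-\top} J^{-1} = J\,(A^\top A)^{-1} J^{-1}.
\]
(Equivalently, $A^\top A$ is itself symplectic, and symplectic matrices are similar to their inverse-transpose.) Thus the symmetric positive definite matrix $M:=A^\top A$ is similar to $M^{-1}$ via $J$. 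Similar matrices have the same eigenvalues with the same algebraic multiplicities. Since $M$ is diagonalizable, geometric and algebraic multiplicities coincide, and the eigenvalues of $M^{-1}$ are the reciprocals of those of $M$. Hence the multiset of eigenvalues of $M$ is closed under $\lambda\mapsto 1/\lambda$ with multiplicities preserved.

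Finally, the singular values of $A$ are $\sqrt{\lambda_i(M)}$, so they occur in reciprocal pairs $\sigma,1/\sigma$ with equal multiplicity. Note that $\sigma=1$ may pair with itself; for the equal-multiplicity claim this is harmless, and we would remark that no parity statement is being claimed.

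For the contraction consequence, suppose $\|A v\|<\|v\|$ for all $v\neq0$. Then every singular value is strictly less than $1$, so their reciprocals are strictly greater than $1$, contradicting the pairing. Equivalently, $\prod_i\sigma_i=|\det A|=1$ forces some $\sigma_i\ge1$. If the smallest singular value is below $1$, the largest exceeds $1$, which is the stated compensation by expansion.

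The only delicate point is the multiplicity bookkeeping: similarity preserves full spectra with multiplicities, and for symmetric $M$ the algebraic and geometric multiplicities agree. We would state this explicitly rather than argue only with sets of eigenvalues.
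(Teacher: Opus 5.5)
Your proof is correct and takes the same route as the paper, which shows $(A^\top A)^{-1}=J^{-1}(A^\top A)J$ via $A^{-1}=-JA^\top J$. Both arguments conclude that $A^\top A$ is similar to its inverse and read off the reciprocal pairing of singular values. Your extra bookkeeping (invertibility via $\det A=\pm1$, preservation of multiplicities under similarity, and the argument for the non-contraction consequence) spells out steps the paper leaves implicit.
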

\noindent\emph{Proof.} See Appendix~\ref{app:proof-lem-reciprocal-svs}. \hfill $\square$

\paragraph{Long-horizon stability (shadow Hamiltonians).}
A core practical advantage of symplectic integrators is the existence of a nearby modified energy that explains stable
multi-step rollouts. We record a standard backward-error-analysis statement.

\begin{theorem}[Shadow Hamiltonian / backward error analysis (standard)]
\label{thm:shadow}
Assume $V_\phi$ is sufficiently smooth (e.g., analytic with bounded derivatives on the region visited by the numerical
trajectory).
Then for any integer $N\ge 1$ there exists a modified Hamiltonian $\widetilde{\mathcal H}^{[N]}_{\phi,\Delta t}$ such that:
\begin{enumerate}
\item (\textbf{Closeness}) $\widetilde{\mathcal H}^{[N]}_{\phi,\Delta t}(s)=\mathcal H_\phi(s)+\mathcal O(\Delta t^2)$ uniformly on the region of interest.
\item (\textbf{Modified-flow representation}) One leapfrog step equals the time-$\Delta t$ flow of
$\widetilde{\mathcal H}^{[N]}_{\phi,\Delta t}$ up to local error $\mathcal O(\Delta t^{N+1})$.
\item (\textbf{Near-conservation}) Along the leapfrog iterates $(s_k)$, the modified energy
$\widetilde{\mathcal H}^{[N]}_{\phi,\Delta t}(s_k)$ is nearly conserved, and the original energy error
$\mathcal H_\phi(s_k)-\mathcal H_\phi(s_0)$ remains bounded over long horizons when $\Delta t$ is small.
\end{enumerate}
\end{theorem}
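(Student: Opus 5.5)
The plan is to follow the standard backward-error-analysis route of Hairer--Lubich--Wanner, applied to leapfrog viewed as a composition of exact flows. First, write one leapfrog step \eqref{eq:hamjepa-leapfrog} as the Strang splitting
\[
\Psi_{\phi,\Delta t}=\varphi^{V}_{\Delta t/2}\circ\varphi^{T}_{\Delta t}\circ\varphi^{V}_{\Delta t/2}.
\]
Here $\varphi^{V}_t(q,p)=(q,p-t\nabla V_\phi(q))$ and $\varphi^{T}_t(q,p)=(q+tp,p)$ are the exact time-$t$ flows of the Hamiltonians $V_\phi(q)$ and $T(p)$. Each factor is therefore an exact Hamiltonian flow; this is the same observation that underlies Theorem~\ref{thm:leapfrog_symplectic}.

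Next, apply the symmetric Baker--Campbell--Hausdorff formula to the Lie derivatives (equivalently, to the Poisson-bracket Lie algebra). This shows that the formal generator of the composition is $\Delta t\,\widetilde{\mathcal H}$ with
\[
\widetilde{\mathcal H}=T+V_\phi+\Delta t^2\Big(\tfrac1{12}\{T,\{T,V_\phi\}\}-\tfrac1{24}\{V_\phi,\{V_\phi,T\}\}\Big)+\Delta t^4(\cdots)+\cdots .
\]
Every coefficient is an iterated Poisson bracket of $T$ and $V_\phi$. Two consequences follow. Because all terms are brackets of Hamiltonians, the truncated modified vector field is Hamiltonian, with generator $\widetilde{\mathcal H}^{[N]}_{\phi,\Delta t}$ obtained by keeping terms up to order $\Delta t^{N-1}$. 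Because the splitting is symmetric (Corollary~\ref{cor:hamjepa-reversible}), only even powers of $\Delta t$ appear, which gives the closeness claim (1).

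For claim (2), the plan is to prove by the standard Taylor-coefficient matching argument that the time-$\Delta t$ flow of the truncated $\widetilde{\mathcal H}^{[N]}$ agrees with $\Psi_{\phi,\Delta t}$ up to $\mathcal O(\Delta t^{N+1})$. The remainder is controlled by bounds on derivatives of $V_\phi$ on the region visited, which is why the smoothness hypothesis is needed.

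For claim (3), combine two facts. The flow of $\widetilde{\mathcal H}^{[N]}$ conserves $\widetilde{\mathcal H}^{[N]}$ exactly, and by claim (2) one step changes $\widetilde{\mathcal H}^{[N]}$ by at most $\mathcal O(\Delta t^{N+1})$. Telescoping over $k$ steps then gives
\[
|\widetilde{\mathcal H}^{[N]}(s_k)-\widetilde{\mathcal H}^{[N]}(s_0)|\le Ck\,\Delta t^{N+1}.
\]
Adding the closeness estimate from claim (1) then yields
\[
|\mathcal H_\phi(s_k)-\mathcal H_\phi(s_0)|\le C\Delta t^2+Ck\Delta t^{N+1},
\]
which stays bounded on horizons $k\Delta t\lesssim \Delta t^{-N+1}$.

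The main obstacle is making the constants uniform. For the exponentially long times claimed by the phrase ``long horizons,'' one has to optimize $N\sim 1/\Delta t$ using Cauchy estimates for analytic $V_\phi$ on a complex neighborhood, which gives errors of size $e^{-c/\Delta t}$. The plan is to cite this step rather than reprove it, since the statement is labeled standard. Even at fixed $N$, some care is needed to ensure that the numerical trajectory remains in the region where the derivative bounds hold; this will be assumed as part of the hypothesis.
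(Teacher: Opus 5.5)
Your proposal is correct and follows the same route as the paper's sketch. Both write leapfrog as the symmetric composition $\exp(\tfrac{\Delta t}{2}L_V)\exp(\Delta t L_T)\exp(\tfrac{\Delta t}{2}L_V)$, apply BCH with commutators identified as Poisson brackets (so only even powers appear), truncate to get the $\mathcal O(\Delta t^{N+1})$ local error, and cite the classical analytic estimates for exponentially long times. Your explicit $\Delta t^2$ coefficient, which evaluates to $\tfrac1{12}p^\top\nabla^2V_\phi(q)\,p-\tfrac1{24}\|\nabla V_\phi(q)\|^2$, and your telescoping bound for polynomially long horizons are accurate details that the paper's sketch leaves implicit.
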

\noindent\emph{Proof (sketch via BCH/backward error analysis).} See Appendix~\ref{app:proof-thm-shadow}. \hfill $\square$

\paragraph{Anti-collapse: what each ingredient does (and does not do).}
HamJEPA uses an \emph{encoder-side} non-degeneracy bridge that avoids isotropic Gaussianity.
It is useful to separate (i) scale control, (ii) rank/volume control, and (iii) spike control.

\begin{lemma}[Scale budgets do not prevent rank collapse]
\label{lem:budget_not_rank}
Fix $c>0$ and consider PSD matrices $\Sigma\succeq 0$ with $\mathrm{tr}(\Sigma)=c$.
There exist such $\Sigma$ with rank $1$ (hence complete collapse except for one direction), e.g.
$\Sigma=\mathrm{diag}(c,0,\dots,0)$.
\end{lemma}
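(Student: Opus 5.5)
The proof is a direct verification of the explicit example $\Sigma=\mathrm{diag}(c,0,\dots,0)$. We check three properties, each immediate from the definitions.

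First, $\Sigma$ is PSD. It is diagonal with nonnegative diagonal entries $c>0$ and $0$. For any $x\in\mathbb R^d$ we have $x^\top\Sigma x=c\,x_1^2\ge 0$.

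Second, $\mathrm{tr}(\Sigma)=c$, since this is just the sum of the diagonal entries. So the scale budget is satisfied exactly. In the language of the HamJEPA budget \eqref{eq:hamjepa-Lbudget}, a centered code $q$ with this covariance has $\mathbb E\|q\|_2^2=\mathrm{tr}(\Sigma)=c$. Choosing $c=d_0\alpha_q$ therefore makes $\mathcal L_{\text{budget}}$'s $q$-term vanish.

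Third, $\rank\Sigma=1$. Its image is $\mathrm{span}(e_1)$, so it has exactly one nonzero eigenvalue, $c$.

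To make the ``complete collapse except for one direction'' interpretation concrete, we exhibit a random vector realizing this covariance. Take $Z=\sqrt{c}\,\xi\,e_1$ with $\xi$ a scalar variable of mean $0$ and variance $1$, for instance Rademacher or standard Gaussian. Then $\mathrm{Cov}(Z)=c\,e_1e_1^\top=\Sigma$, and $Z$ is supported on a line.

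Optionally, the statement generalizes to every rank $r\le d$ using $\Sigma=\tfrac{c}{r}\sum_{i\le r}e_ie_i^\top$. This motivates the subsequent log-det and participation-ratio floors. Indeed, $\log\det\Sigma=-\infty$ for the rank-one example, so only the regularized $\Sigma_Y$ of \eqref{eq:hamjepa-SigmaY} stays finite.

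There is no real obstacle. The only point needing care is that the lemma concerns the constraint set alone, not any optimization, so exhibiting one feasible rank-one element suffices.
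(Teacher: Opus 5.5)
Your proposal is correct and matches the paper's proof, which simply verifies that $\mathrm{diag}(c,0,\dots,0)$ is PSD, has trace $c$, and has rank $1$. Your extras (the line-supported random vector, the calibration $c=d_0\alpha_q$, the rank-$r$ generalization) are correct but not needed; the remark that $\log\det\Sigma=-\infty$ implicitly assumes $d\ge 2$.
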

\noindent\emph{Proof.} See Appendix~\ref{app:proof-lem-budget-not-rank}. \hfill $\square$

\begin{lemma}[Log-det alone does not prevent ``one-spike'' spectra]
\label{lem:logdet_spike}
Fix $k\ge 2$ and $\tau\in\mathbb R$.
There exist positive definite $\Sigma\in\mathbb R^{k\times k}$ with $\frac{1}{k}\log\det(\Sigma)=\tau$
but with $\mathrm{PR}(\Sigma)$ arbitrarily close to $1$ (spike regime).
\end{lemma}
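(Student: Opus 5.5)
The plan is an explicit diagonal construction with one large eigenvalue and $k-1$ small ones, tuned so that the log-determinant is pinned at $k\tau$ while the participation ratio tends to $1$. Fix $t>0$ large and set
\[
\Sigma_t := \mathrm{diag}\big(\lambda_1(t),\lambda_2(t),\dots,\lambda_2(t)\big),
\qquad
\lambda_1(t) = e^{k\tau}\,t^{k-1},\quad \lambda_2(t)=t^{-1},
\]
with $\lambda_2$ repeated $k-1$ times.

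\textbf{Step 1: positivity and the log-det constraint.} Both eigenvalues are positive, so $\Sigma_t\succ 0$ for every $t>0$. The determinant is
\[
\det\Sigma_t=\lambda_1\lambda_2^{k-1}=e^{k\tau}\,t^{k-1}\,t^{-(k-1)}=e^{k\tau},
\]
so $\tfrac1k\log\det\Sigma_t=\tau$ for every $t$.

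\textbf{Step 2: the participation ratio.} I would apply the eigenvalue form \eqref{eq:hamjepa-PR} and divide numerator and denominator by $\lambda_1^2$:
\[
\mathrm{PR}(\Sigma_t)=\frac{(\lambda_1+(k-1)\lambda_2)^2}{\lambda_1^2+(k-1)\lambda_2^2}
=\frac{(1+(k-1)r)^2}{1+(k-1)r^2},
\qquad r:=\lambda_2/\lambda_1 = e^{-k\tau}t^{-k}.
\]
Since $k\ge 2$, $r\to 0$ as $t\to\infty$, and the right-hand side is continuous in $r$ with value $1$ at $r=0$. Hence $\mathrm{PR}(\Sigma_t)\to 1$.

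\textbf{Step 3: quantitative version.} To say "arbitrarily close" explicitly, I would expand the ratio to get $\mathrm{PR}(\Sigma_t)\le 1+3(k-1)r$ for small $r$. Given $\varepsilon>0$, one can then pick $t$ so that $\mathrm{PR}(\Sigma_t)<1+\varepsilon$. The lower bound $\mathrm{PR}\ge 1$ is just Cauchy–Schwarz.

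There is no real obstacle here. The only care needed is to keep the determinant exactly fixed while sending the eigenvalue ratio to zero, and the choice $\lambda_1\propto t^{k-1}$, $\lambda_2=t^{-1}$ does precisely that.
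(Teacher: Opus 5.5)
Your proof is correct and matches the paper's construction: setting $t=1/\varepsilon$ gives the paper's eigenvalues $\lambda_2=\cdots=\lambda_k=\varepsilon$ and $\lambda_1=e^{k\tau}\varepsilon^{-(k-1)}$. Dividing by $\lambda_1^2$ and using the explicit bound $\mathrm{PR}\le 1+3(k-1)r$ (valid once $(k-1)r\le 1$) simply makes rigorous the paper's informal limit $\lambda_1^2/\lambda_1^2\to 1$.
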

\noindent\emph{Proof.} Appendix~\ref{app:proof-lem-logdet-spike}. \hfill $\square$

\begin{lemma}[Participation ratio alone does not fix scale]
\label{lem:pr_not_scale}
For any PSD $\Sigma\succeq 0$ and any scalar $a>0$,
$\mathrm{PR}(a\Sigma)=\mathrm{PR}(\Sigma)$.
Thus a PR constraint alone does not prevent collapse by uniformly shrinking all eigenvalues.
\end{lemma}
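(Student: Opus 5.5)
The claim is that the participation ratio is invariant under scaling, so the plan is to check this scale invariance directly from the definition \eqref{eq:hamjepa-PR} and then exhibit a collapsing family. Write $\lambda_1,\dots,\lambda_k\ge 0$ for the eigenvalues of $\Sigma$. The eigenvalues of $a\Sigma$ are then $a\lambda_i$. Equivalently, since trace is linear and $(a\Sigma)^2=a^2\Sigma^2$, we have $\mathrm{tr}(a\Sigma)=a\,\mathrm{tr}(\Sigma)$ and $\mathrm{tr}((a\Sigma)^2)=a^2\,\mathrm{tr}(\Sigma^2)$. Therefore
\[
\mathrm{PR}(a\Sigma)=\frac{a^2\big(\mathrm{tr}\,\Sigma\big)^2}{a^2\,\mathrm{tr}(\Sigma^2)}=\mathrm{PR}(\Sigma).
\]
In other words, $\mathrm{PR}$ is homogeneous of degree zero.

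The only subtlety is the degenerate case $\Sigma=0$, where $\mathrm{tr}(\Sigma^2)=0$ and the ratio is undefined. I will either restrict to $\Sigma\neq 0$, as is implicit in \eqref{eq:hamjepa-PR}, or note that the convention is preserved because $a\Sigma=0$ if and only if $\Sigma=0$. For $\Sigma\ne0$ we have $\mathrm{tr}(\Sigma^2)=\sum_i\lambda_i^2>0$, so the division is legitimate and the cancellation of $a^2$ is valid because $a>0$.

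For the interpretive claim, fix any $\Sigma\succ 0$ with $\mathrm{PR}(\Sigma)\ge r_0$, for example $\Sigma=I_k$ with $\mathrm{PR}=k$. Consider the family $a\Sigma$ with $a\to0^+$. Every member satisfies the PR floor, so $\mathcal L_{\mathrm{pr}}=0$, yet $\mathrm{tr}(a\Sigma)\to 0$, meaning all variance shrinks to zero (uniform collapse). A PR constraint alone therefore cannot exclude this collapse; it must be paired with a scale budget such as \eqref{eq:hamjepa-Lbudget}. There is no real obstacle here; the only care needed is the $\Sigma=0$ convention.
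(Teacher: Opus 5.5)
Your proof is correct and is the same argument as the paper's: the eigenvalues of $a\Sigma$ are $a\lambda_i$, so the factor $a^2$ cancels between numerator and denominator of $\mathrm{PR}$. Your handling of the degenerate case $\Sigma=0$ and the explicit collapsing family $aI_k$ with $a\to 0^+$ are sound additions that the paper leaves implicit.
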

\noindent\emph{Proof.} See Appendix~\ref{app:proof-lem-pr-not-scale}. \hfill $\square$

\begin{lemma}[Projected log-det detects full-space rank collapse]
\label{lem:proj_logdet_rank_certificate}
Let $Q\in\mathbb{R}^{d_0}$ be a zero-mean random vector with covariance $\Sigma_Q\succeq 0$ and let
$R\in\mathbb{R}^{d_0\times k}$ have full column rank $k$ (e.g., orthonormal columns as in Sec.~4.3).
Define the population projected covariance
\[
\Sigma_{Y,0} := R^\top \Sigma_Q R \in \mathbb{R}^{k\times k}.
\]
Then
\[
\rank(\Sigma_{Y,0}) \le \rank(\Sigma_Q).
\]
In particular, if $\rank(\Sigma_Q)<k$ then $\Sigma_{Y,0}$ is singular and therefore
$\det(\Sigma_{Y,0})=0$ and $\log\det(\Sigma_{Y,0})=-\infty$.

Moreover, if one uses the ridge-regularized version from \eqref{eq:hamjepa-SigmaY},
\[
\Sigma_Y := \Sigma_{Y,0} + \varepsilon I_k,\qquad \varepsilon>0,
\]
and if $r:=\rank(\Sigma_{Y,0})$, then $\Sigma_Y$ has exactly $k-r$ eigenvalues equal to $\varepsilon$ and hence
\[
\log\det(\Sigma_Y) = (k-r)\log\varepsilon + \sum_{i=1}^{r}\log(\lambda_i+\varepsilon),
\]
where $\lambda_1,\dots,\lambda_r>0$ are the nonzero eigenvalues of $\Sigma_{Y,0}$.
\end{lemma}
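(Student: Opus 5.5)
The plan is to prove the rank inequality by a direct factorization argument, then handle the ridge-regularized statement by simultaneous diagonalization.

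For the rank bound, I write $\Sigma_{Y,0}=R^\top\Sigma_Q R$ and use $\rank(AB)\le\min(\rank A,\rank B)$ twice: $\rank(R^\top\Sigma_Q R)\le\rank(\Sigma_Q R)\le\rank(\Sigma_Q)$. A cleaner route, which I would present, goes through kernels. If $\Sigma_Q=LL^\top$ (for instance $L=\Sigma_Q^{1/2}$), then $\Sigma_{Y,0}=(R^\top L)(R^\top L)^\top$, so $\rank(\Sigma_{Y,0})=\rank(R^\top L)\le\rank(L)=\rank(\Sigma_Q)$. The "in particular" clause is then immediate. If $\rank(\Sigma_Q)<k$, the $k\times k$ matrix $\Sigma_{Y,0}$ has rank $<k$, so it is singular, its determinant is $0$, and $\log\det=-\infty$ under the convention $\log 0=-\infty$. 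Full column rank of $R$ is not actually needed for this direction; I would remark on that but not rely on it.

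For the ridge part, $\Sigma_{Y,0}$ is symmetric PSD, so I use a spectral decomposition $\Sigma_{Y,0}=U\Lambda U^\top$ with $U$ orthogonal and $\Lambda=\mathrm{diag}(\lambda_1,\dots,\lambda_r,0,\dots,0)$, where $\lambda_i>0$ and $r=\rank(\Sigma_{Y,0})$. Then $\Sigma_Y=U(\Lambda+\varepsilon I_k)U^\top$, whose eigenvalues are $\lambda_i+\varepsilon$ for $i\le r$ and $\varepsilon$ with multiplicity $k-r$. The count is exactly $k-r$ because each $\lambda_i+\varepsilon>\varepsilon$ for $i\le r$. Taking the log of the product of eigenvalues gives the stated formula for $\log\det(\Sigma_Y)$.

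The main point requiring care is the phrase "exactly $k-r$ eigenvalues equal to $\varepsilon$". It relies on the nonzero eigenvalues being strictly positive, which holds by positive semidefiniteness. It also relies on multiplicity of the zero eigenvalue matching $k-\rank$, which holds by symmetry (algebraic equals geometric multiplicity). I would state both facts explicitly.

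As a closing interpretive remark, I would note what the formula shows: under rank collapse, $\ell_{\mathrm{vol}}$ contains the term $\frac{k-r}{k}\log\varepsilon$. This term is very negative for small $\varepsilon$, so the hinge floor $\tau$ in \eqref{eq:hamjepa-Lvol} is violated.
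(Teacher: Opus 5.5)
Your proof is correct and matches the paper's argument: the paper likewise factors $\Sigma_{Y,0}=(\Sigma_Q^{1/2}R)^\top(\Sigma_Q^{1/2}R)$ to get $\rank(\Sigma_{Y,0})=\rank(\Sigma_Q^{1/2}R)\le\rank(\Sigma_Q)$, and then obtains the ridge formula by shifting every eigenvalue of $\Sigma_{Y,0}$ by $\varepsilon$. Your closing remark that the hinge floor $\tau$ is then violated sits outside the lemma and holds only when the remaining $r$ eigenvalues are bounded (e.g.\ via the trace budget) and $\varepsilon$ is small relative to $\tau$.
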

\noindent\emph{Proof.} See Appendix~\ref{app:proof_proj_logdet_rank_certificate}. \hfill $\square$

\paragraph{Projected log-det floor.}
Recall the projected covariance $\Sigma_Y$ from \eqref{eq:hamjepa-SigmaY}.
The following proposition quantifies how a log-det floor plus a trace/scale control lower-bounds the smallest eigenvalue.

\begin{proposition}[Projected log-det floor prevents rank collapse]
\label{prop:logdet_nocollapse}
Let $\Sigma_Y\in\mathbb R^{k\times k}$ be SPD with eigenvalues $\lambda_1\ge\cdots\ge\lambda_k>0$.
Suppose $\log\det(\Sigma_Y)\ge k\tau$ and $\mathrm{tr}(\Sigma_Y)\le m$ for some $m>0$.
Then
\begin{equation}
\lambda_{\min}(\Sigma_Y)\;\ge\; \exp(k\tau)\,\frac{(k-1)^{k-1}}{m^{k-1}}.
\label{eq:logdet-lmin}
\end{equation}
In particular, the representation cannot collapse onto a subspace of dimension $<k$ in the projected sense.
\end{proposition}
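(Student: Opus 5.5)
The plan is to isolate the smallest eigenvalue in the determinant and bound the product of the remaining eigenvalues from above using the trace constraint and AM--GM. Write $\det(\Sigma_Y)=\lambda_k\prod_{i=1}^{k-1}\lambda_i$ with $\lambda_k=\lambda_{\min}(\Sigma_Y)$. The hypothesis $\log\det(\Sigma_Y)\ge k\tau$ gives $\det(\Sigma_Y)\ge \exp(k\tau)$, and therefore
\[
\lambda_{\min}(\Sigma_Y)\;\ge\;\frac{\exp(k\tau)}{\prod_{i=1}^{k-1}\lambda_i}.
\]
This reduces the claim to an upper bound on $\prod_{i=1}^{k-1}\lambda_i$.

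For that bound, apply AM--GM to the $k-1$ positive eigenvalues $\lambda_1,\dots,\lambda_{k-1}$:
\[
\prod_{i=1}^{k-1}\lambda_i\le\Big(\tfrac{1}{k-1}\sum_{i=1}^{k-1}\lambda_i\Big)^{k-1}.
\]
Since $\lambda_k>0$, we have $\sum_{i=1}^{k-1}\lambda_i=\mathrm{tr}(\Sigma_Y)-\lambda_k<m$. Hence the product is at most $m^{k-1}/(k-1)^{k-1}$. Substituting into the previous display yields exactly \eqref{eq:logdet-lmin}. Keeping the refinement $m-\lambda_k$ would sharpen the bound, but it is not needed for the stated inequality.

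Two remaining points need checking. First, the degenerate case $k=1$ should be handled with the convention $0^0=1$: then $\lambda_{\min}=\det(\Sigma_Y)\ge e^{\tau}$ directly. Second, the ``in particular'' clause follows because the lower bound is strictly positive. Thus $\Sigma_Y$ has full rank $k$, so no projected direction collapses. Combined with Lemma~\ref{lem:proj_logdet_rank_certificate}, this also certifies that the unprojected covariance has rank at least $k$, modulo the $\varepsilon$-ridge caveat discussed there.

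I do not expect a genuine obstacle. The only step requiring care is the direction of the AM--GM inequality and making sure that the trace bound on all $k$ eigenvalues legitimately controls the sum of the top $k-1$ eigenvalues, which positivity of $\lambda_k$ guarantees.
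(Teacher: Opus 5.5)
Your proposal is correct and follows the paper's proof essentially step for step: isolate $\lambda_k$ in $\det(\Sigma_Y)\ge e^{k\tau}$, bound $\sum_{i<k}\lambda_i$ by $m$ using the trace constraint, and apply AM--GM to the top $k-1$ eigenvalues. Your separate treatment of the degenerate case $k=1$ (with the convention $0^0=1$) is a small addition that the paper omits.
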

\noindent\emph{Proof.} See Appendix~\ref{app:proof-prop-logdet-nocollapse}. \hfill $\square$

\paragraph{Joint spectral non-degeneracy (budget + log-det + PR).}
The next theorem formalizes the ``LogDet + PR + budget'' bridge: it rules out both needle collapse and
the one-spike ``L-shaped'' failure mode.

\begin{theorem}[Spectral non-degeneracy via joint constraints]
\label{thm:spectral_nondeg_joint}
Let $\Sigma\in\mathbb R^{k\times k}$ be SPD with eigenvalues $\lambda_1\ge\cdots\ge\lambda_k>0$.
Assume the three constraints
\begin{equation}
  \mathrm{tr}(\Sigma)=c,\qquad
  \mathrm{PR}(\Sigma)\ge r_0,\qquad
  \frac{1}{k}\log\det(\Sigma)\ge \tau,
  \label{eq:joint-constraints}
\end{equation}
for some $c>0$, $r_0\in(1,k]$, and $\tau\in\mathbb R$.
Then
\begin{equation}
  \lambda_{\max}(\Sigma)\le \frac{c}{\sqrt{r_0}};\quad
  \lambda_{\min}(\Sigma)\ge \exp(k\tau)\left(\frac{\sqrt{r_0}}{c}\right)^{k-1}.
  \label{eq:joint-bounds}
\end{equation}
Consequently, $\Sigma$ cannot realize the spike regime $\lambda_1\gg \lambda_2,\dots,\lambda_k$ and cannot approach rank collapse $\lambda_{\min}\to 0$ under the stated floors.
\end{theorem}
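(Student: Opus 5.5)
Both bounds reduce to elementary inequalities on the eigenvalues $\lambda_1\ge\cdots\ge\lambda_k>0$ of $\Sigma$. The three constraints translate to $\sum_i\lambda_i=c$, $\sum_i\lambda_i^2\le c^2/r_0$ (from $\mathrm{PR}(\Sigma)=c^2/\mathrm{tr}(\Sigma^2)\ge r_0$), and $\prod_i\lambda_i\ge \exp(k\tau)$.

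\emph{Upper bound on $\lambda_{\max}$.} I would use the trivial inequality $\lambda_1^2\le\sum_i\lambda_i^2=\mathrm{tr}(\Sigma^2)$. Combined with the PR constraint this gives $\lambda_1^2\le c^2/r_0$. Hence $\lambda_{\max}(\Sigma)\le c/\sqrt{r_0}$. This step uses only the trace and PR constraints; it is the PR analogue of the spike exclusion that Lemma~\ref{lem:logdet_spike} shows log-det alone cannot provide.

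\emph{Lower bound on $\lambda_{\min}$.} Write $\det\Sigma=\lambda_k\prod_{i<k}\lambda_i$. The crude bound suffices: each of the $k-1$ eigenvalues $\lambda_1,\dots,\lambda_{k-1}$ is at most $\lambda_1\le c/\sqrt{r_0}$ by the first step. Therefore
\[
\exp(k\tau)\le\det\Sigma\le \lambda_k\Big(\frac{c}{\sqrt{r_0}}\Big)^{k-1}.
\]
Rearranging gives $\lambda_{\min}(\Sigma)\ge\exp(k\tau)(\sqrt{r_0}/c)^{k-1}$, which is exactly the stated bound. One could sharpen this with AM--GM on $\lambda_1,\dots,\lambda_{k-1}$, as in Proposition~\ref{prop:logdet_nocollapse}, but the claimed constant only needs the spectral-radius bound.

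\emph{Consequences.} Since $r_0>1$, the ratio $\lambda_1/\mathrm{tr}(\Sigma)\le 1/\sqrt{r_0}<1$ is bounded away from $1$, which excludes the one-spike regime. The $\lambda_{\min}$ floor is a fixed positive constant depending only on $(c,r_0,\tau,k)$, so $\lambda_{\min}\to0$ is impossible under the floors.

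The only real care point is making sure the log-det constraint is used in the direction $\det\Sigma\ge e^{k\tau}$ and that $c/\sqrt{r_0}$ bounds every non-minimal eigenvalue. Both are immediate, so I expect no substantive obstacle.
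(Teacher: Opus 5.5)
Your proof is correct and matches the paper's argument step for step: $\lambda_{\max}^2\le\mathrm{tr}(\Sigma^2)\le c^2/r_0$ from the PR constraint, then $e^{k\tau}\le\det\Sigma\le\lambda_{\min}\lambda_{\max}^{k-1}$, rearranged to give the floor on $\lambda_{\min}$. One minor aside: your AM--GM route gives the factor $c/(k-1)$ in place of $c/\sqrt{r_0}$, so it improves the constant only when $\sqrt{r_0}<k-1$ and does not when $k=2$.
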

\noindent\emph{Proof.} See Appendix~\ref{app:proof-thm-spectral-nondeg-joint}. \hfill $\square$

\begin{corollary}[Bounded conditioning under joint non-degeneracy constraints]
\label{cor:condition_number}
Under the assumptions of Theorem \ref{thm:spectral_nondeg_joint}, the condition number satisfies
\[
\kappa(\Sigma) := \frac{\lambda_{\max}(\Sigma)}{\lambda_{\min}(\Sigma)}
\;\le\;
\exp(-k\tau)\left(\frac{c}{\sqrt{r_0}}\right)^k.
\]
\end{corollary}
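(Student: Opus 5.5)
This follows directly from the two bounds in Theorem~\ref{thm:spectral_nondeg_joint}. Under the joint constraints \eqref{eq:joint-constraints}, that theorem gives $\lambda_{\max}(\Sigma)\le c/\sqrt{r_0}$ and $\lambda_{\min}(\Sigma)\ge \exp(k\tau)(\sqrt{r_0}/c)^{k-1}$. Since $\Sigma$ is SPD, $\lambda_{\min}(\Sigma)>0$. The lower bound is a strictly positive quantity, so we may divide by it.

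The plan is simply to form the ratio:
\[
\kappa(\Sigma)=\frac{\lambda_{\max}(\Sigma)}{\lambda_{\min}(\Sigma)}
\le \frac{c/\sqrt{r_0}}{\exp(k\tau)\,(\sqrt{r_0}/c)^{k-1}}.
\]
Here we bound the numerator from above and the denominator from below, which is legitimate since both quantities are positive. Simplifying, $(c/\sqrt{r_0})\cdot(c/\sqrt{r_0})^{k-1}=(c/\sqrt{r_0})^{k}$. The exponential factor becomes $\exp(-k\tau)$. This is exactly the claimed bound.

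There is no real obstacle; all the substance sits in Theorem~\ref{thm:spectral_nondeg_joint}. The only care needed is to respect the direction of the inequalities and the positivity of both bounds. As a sanity check, I would note that the bound is at least $1$ whenever the constraints are consistent. By AM--GM, $\det\Sigma\le(c/k)^k$, so $\exp(k\tau)\le (c/k)^k$. Then the right-hand side is at least $(k/\sqrt{r_0})^k\ge 1$, using $r_0\le k$. This check is not needed for the proof.
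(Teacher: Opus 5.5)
Your proposal is correct and matches the paper's argument, which also obtains the bound immediately by dividing the upper bound on $\lambda_{\max}(\Sigma)$ by the positive lower bound on $\lambda_{\min}(\Sigma)$ from Theorem~\ref{thm:spectral_nondeg_joint}. Your algebra is right, and your optional sanity check that the right-hand side is at least $1$, via AM--GM and $r_0\le k$, is also valid.
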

\noindent\emph{Proof.} See Immediate by dividing the eigenvalue bounds in Theorem \ref{thm:spectral_nondeg_joint}. \hfill $\square$

\paragraph{Why symplecticity is still meaningful despite encoder regularizers.}
Symplecticity constrains the predictor map, while the budget/volume/PR constraints constrain the encoder outputs.
The next proposition states the correct conditional claim: if the encoder occupies nontrivial volume locally, then the symplectic predictor cannot destroy it.

\begin{proposition}[Symplectic predictors cannot induce collapse]
\label{prop:symplectic_cannot_collapse}
Fix $\phi$ and consider the predictor map $\Phi_\phi^{(K)}$ from \eqref{eq:hamjepa-Kstep}.
If the encoder outputs occupy a set $\mathcal U\subset\mathbb R^{2d_0}$ with positive Lebesgue volume, then
$\Phi_\phi^{(K)}(\mathcal U)$ has the same volume as $\mathcal U$.
In particular, $\Phi_\phi^{(K)}$ cannot map a positive-volume set to a set of zero volume; any global representational collapse must arise from $E_\theta$, not from the symplectic predictor.
\end{proposition}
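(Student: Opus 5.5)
The plan is to combine the global bijectivity of the predictor with its unit Jacobian determinant, and then apply the change-of-variables formula. By Corollary~\ref{cor:reversible}, the one-step map $\Psi_{\phi,\Delta t}$ is a bijection of $\mathbb R^{2d_0}$, with inverse $\Psi_{\phi,-\Delta t}$. Both maps are $C^1$, since $V_\phi\in C^2$ makes each leapfrog substep a $C^1$ shear. Hence $\Phi_\phi^{(K)}=\Psi_{\phi,\Delta t}^K$ is a $C^1$ diffeomorphism of $\mathbb R^{2d_0}$, with inverse $\Phi_\phi^{(K)}(\cdot;-\Delta t)$. By Theorem~\ref{thm:leapfrog_symplectic}, $\det D\Phi_\phi^{(K)}(s)=1$ for every $s$.

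We then apply the change-of-variables theorem for diffeomorphisms. For any measurable $\mathcal U$, the image $\Phi_\phi^{(K)}(\mathcal U)$ is measurable, because it is the preimage of $\mathcal U$ under the continuous inverse map. It satisfies
\[
\mathrm{vol}\big(\Phi_\phi^{(K)}(\mathcal U)\big)=\int_{\mathcal U}\big|\det D\Phi_\phi^{(K)}(s)\big|\,ds=\mathrm{vol}(\mathcal U).
\]
This identity also holds when the volume is infinite, by monotone convergence over an exhaustion of $\mathcal U$ by bounded sets. In particular, if $\mathrm{vol}(\mathcal U)>0$ then the image has the same positive volume, so it cannot be a null set.

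For the final interpretive clause, suppose the predicted states $\Phi_\phi^{(K)}(s_a)$ concentrate on a null set $N$. Then the encoder outputs lie in $(\Phi_\phi^{(K)})^{-1}(N)$, which is again null by the same formula applied to the inverse map, whose determinant is also $1$. So any degeneracy of the predicted states is inherited from the encoder $E_\theta$. The same reasoning applies to pushforward measures: if the law of $s_a$ is absolutely continuous, then so is the law of the predicted states, consistent with Corollary~\ref{cor:entropy_preservation}.

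The main obstacle is justifying the diffeomorphism property globally rather than only locally. I would handle it directly from the explicit leapfrog structure. Each substep has the form $(q,p)\mapsto(q,p-\tfrac{\Delta t}{2}\nabla V_\phi(q))$ or $(q,p)\mapsto(q+\Delta t\,p,p)$. Each is a $C^1$ bijection whose explicit inverse is obtained by flipping the sign, so it is a global diffeomorphism with unit Jacobian determinant, being block-triangular with identity blocks. Composing these substeps gives the global claim directly, without needing the inverse function theorem.
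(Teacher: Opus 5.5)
Your proposal is correct and takes the same route as the paper: $\Phi_\phi^{(K)}$ is a global $C^1$ diffeomorphism with $\det D\Phi_\phi^{(K)}\equiv 1$, and the change-of-variables formula gives $\mathrm{vol}(\Phi_\phi^{(K)}(\mathcal U))=\mathrm{vol}(\mathcal U)$. Your additions are sound: the explicit substep inverses for global bijectivity, the infinite-volume case, and the pullback argument for the interpretive clause. One small caveat is that, for Lebesgue-measurable $\mathcal U$, measurability of the image follows because the $C^1$ map $\Phi_\phi^{(K)}$ sends null sets to null sets, not merely because its inverse is continuous.
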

\noindent\emph{Proof.} See Appendix~\ref{app:proof-prop-symplectic-cannot-collapse}. \hfill $\square$

\paragraph{Takeaway.}
Theorems~\ref{thm:flow_symplectic} and \ref{thm:leapfrog_symplectic} ensure the predictor map is symplectic and volume-preserving (hence non-contracting in a precise geometric sense), with exact time reversibility (Corollary~\ref{cor:reversible}) and reciprocal singular-value pairing (Lemma~\ref{lem:reciprocal_svs}).
Since symplecticity alone does not prevent encoder collapse, HamJEPA uses a fixed-units scale budget plus projected log-det and PR floors; Theorem~\ref{thm:spectral_nondeg_joint} formalizes why the \emph{combination} rules out both rank collapse and one-spike pathologies without enforcing isotropic Gaussian marginals.

\section{Mathematical Proofs}

\subsection{Proofs for Section~\ref{sec:price-of-isotropy-theory}}
\label{app:proofs}

\subsubsection{Proof of Lemma~\ref{lem:spectralV}}
\label{app:proof-lem-spectralV}
Let $\Sigma\succeq 0$ and $H\succ 0$.
For $w\in\mathcal{W}_H$, define $u := H^{-1/2}w$. Then
\[
  w^\top H^{-1}w = \|H^{-1/2}w\|_2^2 = \|u\|_2^2 \le 1,
\]
so $\mathcal{W}_H = \{H^{1/2}u:\ \|u\|_2\le 1\}$.
Substituting $w=H^{1/2}u$ into the objective yields
\[
  w^\top \Sigma w
  = u^\top \big(H^{1/2}\Sigma H^{1/2}\big)u.
\]
Therefore
\[
  V(\Sigma)
  = \sup_{\|u\|_2\le 1} u^\top \big(H^{1/2}\Sigma H^{1/2}\big)u
  = \lambda_{\max}\!\big(H^{1/2}\Sigma H^{1/2}\big),
\]
where the last equality is the Rayleigh quotient characterization of the top eigenvalue.
If $u_{\max}$ is a unit top eigenvector of $H^{1/2}\Sigma H^{1/2}$, then the corresponding
$w_{\max}:=H^{1/2}u_{\max}$ satisfies $w_{\max}^\top H^{-1}w_{\max}=1$ and attains the supremum.

\subsubsection{Proof of Theorem~\ref{thm:minimax}}
\label{app:proof-thm-minimax}
Let $\Sigma\succeq 0$ and define
\[
  A := H^{1/2}\Sigma H^{1/2} \succeq 0.
\]
Then $\mathrm{tr}(H\Sigma)=\mathrm{tr}(A)$, so the constraint $\mathrm{tr}(H\Sigma)=c$ becomes $\mathrm{tr}(A)=c$.
By Lemma~\ref{lem:spectralV}, the objective is $V(\Sigma)=\lambda_{\max}(A)$.
Thus \eqref{eq:minimax-prob-theory} is equivalent to
\[
  \min_{A\succeq 0:\ \mathrm{tr}(A)=c} \lambda_{\max}(A).
\]
Let the eigenvalues of $A$ be $\lambda_1\ge \cdots \ge \lambda_d\ge 0$.
Then $\sum_{i=1}^d \lambda_i = c$, hence $\lambda_1 \ge c/d$.
Therefore, for every feasible $A$,
\[
  \lambda_{\max}(A) \ge \frac{c}{d}.
\]
Equality $\lambda_{\max}(A)=c/d$ holds if and only if $\lambda_1=\cdots=\lambda_d=c/d$, i.e.,
$A=(c/d)I$, which is unique.
Mapping back,
\[
  \Sigma^\star = H^{-1/2} A H^{-1/2} = \frac{c}{d} H^{-1},
\]
and the optimum value is $c/d$.

\subsubsection{Proof of Theorem~\ref{thm:maxent}}
\label{app:proof-thm-maxent}
Let $Z$ have a density, $\mathbb{E}[Z]=0$, and satisfy $\mathbb{E}[Z^\top H Z]=c$.
Define the invertible linear change of variables
\[
  Y := H^{1/2} Z.
\]
Then $\mathbb{E}[Y]=0$ and
\[
  \mathbb{E}\|Y\|_2^2
  = \mathbb{E}[Z^\top H Z]
  = c.
\]
By the change-of-variables formula for differential entropy,
\[
  h(Z) = h(Y) + \log|\det(H^{-1/2})|
       = h(Y) - \tfrac{1}{2}\log\det(H).
\]
Hence maximizing $h(Z)$ under \eqref{eq:maxent-constraint-theory} is equivalent to maximizing $h(Y)$ under $\mathbb{E}\|Y\|_2^2=c$.

Let $\Sigma_Y := \mathrm{Cov}(Y)\succeq 0$.
A standard entropy bound states that for any random vector with covariance $\Sigma_Y$,
\[
  h(Y) \le \frac{1}{2}\log\Big((2\pi e)^d \det(\Sigma_Y)\Big),
\]
with equality if and only if $Y$ is Gaussian with covariance $\Sigma_Y$.

Next, the constraint $\mathbb{E}\|Y\|_2^2=c$ implies $\mathrm{tr}(\Sigma_Y)=c$ (since $\mathbb{E}[Y]=0$).
For any PSD matrix with eigenvalues $\nu_1,\dots,\nu_d\ge 0$ and fixed sum $\sum_i \nu_i=c$,
the arithmetic--geometric mean inequality gives
\[
  \det(\Sigma_Y) = \prod_{i=1}^d \nu_i
  \le \Big(\frac{1}{d}\sum_{i=1}^d \nu_i\Big)^d
  = \Big(\frac{c}{d}\Big)^d,
\]
with equality if and only if $\nu_1=\cdots=\nu_d=c/d$, i.e., $\Sigma_Y=(c/d)I$.

Combining the two inequalities yields
\[
  h(Y) \le \frac{1}{2}\log\Big((2\pi e)^d (c/d)^d\Big),
\]
with equality if and only if $Y\sim \mathcal{N}(0,(c/d)I)$.
Therefore the unique maximizer is $Y\sim \mathcal{N}(0,(c/d)I)$, and hence
\[
  Z = H^{-1/2}Y \sim \mathcal{N}\!\Big(0,\frac{c}{d}H^{-1}\Big),
\]
as claimed.

\subsubsection{Proof of Theorem~\ref{thm:price}}
\label{app:proof-thm-price}
Under Euclidean isotropy $\Sigma=\sigma^2 I$, the budget $\mathrm{tr}(H\Sigma)=c$ forces
\[
  \sigma^2 \mathrm{tr}(H) = c
  \quad\Rightarrow\quad
  \sigma^2 = \frac{c}{\mathrm{tr}(H)},
\]
so $\Sigma_{\mathrm{iso}}=\frac{c}{\mathrm{tr}(H)}I$.

Using Lemma~\ref{lem:spectralV},
\[
  V(\Sigma_{\mathrm{iso}})
  = \lambda_{\max}\!\big(H^{1/2}\Sigma_{\mathrm{iso}} H^{1/2}\big)
  = \lambda_{\max}\!\Big(\frac{c}{\mathrm{tr}(H)} H\Big)
  = \frac{c}{\mathrm{tr}(H)}\lambda_{\max}(H),
\]
which is \eqref{eq:V-iso-theory}.
From Theorem~\ref{thm:minimax}, $V(\Sigma^\star)=c/d$, so the ratio is
\[
  \frac{V(\Sigma_{\mathrm{iso}})}{V(\Sigma^\star)}
  = \frac{\frac{c}{\mathrm{tr}(H)}\lambda_{\max}(H)}{c/d}
  = \frac{d\,\lambda_{\max}(H)}{\mathrm{tr}(H)}
  = \rho(H).
\]
Since $\lambda_{\max}(H)\ge \mathrm{tr}(H)/d$ for any SPD $H$, we have $\rho(H)\ge 1$.
Also $\lambda_{\max}(H)\le \mathrm{tr}(H)$ implies $\rho(H)\le d$.
Finally, $\rho(H)=1$ holds if and only if $\lambda_{\max}(H)=\mathrm{tr}(H)/d$, which occurs if and only if all eigenvalues of $H$ are equal, i.e.\ $H\propto I$.

\subsubsection{Proof of Proposition~\ref{prop:metric-isotropy-equivalence}}
\label{app:proof_metric-isotropy-equivalence}

Let $\mu := \mathbb{E}[Z]$ so that $\Sigma = \mathbb{E}[(Z-\mu)(Z-\mu)^\top]$.
Define $Z' := H^{1/2}Z$ and note $\mathbb{E}[Z'] = H^{1/2}\mu$.
Then
\begin{align*}
\mathrm{Cov}(Z')
&= \mathbb{E}\big[(Z' - \mathbb{E}[Z'])(Z' - \mathbb{E}[Z'])^\top\big] \\
&= \mathbb{E}\big[H^{1/2}(Z-\mu)(Z-\mu)^\top(H^{1/2})^\top\big] \\
&= H^{1/2}\,\mathbb{E}\big[(Z-\mu)(Z-\mu)^\top\big]\,(H^{1/2})^\top \\
&= H^{1/2}\Sigma H^{1/2},
\end{align*}
where we used that $H^{1/2}$ is symmetric for SPD $H$.

For the equivalence: if $\Sigma=\alpha H^{-1}$ then
\[
\mathrm{Cov}(Z') = H^{1/2}(\alpha H^{-1})H^{1/2} = \alpha I.
\]
Conversely, if $\mathrm{Cov}(Z')=\alpha I$, multiply left and right by $H^{-1/2}$ to obtain
\[
\Sigma = H^{-1/2}(\alpha I)H^{-1/2} = \alpha H^{-1}.
\]
\hfill$\square$

\subsubsection{Proof of Proposition~\ref{prop:quadratic_hamiltonian_lift}}
\label{app:proof-prop-quadratic-hamiltonian-lift}

By definition,
\[
\mathcal H_H(q,p)=\tfrac12 q^\top H q+\tfrac12 \|p\|_2^2,
\]
so
\[
\pi_H(q,p)
=
\frac{1}{Z_H}
\exp\!\Big(-\frac{d}{2c}q^\top H q\Big)
\exp\!\Big(-\frac{d}{2c}\|p\|_2^2\Big).
\]
Hence the density factorizes into a \(q\)-term and a \(p\)-term, so \(q\) and \(p\) are independent.

The partition function also factorizes:
\[
Z_H
=
\Big(\int_{\mathbb R^d}\exp\!\big(-\tfrac{d}{2c}q^\top H q\big)\,dq\Big)
\Big(\int_{\mathbb R^d}\exp\!\big(-\tfrac{d}{2c}\|p\|_2^2\big)\,dp\Big).
\]
Using the standard Gaussian integral
\[
\int_{\mathbb R^d}\exp\!\big(-\tfrac12 x^\top A x\big)\,dx
=
(2\pi)^{d/2}\det(A)^{-1/2}
\qquad (A\succ 0),
\]
with \(A=(d/c)H\) for the \(q\)-integral and \(A=(d/c)I\) for the \(p\)-integral, we obtain
\[
Z_H
=
\left(2\pi \frac{c}{d}\right)^{d}\det(H)^{-1/2}.
\]

Therefore
\[
\pi_H(q,p)
=
\underbrace{
\left(2\pi \frac{c}{d}\right)^{-d/2}\det(H)^{1/2}
\exp\!\Big(-\frac{d}{2c}q^\top H q\Big)
}_{\text{\(q\)-marginal}}
\cdot
\underbrace{
\left(2\pi \frac{c}{d}\right)^{-d/2}
\exp\!\Big(-\frac{d}{2c}\|p\|_2^2\Big)
}_{\text{\(p\)-marginal}}.
\]
Thus \(q\) is Gaussian with precision matrix \((d/c)H\), hence covariance \((c/d)H^{-1}\), and \(p\) is Gaussian with precision \((d/c)I\), hence covariance \((c/d)I\):
\[
q\sim \mathcal N\!\Big(0,\frac{c}{d}H^{-1}\Big),
\qquad
p\sim \mathcal N\!\Big(0,\frac{c}{d}I\Big).
\]

Next,
\[
\mathbb E_{\pi_H}[q^\top H q]
=
\mathrm{tr}\!\Big(H\,\mathrm{Cov}(q)\Big)
=
\mathrm{tr}\!\Big(H\,\frac{c}{d}H^{-1}\Big)
=
\frac{c}{d}\,\mathrm{tr}(I)
=
c.
\]
Similarly,
\[
\mathbb E_{\pi_H}[\|p\|_2^2]
=
\mathrm{tr}\!\big(\mathrm{Cov}(p)\big)
=
\mathrm{tr}\!\Big(\frac{c}{d}I\Big)
=
c.
\]
Hence
\[
\mathbb E_{\pi_H}[\mathcal H_H(q,p)]
=
\frac12\,\mathbb E_{\pi_H}[q^\top H q]
+
\frac12\,\mathbb E_{\pi_H}[\|p\|_2^2]
=
\frac12 c+\frac12 c
=
c.
\]

Finally, the \(q\)-marginal is exactly the Gaussian
\[
\mathcal N\!\Big(0,\frac{c}{d}H^{-1}\Big),
\]
which is the minimax-optimal covariance from Theorem~\ref{thm:minimax} and the maximum-entropy law from Theorem~\ref{thm:maxent}.
\hfill$\square$

\subsubsection{Proof of Proposition~\ref{prop:oracle_family_spans_spd}}
\label{app:proof-prop-oracle-family-spans-spd}

Let \(\Sigma\succ 0\) be arbitrary and define
\[
H := \frac{c}{d}\Sigma^{-1}.
\]
Since \(\Sigma^{-1}\succ 0\), we have \(H\succ 0\).
Then
\[
\Sigma^\star(H)
=
\frac{c}{d}H^{-1}
=
\frac{c}{d}
\left(\frac{c}{d}\Sigma^{-1}\right)^{-1}
=
\frac{c}{d}\cdot \frac{d}{c}\Sigma
=
\Sigma.
\]
Thus every \(\Sigma\succ 0\) arises as \(\Sigma^\star(H)\) for some \(H\succ 0\), so the family
\(\{\Sigma^\star(H):H\succ 0\}\) is exactly the SPD cone.
\hfill$\square$

\subsubsection{Proof of Theorem~\ref{thm:no_universal_target}}
\label{app:proof_no_universal_target}

First note that $\Sigma_M(H)$ is feasible for the $H$-budget:
\[
  \mathrm{tr}\!\left(H\Sigma_M(H)\right)
  =
  \mathrm{tr}\!\left(
    H\,\frac{c}{\mathrm{tr}(HM)}M
  \right)
  =
  c.
\]
By the spectral form of the worst-case task variance,
\[
  V_H(\Sigma)
  =
  \lambda_{\max}\!\left(H^{1/2}\Sigma H^{1/2}\right).
\]
Therefore
\[
  V_H(\Sigma_M(H))
  =
  \lambda_{\max}\!\left(
    H^{1/2}
    \frac{c}{\mathrm{tr}(HM)}M
    H^{1/2}
  \right)
  =
  \frac{c}{\mathrm{tr}(HM)}
  \lambda_{\max}\!\left(H^{1/2}MH^{1/2}\right).
\]
For the oracle covariance
\[
  \Sigma^\star(H)=\frac{c}{d}H^{-1},
\]
we have
\[
  H^{1/2}\Sigma^\star(H)H^{1/2}
  =
  \frac{c}{d}I,
\]
so
\[
  V_H(\Sigma^\star(H))=\frac{c}{d}.
\]
Dividing the two expressions gives
\[
  \frac{V_H(\Sigma_M(H))}{V_H(\Sigma^\star(H))}
  =
  \frac{
    \frac{c}{\mathrm{tr}(HM)}
    \lambda_{\max}(H^{1/2}MH^{1/2})
  }{
    c/d
  }
  =
  \frac{d\,\lambda_{\max}(H^{1/2}MH^{1/2})}{\mathrm{tr}(HM)}.
\]
Since
\[
  \mathrm{tr}(HM)
  =
  \mathrm{tr}\!\left(H^{1/2}MH^{1/2}\right),
\]
writing
\[
  A := H^{1/2}MH^{1/2}\succ0
\]
gives
\[
  \frac{V_H(\Sigma_M(H))}{V_H(\Sigma^\star(H))}
  =
  \frac{d\,\lambda_{\max}(A)}{\mathrm{tr}(A)}.
\]
For any $A\succ0$,
\[
  \lambda_{\max}(A)
  \le
  \mathrm{tr}(A)
  \le
  d\,\lambda_{\max}(A),
\]
and hence
\[
  1
  \le
  \frac{d\,\lambda_{\max}(A)}{\mathrm{tr}(A)}
  \le
  d.
\]
The ratio equals $1$ if and only if
\[
  \mathrm{tr}(A)=d\,\lambda_{\max}(A),
\]
which holds if and only if all eigenvalues of $A$ are equal, i.e.
\[
  A=\alpha I
\]
for some $\alpha>0$. Since $A=H^{1/2}MH^{1/2}$, this is equivalent to
\[
  H^{1/2}MH^{1/2}=\alpha I,
\]
or, multiplying on the left and right by $H^{-1/2}$,
\[
  M=\alpha H^{-1}.
\]
Thus the regret is $1$ if and only if $M\propto H^{-1}$.

It remains to show that, for every fixed $M\succ0$, the supremum over $H\succ0$ is $d$.
The upper bound $d$ was just shown. To see that it is sharp, fix $\delta>0$ and define
\[
  A_\delta := \mathrm{diag}(1,\delta,\ldots,\delta)\succ0.
\]
We claim there exists $H_\delta\succ0$ such that
\[
  H_\delta^{1/2}M H_\delta^{1/2}=A_\delta.
\]
Indeed, let
\[
  B_\delta
  :=
  M^{-1/2}
  \left(M^{1/2}A_\delta M^{1/2}\right)^{1/2}
  M^{-1/2}
  \succ0
\]
and set
\[
  H_\delta := B_\delta^2.
\]
Then $H_\delta^{1/2}=B_\delta$ and
\[
  H_\delta^{1/2}M H_\delta^{1/2}
  =
  B_\delta M B_\delta
  =
  A_\delta.
\]
For this choice,
\[
  \frac{V_{H_\delta}(\Sigma_M(H_\delta))}
       {V_{H_\delta}(\Sigma^\star(H_\delta))}
  =
  \frac{d\,\lambda_{\max}(A_\delta)}{\mathrm{tr}(A_\delta)}
  =
  \frac{d}{1+(d-1)\delta}.
\]
Letting $\delta\downarrow0$ gives
\[
  \sup_{H\succ0}
  \frac{V_H(\Sigma_M(H))}{V_H(\Sigma^\star(H))}
  =
  d.
\]
For $d=1$, the same conclusion is immediate since the ratio is identically $1=d$.
Thus no geometry-independent fixed covariance shape $M$ is canonical for all structured geometries.

\subsubsection{Proof of Proposition~\ref{prop:marginals_do_not_identify_coupling}}
\label{app:proof_marginals_do_not_identify_coupling}

Because $(Z_a,Z_b)$ is jointly Gaussian, any affine transformation of
$(Z_a,Z_b)$ is also Gaussian. Define the residual
\[
  R := Z_b - C\Sigma^{-1}Z_a .
\]
Then
\[
  \mathrm{Cov}(R,Z_a)
  =
  \mathrm{Cov}(Z_b,Z_a)
  -
  C\Sigma^{-1}\mathrm{Cov}(Z_a,Z_a)
  =
  C-C\Sigma^{-1}\Sigma
  =
  0.
\]
Since $(R,Z_a)$ is jointly Gaussian and uncorrelated, $R$ and $Z_a$ are independent.
Also, because the variables are centered,
\[
  \mathbb E[R]=0.
\]
Therefore
\[
  \mathbb E[Z_b\mid Z_a]
  =
  \mathbb E[C\Sigma^{-1}Z_a + R \mid Z_a]
  =
  C\Sigma^{-1}Z_a + \mathbb E[R\mid Z_a]
  =
  C\Sigma^{-1}Z_a.
\]

It remains to show that the same marginal covariance $\Sigma$ admits many different predictive maps.
For any $t\in(-1,1)$, consider the block covariance
\[
  \Gamma_t
  :=
  \begin{pmatrix}
    \Sigma & t\Sigma \\
    t\Sigma & \Sigma
  \end{pmatrix}.
\]
For any $u,v\in\mathbb R^d$,
\[
  \begin{pmatrix}u\\v\end{pmatrix}^{\!\top}
  \Gamma_t
  \begin{pmatrix}u\\v\end{pmatrix}
  =
  u^\top\Sigma u + 2t\,u^\top\Sigma v + v^\top\Sigma v.
\]
Equivalently,
\[
  u^\top\Sigma u + 2t\,u^\top\Sigma v + v^\top\Sigma v
  =
  \frac{1+t}{2}(u+v)^\top\Sigma(u+v)
  +
  \frac{1-t}{2}(u-v)^\top\Sigma(u-v).
\]
Since $\Sigma\succ0$ and $t\in(-1,1)$, this quadratic form is strictly positive
for every nonzero $(u,v)$. Hence $\Gamma_t\succ0$, so it defines a valid centered
joint Gaussian law with both marginals equal to $\Sigma$ and cross-covariance
\[
  C_t = t\Sigma.
\]
For this joint law, the Bayes predictor is
\[
  \mathbb E[Z_b\mid Z_a]
  =
  C_t\Sigma^{-1}Z_a
  =
  tZ_a.
\]
As $t$ varies over $(-1,1)$, the marginal covariance of both views remains exactly
$\Sigma$, but the predictive map varies continuously from nearly $-Z_a$ to nearly
$Z_a$. Thus even an oracle marginal covariance does not identify the view-to-view
coupling. Marginal matching alone therefore cannot determine the JEPA predictive
problem.

\subsection{Proofs for Section~\ref{sec:hamjepa_theory}}
\label{app:hamjepa_proofs}

\subsubsection{Proof of Lemma~\ref{lem:rowcol_transform}}
\label{app:proof-lem-rowcol-transform}

By definition, $\mathbf Z'=\mathbf ZH^{1/2}$, so its $i$th row is
\[
  (\mathbf Z')_{i,:} \;=\; (\mathbf Z_{i,:})H^{1/2} \;=\; z_i^\top H^{1/2}.
\]
Taking transpose gives
\[
  (\mathbf Z')_{i,:}^\top \;=\; (z_i^\top H^{1/2})^\top \;=\; (H^{1/2})^\top z_i \;=\; H^{1/2}z_i,
\]
where we used symmetry of $H^{1/2}$ (valid for SPD $H$).

\subsubsection{Proof of Theorem~\ref{thm:flow_symplectic}}
\label{app:proof-thm-flow-symplectic}

Fix $s_0\in\mathbb R^{2d_0}$ and let $s(t)=\Phi_{\phi,t}(s_0)$ solve $\dot s = J\nabla \mathcal H_\phi(s)$.
Let $A(t):=D\Phi_{\phi,t}(s_0)\in\mathbb R^{2d_0\times 2d_0}$ denote the Jacobian of the flow with respect to the initial condition.
Differentiating the ODE with respect to $s_0$ yields the variational equation
\begin{equation}
  \dot A(t) \;=\; J\,\nabla^2 \mathcal H_\phi(s(t))\,A(t),
  \qquad A(0)=I.
  \label{eq:variational}
\end{equation}
Define $M(t):=A(t)^\top J A(t)$.
Using \eqref{eq:variational} and the facts $J^\top=-J$ and $\nabla^2\mathcal H_\phi$ is symmetric, we compute
\[
\dot M(t)
= \dot A(t)^\top J A(t) + A(t)^\top J \dot A(t)
= A(t)^\top \nabla^2\mathcal H_\phi(s(t))^\top J^\top J A(t) + A(t)^\top J J \nabla^2\mathcal H_\phi(s(t)) A(t).
\]
Since $\nabla^2\mathcal H_\phi(s(t))^\top=\nabla^2\mathcal H_\phi(s(t))$ and $J^\top=-J$ and $JJ=-I$, this becomes
\[
\dot M(t)
= A(t)^\top \nabla^2\mathcal H_\phi(s(t))\,(-J)J A(t) + A(t)^\top (-I)\nabla^2\mathcal H_\phi(s(t)) A(t)
= A(t)^\top \nabla^2\mathcal H_\phi(s(t))\,A(t) - A(t)^\top \nabla^2\mathcal H_\phi(s(t))\,A(t)
=0.
\]
Thus $M(t)$ is constant in $t$.
At $t=0$, $A(0)=I$ so $M(0)=J$, hence $M(t)=J$ for all $t$, proving \eqref{eq:flow-symplectic}.

For the determinant claim, take determinants of $A(t)^\top J A(t)=J$:
\[
\det(A(t))^2 \det(J) = \det(J)\quad\Rightarrow\quad \det(A(t))^2=1.
\]
Since $A(0)=I$ has $\det(A(0))=1$ and $\det(A(t))$ varies continuously with $t$, we must have $\det(A(t))=1$ for all $t$.

Finally, volume preservation follows from $\det(D\Phi_{\phi,t}(s))\equiv 1$ by the change-of-variables theorem: for any measurable set $U$,
\[
\mathrm{Vol}(\Phi_{\phi,t}(U))=\int_U |\det(D\Phi_{\phi,t}(s))|\,ds=\int_U 1\,ds=\mathrm{Vol}(U).
\]

\subsubsection{Proof of Corollary~\ref{cor:entropy_preservation}}
\label{app:proof-cor-entropy-preservation}

Let $Y=F(X)$ with $F$ a $C^1$ diffeomorphism.
The change-of-variables formula gives $p_Y(y)=p_X(x)\,|\det(DF(x))|^{-1}$ with $x=F^{-1}(y)$.
Then
\[
h(Y)=-\int p_Y(y)\log p_Y(y)\,dy
=-\int p_X(x)\log\big(p_X(x)\,|\det(DF(x))|^{-1}\big)\,dx
= h(X)+\mathbb E[\log|\det(DF(X))|].
\]
If $|\det(DF(\cdot))|\equiv 1$, the second term is zero, so $h(Y)=h(X)$.

\subsubsection{Proof of Theorem~\ref{thm:leapfrog_symplectic}}
\label{app:proof-thm-leapfrog-symplectic}

Write the leapfrog step as a composition of two ``kicks'' and one ``drift''.
Define for $h\in\mathbb R$:
\[
K_h(q,p) := (q,\; p - h\nabla V_\phi(q)),
\qquad
D_h(q,p) := (q + h p,\; p),
\]
where the drift uses $\nabla T(p)=p$ for $T(p)=\tfrac12\|p\|^2$.
Then the one-step leapfrog map satisfies
\[
\Psi_{\phi,\Delta t} \;=\; K_{\Delta t/2}\circ D_{\Delta t}\circ K_{\Delta t/2}.
\]

We show that both $K_h$ and $D_h$ are symplectic.
Their Jacobians have block forms
\[
DK_h(q,p)=
\begin{pmatrix}
I & 0\\
-h\nabla^2 V_\phi(q) & I
\end{pmatrix}
=
\begin{pmatrix}
I & 0\\
B & I
\end{pmatrix},
\quad B^\top=B,
\qquad
DD_h(q,p)=
\begin{pmatrix}
I & hI\\
0 & I
\end{pmatrix}
=
\begin{pmatrix}
I & C\\
0 & I
\end{pmatrix},
\quad C^\top=C.
\]
A direct calculation shows that any matrix of the form $\begin{psmallmatrix}I&0\\B&I\end{psmallmatrix}$ with $B$ symmetric satisfies $A^\top J A=J$, and similarly for $\begin{psmallmatrix}I&C\\0&I\end{psmallmatrix}$ with $C$ symmetric.
Therefore $DK_h$ and $DD_h$ are symplectic matrices at every point, so $K_h$ and $D_h$ are symplectic maps.

The composition of symplectic maps is symplectic, hence $\Psi_{\phi,\Delta t}$ is symplectic, and iterating yields symplecticity of $\Phi_\phi^{(K)}=\Psi_{\phi,\Delta t}^{\,K}$.
The determinant claim follows as in the proof of Theorem~\ref{thm:flow_symplectic}.

\subsubsection{Proof of Corollary~\ref{cor:reversible}}
\label{app:proof-cor-reversible}

Using the decomposition $\Psi_{\Delta t}=K_{\Delta t/2}\circ D_{\Delta t}\circ K_{\Delta t/2}$, we have
$K_h^{-1}=K_{-h}$ and $D_h^{-1}=D_{-h}$ by inspection.
Hence
\[
\Psi_{\Delta t}^{-1}
=K_{\Delta t/2}^{-1}\circ D_{\Delta t}^{-1}\circ K_{\Delta t/2}^{-1}
=K_{-\Delta t/2}\circ D_{-\Delta t}\circ K_{-\Delta t/2}
=\Psi_{-\Delta t}.
\]
Taking $K$-fold compositions gives $(\Psi_{\Delta t}^{\,K})^{-1}=\Psi_{-\Delta t}^{\,K}$.

\subsubsection{Proof of Proposition~\ref{prop:hamjepa_expressivity}}
\label{app:proof_hamjepa_expressivity}

Write the phase-space state as $s=(q,p)\in\mathbb{R}^{2d_0}$ and define the two vector fields
\[
f(s) := \binom{p}{-\nabla V(q)},\qquad
f_\phi(s) := \binom{p}{-\nabla V_\phi(q)}.
\]
Let $\Phi^V_t$ denote the flow of $\dot s=f(s)$ and $\Phi^{V_\phi}_t$ the flow of $\dot s=f_\phi(s)$.

Throughout, work on the compact set $\Omega=\Omega_q\times\Omega_p$ from the proposition statement.
Since $\nabla V$ and $\nabla V_\phi$ are Lipschitz on $\Omega_q$ and $\Omega_p$ is compact, both $f$ and $f_\phi$
are Lipschitz on $\Omega$; let $L$ be any Lipschitz constant of $f_\phi$ on $\Omega$:
\[
\|f_\phi(x)-f_\phi(y)\|_2 \le L\|x-y\|_2,\qquad \forall x,y\in\Omega.
\]
Define the uniform force mismatch on $\Omega_q$:
\[
\delta := \sup_{q\in\Omega_q}\|\nabla V_\phi(q)-\nabla V(q)\|_2.
\]

\paragraph{Step 1: Continuous dependence of flows on the force field.}
Fix $s_0\in K$ and let $s(t):=\Phi^V_t(s_0)$ and $\tilde s(t):=\Phi^{V_\phi}_t(s_0)$.
By assumption, both trajectories remain in $\Omega$ for $t\in[0,T]$.
Let $\Delta(t):=\tilde s(t)-s(t)$. Then
\begin{align*}
\dot\Delta(t)
&= f_\phi(\tilde s(t)) - f(s(t)) \\
&= \bigl(f_\phi(\tilde s(t)) - f_\phi(s(t))\bigr) + \bigl(f_\phi(s(t)) - f(s(t))\bigr).
\end{align*}
Taking norms and using Lipschitzness of $f_\phi$ on $\Omega$ plus the definition of $\delta$ gives
\[
\|\dot\Delta(t)\|_2 \le L\|\Delta(t)\|_2 + \delta.
\]
Since $\Delta(0)=0$, Gr\"onwall's inequality yields, for all $t\in[0,T]$,
\[
\|\Delta(t)\|_2 \le
\begin{cases}
\displaystyle \frac{\delta}{L}\bigl(e^{Lt}-1\bigr), & L>0,\\[6pt]
\displaystyle \delta t, & L=0.
\end{cases}
\]
In particular, at $t=T$ we obtain a uniform (over $s_0\in K$) modeling-error bound
\[
\sup_{s_0\in K}\|\Phi^{V_\phi}_T(s_0)-\Phi^V_T(s_0)\|_2 \le C_1\,\delta,
\qquad
C_1 :=
\begin{cases}
\frac{e^{LT}-1}{L}, & L>0,\\
T, & L=0.
\end{cases}
\]

\paragraph{Step 2: Leapfrog global error is $O(\Delta t^2)$.}
Fix $\phi$ and consider the ODE $\dot s=f_\phi(s)$.
Let $\Psi_{\phi,h}$ denote one leapfrog step with step size $h$ (Eq.~(29)), and define the numerical iterates
\[
s_{n+1} := \Psi_{\phi,h}(s_n),\qquad s_0 \in K,\qquad t_n := nh,
\]
so that $s_K = (\Psi_{\phi,h})^K(s_0)=\Phi^{(K)}_\phi(s_0)$ with $Kh=T$.

\emph{(2a) Local truncation error bound.}
Because $V_\phi\in C^3$ on a neighborhood of $\Omega_q$, the vector fields
\[
A(q,p):=\binom{p}{0},
\qquad
B(q,p):=\binom{0}{-\nabla V_\phi(q)}
\]
are $C^2$ on $\Omega$, and their exact flows are
\[
\exp(hA):(q,p)\mapsto(q+hp,p),
\qquad
\exp(hB):(q,p)\mapsto(q,p-h\nabla V_\phi(q)).
\]
The leapfrog step is exactly the Strang splitting:
\[
\Psi_{\phi,h}=\exp\!\left(\frac{h}{2}B\right)\exp(hA)\exp\!\left(\frac{h}{2}B\right).
\]
Standard Strang splitting theory (derivable via the BCH expansion for the Lie operators $L_A,L_B$)
implies that the local truncation error is third order:
there exists a constant $C_{\mathrm{loc}}=C_{\mathrm{loc}}(\Omega,V_\phi)$ such that
\[
\sup_{x\in\Omega}\bigl\|\Psi_{\phi,h}(x) - \Phi^{V_\phi}_h(x)\bigr\|_2 \le C_{\mathrm{loc}}\,h^3
\quad\text{for all sufficiently small }h.
\]
(Concretely, the BCH expansion gives
$\exp(\tfrac{h}{2}L_B)\exp(hL_A)\exp(\tfrac{h}{2}L_B)
= \exp\big(h(L_A+L_B) + O(h^3)\big)$
as operators on smooth test functions, and boundedness of the commutators on $\Omega$ yields the uniform $C_{\mathrm{loc}}$.)

\emph{(2b) Error recursion and discrete Gr\"onwall.}
Define the global error at step $n$ by
\[
e_n := s_n - \Phi^{V_\phi}_{t_n}(s_0).
\]
Then
\begin{align*}
e_{n+1}
&= \Psi_{\phi,h}(s_n) - \Phi^{V_\phi}_{t_{n+1}}(s_0) \\
&= \Psi_{\phi,h}(s_n) - \Phi^{V_\phi}_{h}\bigl(\Phi^{V_\phi}_{t_n}(s_0)\bigr).
\end{align*}
Add and subtract $\Phi^{V_\phi}_h(s_n)$:
\[
e_{n+1}
= \underbrace{\bigl(\Psi_{\phi,h}(s_n) - \Phi^{V_\phi}_{h}(s_n)\bigr)}_{\text{local truncation}}
+ \underbrace{\bigl(\Phi^{V_\phi}_{h}(s_n) - \Phi^{V_\phi}_{h}(\Phi^{V_\phi}_{t_n}(s_0))\bigr)}_{\text{stability}}.
\]
Taking norms, using the local error bound from (2a), and using Lipschitz stability of the flow
$\|\Phi^{V_\phi}_h(x)-\Phi^{V_\phi}_h(y)\|_2 \le e^{Lh}\|x-y\|_2$ on $\Omega$ yields
\[
\|e_{n+1}\|_2 \le C_{\mathrm{loc}}h^3 + e^{Lh}\|e_n\|_2.
\]
Iterating this recursion with $e_0=0$ gives
\[
\|e_K\|_2
\le C_{\mathrm{loc}}h^3\sum_{j=0}^{K-1} e^{Lh j}
=
C_{\mathrm{loc}}h^3 \cdot
\begin{cases}
\displaystyle \frac{e^{LKh}-1}{e^{Lh}-1}, & L>0,\\[6pt]
K, & L=0.
\end{cases}
\]
Since $Kh=T$ and $e^{Lh}-1 \ge Lh$ for $Lh\ge 0$, we obtain
\[
\|e_K\|_2 \le
\begin{cases}
\displaystyle \frac{C_{\mathrm{loc}}}{L}\bigl(e^{LT}-1\bigr)\,h^2, & L>0,\\[6pt]
\displaystyle C_{\mathrm{loc}}T\,h^2, & L=0.
\end{cases}
\]
Taking the supremum over $s_0\in K$ (all iterates stay in $\Omega$ by assumption), we conclude that there exists
$C_2=C_2(T,\Omega,V_\phi)$ such that
\[
\sup_{s_0\in K}\bigl\|\Phi^{(K)}_\phi(s_0)-\Phi^{V_\phi}_T(s_0)\bigr\|_2
\le C_2\,h^2.
\]

\paragraph{Step 3: Combine modeling and integrator errors.}
By the triangle inequality,
\[
\|\Phi^{(K)}_\phi(s_0)-\Phi^V_T(s_0)\|_2
\le
\|\Phi^{(K)}_\phi(s_0)-\Phi^{V_\phi}_T(s_0)\|_2
+
\|\Phi^{V_\phi}_T(s_0)-\Phi^V_T(s_0)\|_2.
\]
Taking suprema over $s_0\in K$ and applying the bounds from Steps 1--2 yields
\[
\sup_{s_0\in K}\|\Phi^{(K)}_\phi(s_0)-\Phi^V_T(s_0)\|_2
\le C_2 h^2 + C_1 \delta.
\]
This is the claimed inequality with $C:=\max\{C_1,C_2\}$ (or $C:=C_1+C_2$).

Finally, if the model class can make $\delta$ arbitrarily small (uniform approximation of $\nabla V$ on $\Omega_q$),
choose $\phi$ so that $C_1\delta \le \varepsilon/2$, then choose $h=\Delta t$ so that $C_2h^2\le \varepsilon/2$,
and conclude
\[
\sup_{s_0\in K}\|\Phi^{(K)}_\phi(s_0)-\Phi^V_T(s_0)\|_2 \le \varepsilon.
\]
\hfill$\square$

\subsubsection{Proof of Lemma~\ref{lem:reciprocal_svs}}
\label{app:proof-lem-reciprocal-svs}

Let $A$ be symplectic: $A^\top J A=J$.
Using $J^{-1}=-J$ and $J^\top=-J$, we obtain
\[
A^{-1} = -J A^\top J.
\]
Then
\[
(A^\top A)^{-1} = A^{-1}A^{-T} = (-J A^\top J)(-J A J) = J^{-1}(A^\top A)J,
\]
so $(A^\top A)^{-1}$ is similar to $A^\top A$ and thus has the same eigenvalues.
Therefore the eigenvalues of $A^\top A$ occur in reciprocal pairs $(\mu,1/\mu)$, and the singular values
$\sigma=\sqrt{\mu}$ occur in reciprocal pairs $(\sigma,1/\sigma)$.

\subsubsection{Proof sketch of Theorem~\ref{thm:shadow}}
\label{app:proof-thm-shadow}

This is a standard result from backward error analysis for symplectic integrators; we provide the BCH-based sketch
specialized to leapfrog.

Let $L_T$ and $L_V$ denote the Lie derivatives (Poisson bracket operators) corresponding to the Hamiltonians
$T$ and $V$, acting on smooth observables.
The exact time-$h$ flow of $T$ (resp.\ $V$) is $\exp(hL_T)$ (resp.\ $\exp(hL_V)$).
The leapfrog step can be written operator-theoretically as the symmetric composition
\[
\exp\!\Big(\tfrac{h}{2}L_V\Big)\exp(hL_T)\exp\!\Big(\tfrac{h}{2}L_V\Big).
\]
By the Baker--Campbell--Hausdorff (BCH) formula, this equals
\[
\exp\!\Big(h(L_T+L_V) + h^3 C_3 + h^5 C_5 + \cdots \Big),
\]
where $C_3,C_5,\dots$ are nested commutators of $L_T$ and $L_V$.
For Hamiltonian systems, commutators of Hamiltonian vector fields correspond to Hamiltonian vector fields of
Poisson brackets; equivalently, each $C_{2j+1}$ is itself a Lie derivative $L_{H_{2j}}$ for some scalar
function $H_{2j}$ built from iterated Poisson brackets of $T$ and $V$.
Thus the exponential above is the exact time-$h$ flow of a \emph{modified Hamiltonian}
\[
\widetilde{\mathcal H}_{\Delta t} \;=\; \mathcal H + h^2 H_2 + h^4 H_4 + \cdots
\]
at the level of formal series.
Truncating this series at order $N$ yields a modified Hamiltonian
$\widetilde{\mathcal H}^{[N]}_{\Delta t}$ whose flow matches one leapfrog step up to local error $\mathcal O(h^{N+1})$.
Under analyticity, classical results show that the truncation can be chosen so that the resulting modified energy
is nearly conserved over exponentially long times in $1/h$.
(See, e.g., standard references on geometric numerical integration for full details.)

\subsubsection{Proof of Lemma~\ref{lem:budget_not_rank}}
\label{app:proof-lem-budget-not-rank}

Take $\Sigma=\mathrm{diag}(c,0,\dots,0)$.
Then $\Sigma\succeq 0$, $\mathrm{tr}(\Sigma)=c$, and $\mathrm{rank}(\Sigma)=1$.

\subsubsection{Proof of Lemma~\ref{lem:logdet_spike}}
\label{app:proof-lem-logdet-spike}

Fix $k\ge 2$ and $\tau$.
For $\varepsilon>0$, define eigenvalues
\[
\lambda_2=\cdots=\lambda_k=\varepsilon,
\qquad
\lambda_1 = \exp(k\tau)\,\varepsilon^{-(k-1)}.
\]
Then $\prod_{i=1}^k \lambda_i = \exp(k\tau)$, hence $\frac1k\log\det(\Sigma)=\tau$.
But as $\varepsilon\to 0$, $\lambda_1\to\infty$ and $\lambda_2,\dots,\lambda_k\to 0$.
The participation ratio satisfies
\[
\mathrm{PR}(\Sigma)=\frac{(\sum_i\lambda_i)^2}{\sum_i\lambda_i^2}
\longrightarrow
\frac{\lambda_1^2}{\lambda_1^2}=1,
\]
so $\mathrm{PR}(\Sigma)$ can be made arbitrarily close to $1$ while maintaining the log-det constraint.

\subsubsection{Proof of Lemma~\ref{lem:pr_not_scale}}
\label{app:proof-lem-pr-not-scale}

Let $\lambda_i$ be the eigenvalues of $\Sigma$.
Then the eigenvalues of $a\Sigma$ are $a\lambda_i$, so
\[
\mathrm{PR}(a\Sigma)=\frac{(\sum_i a\lambda_i)^2}{\sum_i (a\lambda_i)^2}
=\frac{a^2(\sum_i \lambda_i)^2}{a^2\sum_i\lambda_i^2}
=\mathrm{PR}(\Sigma).
\]

\subsubsection{Proof of Lemma~\ref{lem:proj_logdet_rank_certificate}}
\label{app:proof_proj_logdet_rank_certificate}

Let $\Sigma_Q\succeq 0$ and $R\in\mathbb{R}^{d_0\times k}$ have full column rank.
Write $\Sigma_Q = A^\top A$ with $A:=\Sigma_Q^{1/2}$ (PSD square root).
Then
\[
\Sigma_{Y,0}=R^\top \Sigma_Q R = R^\top A^\top A R = (AR)^\top (AR).
\]
Therefore $\Sigma_{Y,0}$ is PSD and
\[
\rank(\Sigma_{Y,0}) = \rank\big((AR)^\top(AR)\big) = \rank(AR) \le \rank(A) = \rank(\Sigma_Q).
\]
If $\rank(\Sigma_Q)<k$, then $\rank(\Sigma_{Y,0})<k$, so $\Sigma_{Y,0}$ is singular. Hence $\det(\Sigma_{Y,0})=0$
and $\log\det(\Sigma_{Y,0})=-\infty$.

For the ridge statement, let $r:=\rank(\Sigma_{Y,0})$ and let $\lambda_1,\dots,\lambda_r>0$ be its nonzero eigenvalues.
Then $\Sigma_{Y,0}$ has exactly $k-r$ zero eigenvalues. Adding $\varepsilon I_k$ shifts each eigenvalue by $\varepsilon$,
so $\Sigma_Y=\Sigma_{Y,0}+\varepsilon I_k$ has eigenvalues
\[
\lambda_1+\varepsilon,\ \dots,\ \lambda_r+\varepsilon,\ \underbrace{\varepsilon,\dots,\varepsilon}_{k-r \text{ times}}.
\]
Thus
\[
\log\det(\Sigma_Y)=\sum_{i=1}^{r}\log(\lambda_i+\varepsilon) + (k-r)\log\varepsilon,
\]
which is exactly the claimed formula.
\hfill$\square$

\subsubsection{Proof of Proposition~\ref{prop:logdet_nocollapse}}
\label{app:proof-prop-logdet-nocollapse}

Let $\lambda_1\ge\cdots\ge\lambda_k>0$ be eigenvalues of $\Sigma_Y$.
From $\log\det(\Sigma_Y)\ge k\tau$ we have
\[
\prod_{i=1}^k \lambda_i \;\ge\; \exp(k\tau).
\]
Let $\lambda_{\min}=\lambda_k$.
Using $\sum_{i=1}^{k-1}\lambda_i \le \mathrm{tr}(\Sigma_Y)-\lambda_k \le m$, the AM--GM inequality yields
\[
\prod_{i=1}^{k-1}\lambda_i
\;\le\;
\Big(\frac{\sum_{i=1}^{k-1}\lambda_i}{k-1}\Big)^{k-1}
\;\le\;
\Big(\frac{m}{k-1}\Big)^{k-1}.
\]
Therefore
\[
\exp(k\tau)
\;\le\;
\prod_{i=1}^k \lambda_i
=
\lambda_k \prod_{i=1}^{k-1}\lambda_i
\;\le\;
\lambda_k \Big(\frac{m}{k-1}\Big)^{k-1},
\]
so
\[
\lambda_{\min}(\Sigma_Y)=\lambda_k
\;\ge\;
\exp(k\tau)\,\frac{(k-1)^{k-1}}{m^{k-1}}.
\]

\subsubsection{Proof of Theorem~\ref{thm:spectral_nondeg_joint}}
\label{app:proof-thm-spectral-nondeg-joint}

Let $\lambda_1\ge\cdots\ge\lambda_k>0$ be eigenvalues of $\Sigma$ and $c=\sum_i\lambda_i$.
The PR constraint implies
\[
\mathrm{PR}(\Sigma)=\frac{(\sum_i\lambda_i)^2}{\sum_i\lambda_i^2}\ge r_0
\quad\Rightarrow\quad
\sum_{i=1}^k\lambda_i^2 \le \frac{c^2}{r_0}.
\]
Since $\lambda_{\max}^2=\lambda_1^2\le\sum_i\lambda_i^2$, we get
\[
\lambda_{\max}(\Sigma)\le \frac{c}{\sqrt{r_0}}.
\]
Next, the log-det constraint implies
\[
\prod_{i=1}^k\lambda_i = \det(\Sigma) \ge \exp(k\tau).
\]
Using $\lambda_i\le\lambda_{\max}$ for $i=1,\dots,k-1$ gives
\[
\exp(k\tau)\le \lambda_{\min}\,\lambda_{\max}^{k-1},
\quad\text{so}\quad
\lambda_{\min}\ge \frac{\exp(k\tau)}{\lambda_{\max}^{k-1}}
\ge
\exp(k\tau)\left(\frac{\sqrt{r_0}}{c}\right)^{k-1}.
\]

\subsubsection{Proof of Proposition~\ref{prop:symplectic_cannot_collapse}}
\label{app:proof-prop-symplectic-cannot-collapse}

By Theorem~\ref{thm:leapfrog_symplectic}, $\Phi_\phi^{(K)}$ is a $C^1$ diffeomorphism with
$|\det(D\Phi_\phi^{(K)}(s))|=1$ for all $s$.
For any measurable $\mathcal U$ with finite volume, the change-of-variables theorem gives
\[
\mathrm{Vol}\big(\Phi_\phi^{(K)}(\mathcal U)\big)
= \int_{\mathcal U} \left|\det(D\Phi_\phi^{(K)}(s))\right|\,ds
= \int_{\mathcal U} 1\,ds
= \mathrm{Vol}(\mathcal U).
\]
Hence $\mathrm{Vol}(\mathcal U)>0$ implies $\mathrm{Vol}(\Phi_\phi^{(K)}(\mathcal U))>0$.
Therefore the predictor cannot map a positive-volume set to a zero-volume set; any collapse to a
lower-volume region must already be present in the encoder outputs.

\section{Supplementary theory: maximum entropy on phase space and symplectic factorization}

\begin{theorem}[Maximum entropy on phase space under an expected energy constraint]
\label{thm:maxent_phase_space_gibbs}
Let $\mathcal{H}:\mathbb{R}^{m}\to\mathbb{R}$ be measurable and assume there exists $\beta>0$ such that the
partition function
\[
Z(\beta):=\int_{\mathbb{R}^{m}} e^{-\beta \mathcal{H}(s)}\,ds
\]
is finite. Fix $c\in\mathbb{R}$ and consider the set of densities
\[
\mathcal{P}_c := \left\{p \text{ density on }\mathbb{R}^{m} : \int p(s)\,ds=1,\ \int \mathcal{H}(s)p(s)\,ds=c,\ h(p)>-\infty\right\},
\]
where $h(p):=-\int p\log p$ is differential entropy.
If there exists $\beta^\star>0$ such that $\mathbb{E}_{p_{\beta^\star}}[\mathcal{H}(S)]=c$ for
\[
p_{\beta}(s)=\frac{1}{Z(\beta)}e^{-\beta \mathcal{H}(s)},
\]
then the unique maximizer of $h(p)$ over $\mathcal{P}_c$ is $p^\star=p_{\beta^\star}$.
\end{theorem}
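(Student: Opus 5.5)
The plan is the Gibbs variational argument via nonnegativity of relative entropy (Gibbs' inequality). For any $p\in\mathcal P_c$ we compare $p$ with $p^\star=p_{\beta^\star}$ through the Kullback--Leibler divergence
\[
D(p\,\|\,p^\star)=\int p\log\frac{p}{p^\star}\,ds \;\ge\; 0,
\]
with equality if and only if $p=p^\star$ almost everywhere. The core step is the identity
\[
-\int p\log p^\star\,ds=\int p(s)\big(\beta^\star\mathcal H(s)+\log Z(\beta^\star)\big)\,ds=\beta^\star c+\log Z(\beta^\star).
\]
Its right-hand side depends on $p$ only through the energy constraint, so it equals the same quantity for $p^\star$, namely $h(p^\star)$. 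We then get
\[
h(p)=-\int p\log p = -\int p\log p^\star - D(p\,\|\,p^\star)= h(p^\star)-D(p\,\|\,p^\star)\le h(p^\star).
\]
Uniqueness follows from the equality case of Gibbs' inequality.

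The steps, in order:
\begin{enumerate}
\item Verify that $p^\star\in\mathcal P_c$. It is a density because $Z(\beta^\star)<\infty$; this needs finiteness at $\beta^\star$ itself, which I read as implicit in the hypothesis that $p_{\beta^\star}$ is defined. It satisfies the energy constraint by assumption. Its entropy equals $\beta^\star c+\log Z(\beta^\star)$, which is finite, so in particular $h(p^\star)>-\infty$.
\item Compute the cross-entropy $-\int p\log p^\star$ for an arbitrary $p\in\mathcal P_c$.
\item Prove $D(p\,\|\,p^\star)\ge 0$ and characterize when it is zero. Since $p^\star>0$ everywhere, the support issue disappears. Apply Jensen to the concave $\log$ on $\{p>0\}$, or use pointwise $x\log x\ge x-1$ applied to $x=p/p^\star$ and integrate against $p^\star$. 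Strictness of the pointwise inequality away from $x=1$ gives equality only when $p=p^\star$ almost everywhere.
\end{enumerate}

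The main obstacle is measure-theoretic bookkeeping rather than the inequality itself. The decomposition $h(p)=-\int p\log p^\star-D(p\|p^\star)$ splits an integral into two pieces, and this is only legitimate if neither piece is an $\infty-\infty$ form. I will handle it as follows.
\begin{itemize}
\item The cross-entropy term is finite because $\mathcal H$ is integrable against $p$. This is implicit in $\int\mathcal H p=c$ being meaningful, and I will state it explicitly as part of the definition of $\mathcal P_c$.
\item $D(p\|p^\star)\in[0,\infty]$ is always well defined via the pointwise bound $x\log x-x+1\ge 0$.
\item $h(p)$ is assumed $>-\infty$. If $h(p)=+\infty$ were possible, the identity would force $D=-\infty$, which is a contradiction. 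So $h(p)<\infty$ as well, and the identity holds as an equation of finite reals.
\end{itemize}
I will therefore write the integrand $-p\log p=-p\log p^\star-p\log(p/p^\star)$ pointwise. The first part is integrable and the second is quasi-integrable with a definite sign after adding $p^\star-p$, which integrates to zero. This makes the splitting rigorous.
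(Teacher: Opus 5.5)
Your proposal is correct and follows essentially the same route as the paper: both compare $p$ with the Gibbs density through $\mathrm{KL}(p\|p_{\beta^\star}) = -h(p) + \beta^\star c + \log Z(\beta^\star) \ge 0$, and both get uniqueness from the equality case of that inequality. Your extra bookkeeping goes beyond the paper's proof, which passes silently over these points. Specifically, you write $-p\log p$ as an integrable term minus a nonnegative one, which rules out $\infty-\infty$ and gives $h(p)<\infty$. You also note that finiteness of $Z$ must hold at $\beta^\star$ itself, not merely at some $\beta$.
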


\begin{proof}
Fix $\beta>0$ with $Z(\beta)<\infty$ and define $p_\beta(s)=Z(\beta)^{-1}e^{-\beta \mathcal{H}(s)}$.
For any $p\in\mathcal{P}_c$,
\[
\mathrm{KL}(p\|p_\beta)=\int p(s)\log\frac{p(s)}{p_\beta(s)}\,ds \ge 0,
\]
with equality iff $p=p_\beta$ a.e.
Expanding $\log p_\beta(s)=-\beta\mathcal{H}(s)-\log Z(\beta)$ gives
\begin{align*}
\mathrm{KL}(p\|p_\beta)
&= \int p\log p\,ds - \int p\log p_\beta\,ds \\
&= -h(p) - \int p(s)\big(-\beta\mathcal{H}(s)-\log Z(\beta)\big)\,ds \\
&= -h(p) + \beta \underbrace{\int \mathcal{H}(s)p(s)\,ds}_{=c} + \log Z(\beta)\underbrace{\int p(s)\,ds}_{=1} \\
&= -h(p) + \beta c + \log Z(\beta).
\end{align*}
Thus $h(p)\le \beta c + \log Z(\beta)$ for all $p\in\mathcal{P}_c$, with equality iff $p=p_\beta$.

If there exists $\beta^\star$ such that $p_{\beta^\star}\in\mathcal{P}_c$ (i.e., it matches the energy constraint),
then $p_{\beta^\star}$ attains the upper bound and is therefore a maximizer.
Uniqueness follows from the equality condition for KL divergence.
\end{proof}

\begin{corollary}[Energy-based densities are invariant under Hamiltonian flow]
\label{cor:gibbs_invariant}
Let $\mathcal{H}\in C^2(\mathbb{R}^{2d_0})$ and let $\Phi_t$ be the Hamiltonian flow of
$\dot s = J\nabla \mathcal{H}(s)$, assumed to be a $C^1$ diffeomorphism for the times of interest.
If $S$ has density $p(s)=g(\mathcal{H}(s))$ for some measurable $g:\mathbb{R}\to\mathbb{R}_+$,
then for all $t$, $\Phi_t(S)\stackrel{d}{=}S$.
In particular, the Gibbs density $p(s)\propto e^{-\beta\mathcal{H}(s)}$ is invariant.
\end{corollary}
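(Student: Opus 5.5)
The plan is to use the change-of-variables formula for pushforward densities together with Liouville's theorem (Theorem~\ref{thm:flow_symplectic}) and conservation of energy along Hamiltonian trajectories.

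\emph{Energy conservation.} For a trajectory $s(t)=\Phi_t(s_0)$ we have
\[
\tfrac{d}{dt}\mathcal H(s(t))=\nabla\mathcal H(s(t))^\top J\nabla\mathcal H(s(t))=0,
\]
because $J$ is skew-symmetric, so $v^\top J v=0$ for every $v$. Hence $\mathcal H\circ\Phi_t=\mathcal H$ for every $t$ in the range considered. Applying this to $\Phi_{-t}$, or equivalently evaluating at the preimage $\Phi_t^{-1}(y)$, also gives $\mathcal H(\Phi_t^{-1}(y))=\mathcal H(y)$.

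\emph{Pushforward density.} Since $\Phi_t$ is assumed to be a $C^1$ diffeomorphism, $Y:=\Phi_t(S)$ has density
\[
p_Y(y)=p\big(\Phi_t^{-1}(y)\big)\,\big|\det D\Phi_t^{-1}(y)\big|.
\]
This is the same formula used in Corollary~\ref{cor:entropy_preservation}. By Theorem~\ref{thm:flow_symplectic}, $\det D\Phi_t\equiv1$, so the inverse Jacobian determinant is also $1$. Substituting $p=g\circ\mathcal H$ and using the energy identity gives
\[
p_Y(y)=g\big(\mathcal H(\Phi_t^{-1}(y))\big)=g(\mathcal H(y))=p(y)
\]
for almost every $y$. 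Equal densities imply $\Phi_t(S)\stackrel{d}{=}S$.

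\emph{Gibbs case.} The Gibbs density is the special case $g(e)=Z(\beta)^{-1}e^{-\beta e}$, which is measurable and nonnegative. So its invariance follows immediately.

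\emph{Main obstacle.} The step that needs the most care is making the inverse-flow bookkeeping rigorous: invoking the change-of-variables theorem with a possibly non-continuous measurable $g$, and keeping track of which times the flow is globally defined. For measurable $g$ I would check the claim through test integrals instead of pointwise densities. For bounded measurable $\varphi$, change variables $y=\Phi_t(s)$ with unit Jacobian and then use $\mathcal H\circ\Phi_t^{-1}=\mathcal H$ to get
\[
\mathbb E[\varphi(\Phi_t(S))]=\int\varphi(\Phi_t(s))\,g(\mathcal H(s))\,ds=\int\varphi(y)\,g(\mathcal H(y))\,dy.
\]
This avoids any pointwise issues. If the flow is only defined up to some finite time, I would state the result for times where $\Phi_t$ is a global diffeomorphism, which is what the hypothesis of Corollary~\ref{cor:gibbs_invariant} provides.
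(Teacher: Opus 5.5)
Your proposal is correct and follows the paper's proof essentially step for step: energy conservation from $u^\top J u=0$, Liouville's unit Jacobian from Theorem~\ref{thm:flow_symplectic}, and the change-of-variables formula for the pushforward density, after which substituting $p=g\circ\mathcal H$ gives $p_Y=p$. The only differences are cosmetic. You write $\Phi_t^{-1}$ where the paper writes $\Phi_{-t}$, and you add a test-integral version to handle measurable $g$ and almost-everywhere equality, which is extra rigor the paper omits.
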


\begin{proof}
Energy is conserved along Hamiltonian trajectories: if $s(t)=\Phi_t(s_0)$, then
\[
\frac{d}{dt}\mathcal{H}(s(t))
= \nabla \mathcal{H}(s(t))^\top \dot s(t)
= \nabla \mathcal{H}(s(t))^\top J \nabla \mathcal{H}(s(t)) = 0,
\]
since $J^\top=-J$ implies $u^\top Ju=0$ for all $u$.
Thus $\mathcal{H}(\Phi_t(s))=\mathcal{H}(s)$.

Also Hamiltonian flows are volume-preserving (Liouville): $|\det(D\Phi_t(s))|=1$ for all $s,t$.
Let $Y=\Phi_t(S)$. By change of variables,
\[
p_Y(y)=p(\Phi_{-t}(y))\cdot \left|\det\big(D\Phi_{-t}(y)\big)\right|
= g(\mathcal{H}(\Phi_{-t}(y)))\cdot 1
= g(\mathcal{H}(y)) = p(y).
\]
Hence $Y$ and $S$ have the same density.
\end{proof}

\begin{proposition}[Generic kick--drift--scaling factorization of linear symplectic maps]
\label{prop:symplectic_factorization}
Let $A=\begin{psmallmatrix}a&b\\ c&d\end{psmallmatrix}\in\Sp(2d_0)$ and assume the block $d\in\mathbb{R}^{d_0\times d_0}$ is invertible.
Define
\[
B := b d^{-1},\qquad C := d^{-1}c.
\]
Then $B$ and $C$ are symmetric and
\[
A
=
\underbrace{\begin{pmatrix}I&B\\0&I\end{pmatrix}}_{\text{upper shear (drift)}}
\underbrace{\begin{pmatrix}d^{-\top}&0\\0&d\end{pmatrix}}_{\text{symplectic scaling}}
\underbrace{\begin{pmatrix}I&0\\C&I\end{pmatrix}}_{\text{lower shear (kick)}}.
\]
\end{proposition}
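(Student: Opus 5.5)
The plan is to write out the block conditions encoded in $A^\top J A=J$, use them to prove that $B$ and $C$ are symmetric, and then check the factorization by multiplying the three factors directly.

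\emph{Symplectic identities.} With $J=\begin{psmallmatrix}0&I\\-I&0\end{psmallmatrix}$, the condition $A^\top J A=J$ expands blockwise to
\[
a^\top c = c^\top a,\qquad b^\top d = d^\top b,\qquad a^\top d - c^\top b = I .
\]
Since $A$ is symplectic, so is $A^\top$; this follows from $A^{-1}=-JA^\top J$, as in the proof of Lemma~\ref{lem:reciprocal_svs}. Applying the same expansion to $AJA^\top=J$ gives the companion relations
\[
ab^\top = ba^\top,\qquad cd^\top = dc^\top,\qquad ad^\top - bc^\top = I .
\]

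\emph{Symmetry of $B$ and $C$.} For $B=bd^{-1}$, multiply $b^\top d=d^\top b$ on the left by $d^{-\top}$ and on the right by $d^{-1}$. This gives $d^{-\top}b^\top = bd^{-1}$, which says exactly $B^\top=B$. For $C=d^{-1}c$, apply the same manipulation to $cd^\top=dc^\top$: multiply on the left by $d^{-1}$ and on the right by $d^{-\top}$ to get $d^{-1}c=c^\top d^{-\top}$, so $C=C^\top$.

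\emph{Product.} The middle two factors multiply to
\[
\begin{pmatrix}d^{-\top}&0\\0&d\end{pmatrix}\begin{pmatrix}I&0\\C&I\end{pmatrix}=\begin{pmatrix}d^{-\top}&0\\dC&d\end{pmatrix}=\begin{pmatrix}d^{-\top}&0\\c&d\end{pmatrix}.
\]
Left-multiplying by the upper shear then gives
\[
\begin{pmatrix}d^{-\top}+Bc & Bd\\ c & d\end{pmatrix}.
\]
The $(1,2)$ block is $Bd=b$, and the bottom row is already $(c\;\, d)$. The only remaining check is $a = d^{-\top}+bd^{-1}c$. This step needs a genuine identity and is the main obstacle.

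To establish it, I would start from $ad^\top - bc^\top = I$ and use $c^\top = d^{-1}cd^\top$, which is the relation $cd^\top=dc^\top$ rewritten. Substituting gives $ad^\top - bd^{-1}cd^\top = I$. Right-multiplying by $d^{-\top}$ yields $a - bd^{-1}c = d^{-\top}$, which is the required identity.

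As a sanity check, each factor is itself symplectic: the shears are symplectic by the computation in the proof of Theorem~\ref{thm:leapfrog_symplectic}, and the block-diagonal factor $\mathrm{diag}(d^{-\top},d)$ is symplectic by direct verification. The product is therefore consistent with $A\in\Sp(2d_0)$.
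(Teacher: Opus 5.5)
Your proof is correct and follows essentially the same route as the paper's. Both derive symmetry of $B$ and $C$ from $b^\top d=d^\top b$ and $cd^\top=dc^\top$, multiply out the three factors, and close the top-left block via $ad^\top-bc^\top=I$ right-multiplied by $d^{-\top}$. Your derivation of $AJA^\top=J$ from $A^{-1}=-JA^\top J$ makes explicit a step the paper only asserts.
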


\begin{proof}
Since $A\in\Sp(2d_0)$, it satisfies $A^\top J A=J$, which implies the standard block identities
$b^\top d = d^\top b$ and $a^\top d - c^\top b = I$.
From $b^\top d = d^\top b$ and invertibility of $d$,
\[
B^\top = (bd^{-1})^\top = d^{-\top}b^\top = bd^{-1}=B,
\]
so $B$ is symmetric.

Similarly, $A J A^\top = J$ also holds for symplectic $A$, and its block identities include $c d^\top = d c^\top$.
Then
\[
C^\top = (d^{-1}c)^\top = c^\top d^{-\top} = d^{-1}c = C,
\]
so $C$ is symmetric.

Now multiply the three proposed factors:
\[
\begin{pmatrix}I&B\\0&I\end{pmatrix}
\begin{pmatrix}d^{-\top}&0\\0&d\end{pmatrix}
\begin{pmatrix}I&0\\C&I\end{pmatrix}
=
\begin{pmatrix}d^{-\top}+B d C & B d\\ d C & d\end{pmatrix}.
\]
The bottom-right block matches $d$.
The top-right block matches $b$ because $Bd = (bd^{-1})d=b$.
The bottom-left block matches $c$ because $dC = d(d^{-1}c)=c$.

For the top-left block, use the symplectic identity $a d^\top - b c^\top = I$ (from $A J A^\top=J$)
and right-multiply by $d^{-\top}$:
\[
a - b c^\top d^{-\top} = d^{-\top}.
\]
Using $c^\top d^{-\top} = (d^{-1}c)^\top = C^\top = C$, this becomes
\[
a = d^{-\top} + bC = d^{-\top} + (Bd)C = d^{-\top} + B d C,
\]
which matches the computed top-left block. Hence the factorization holds.
\end{proof}

\section{Further results and figures}
\label{app:res}

We refer to the method uniformly as \emph{HamJEPA}; in the code/config files, the suffix \texttt{hjepa\_mv} simply denotes the minimal multi-view implementation used in our experiments.

\subsection{Frozen-feature diagnostic summaries}
\label{app:diag_summaries}

Figures~\ref{fig:diag_grid_raw} (CIFAR-100) and~\ref{fig:diag_grid_inet} (ImageNet-100) summarize the geometry and non-parametric usefulness of \emph{frozen} encoder representations for the SIGReg baseline and HamJEPA latent states. 
Each method panel contains four diagnostics computed on \emph{raw} encoder features (i.e., no whitening or other post-processing):
\begin{enumerate}
  \item \textbf{$k$NN sweep:} top-1 accuracy as a function of neighborhood size $k$.
  \item \textbf{Covariance eigenspectrum:} top-256 eigenvalues (log-scale) of the \emph{mean-centered}
        empirical covariance.
  \item \textbf{Random-pair cosine similarities:} histogram of cosine similarities after per-vector
        $\ell_2$ normalization \emph{without} mean subtraction.
  \item \textbf{Feature norms:} histogram of $\ell_2$ feature norms.
\end{enumerate}

A key subtlety is that the covariance spectrum is computed after \emph{mean-centering}, whereas the cosine histogram is computed on normalized features \emph{without} centering. Consequently, strongly positive random-pair cosines can arise from a dominant \emph{mean direction} even when the mean-centered covariance remains high-rank. We exploit this complementarity in the discussions below.

\subsection{Discussion CIFAR-100}
\label{app:diag_discussion}

\subsubsection{30 epoch results}
These results are for the 30 epoch pretraining.

\begin{figure*}[p]
  \centering

  \begin{subfigure}[t]{0.5\textwidth}
    \centering
    \includegraphics[width=\linewidth]{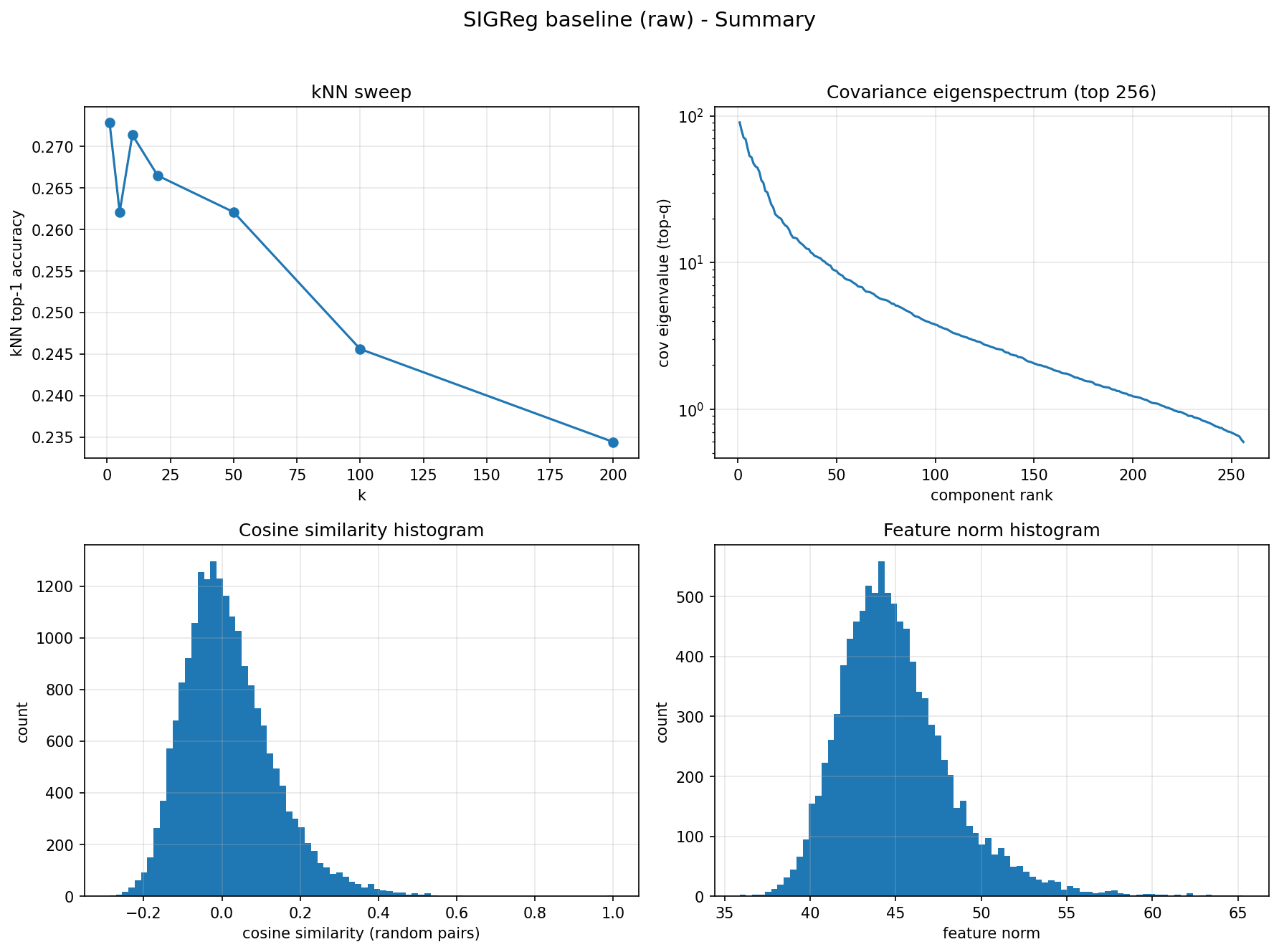}
    \caption{SIGReg (raw).}
    \label{fig:diag_sigreg_raw}
  \end{subfigure}\hfill
  \begin{subfigure}[t]{0.5\textwidth}
    \centering
    \includegraphics[width=\linewidth]{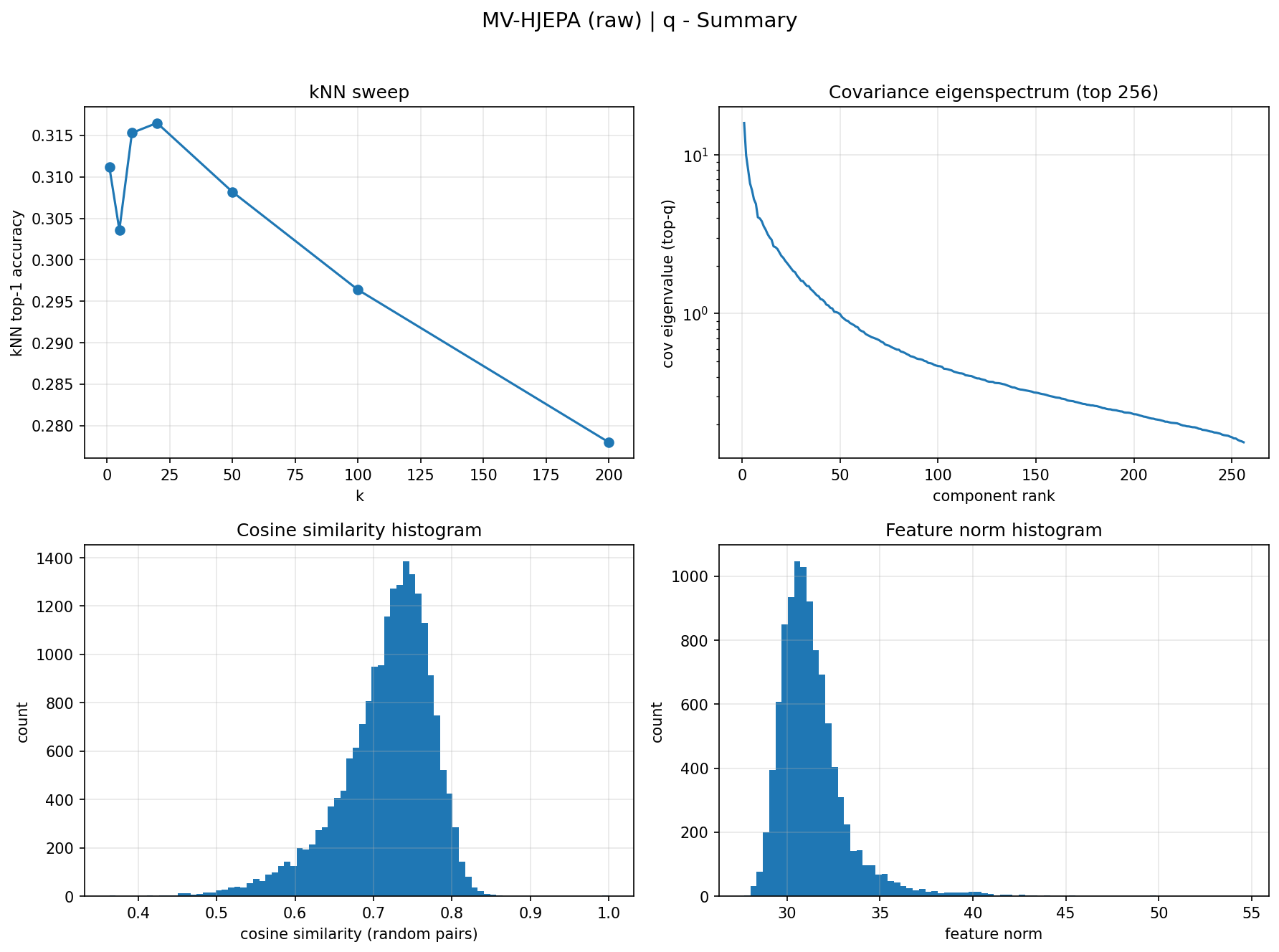}
    \caption{HamJEPA $q$ (raw).}
    \label{fig:diag_mv_hjepa_q_raw}
  \end{subfigure}

  \vspace{0.6em}

  \begin{subfigure}[t]{0.5\textwidth}
    \centering
    \includegraphics[width=\linewidth]{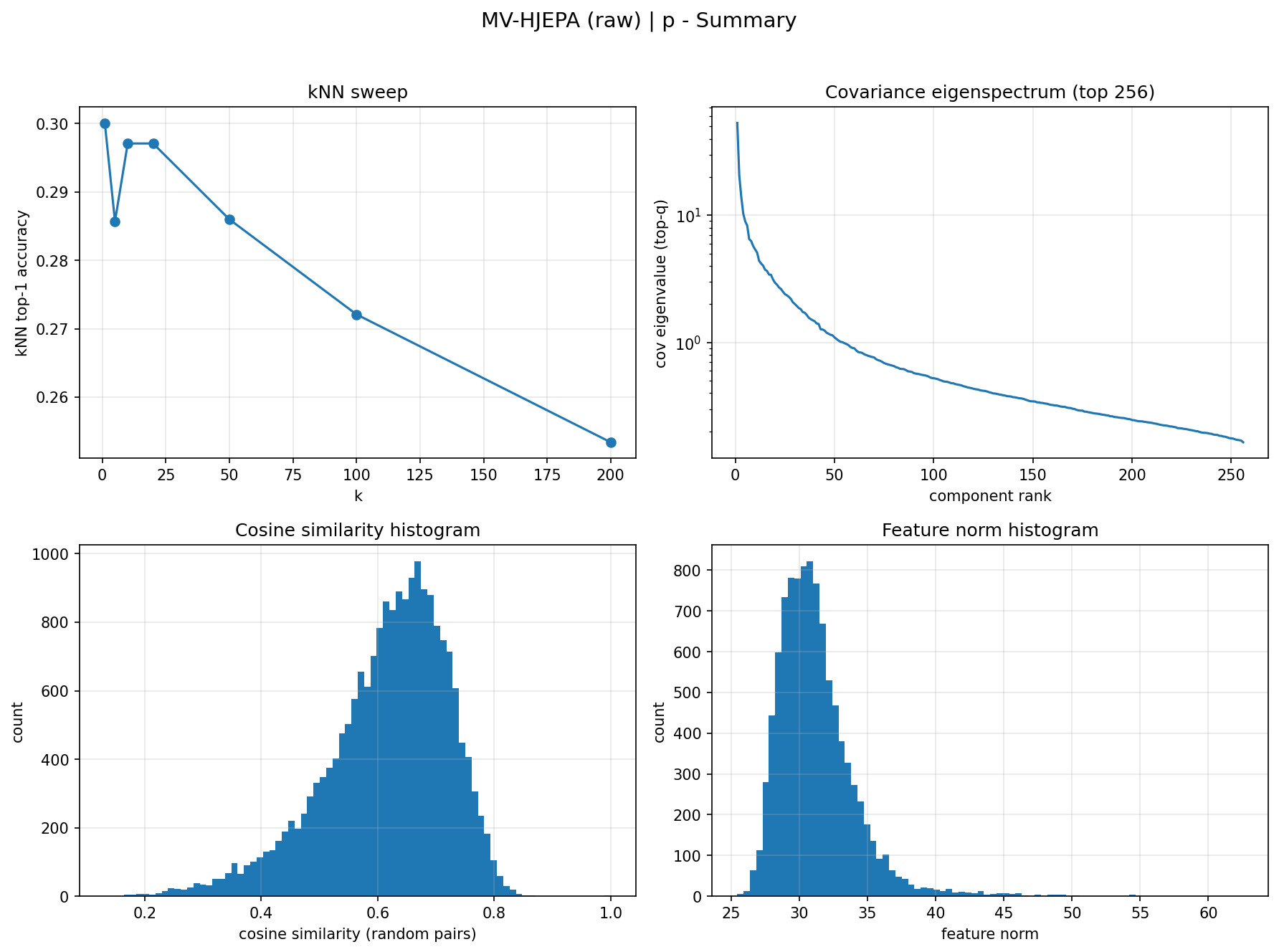}
    \caption{HamJEPA $p$ (raw).}
    \label{fig:diag_mv_hjepa_p_raw}
  \end{subfigure}\hfill
  \begin{subfigure}[t]{0.5\textwidth}
    \centering
    \includegraphics[width=\linewidth]{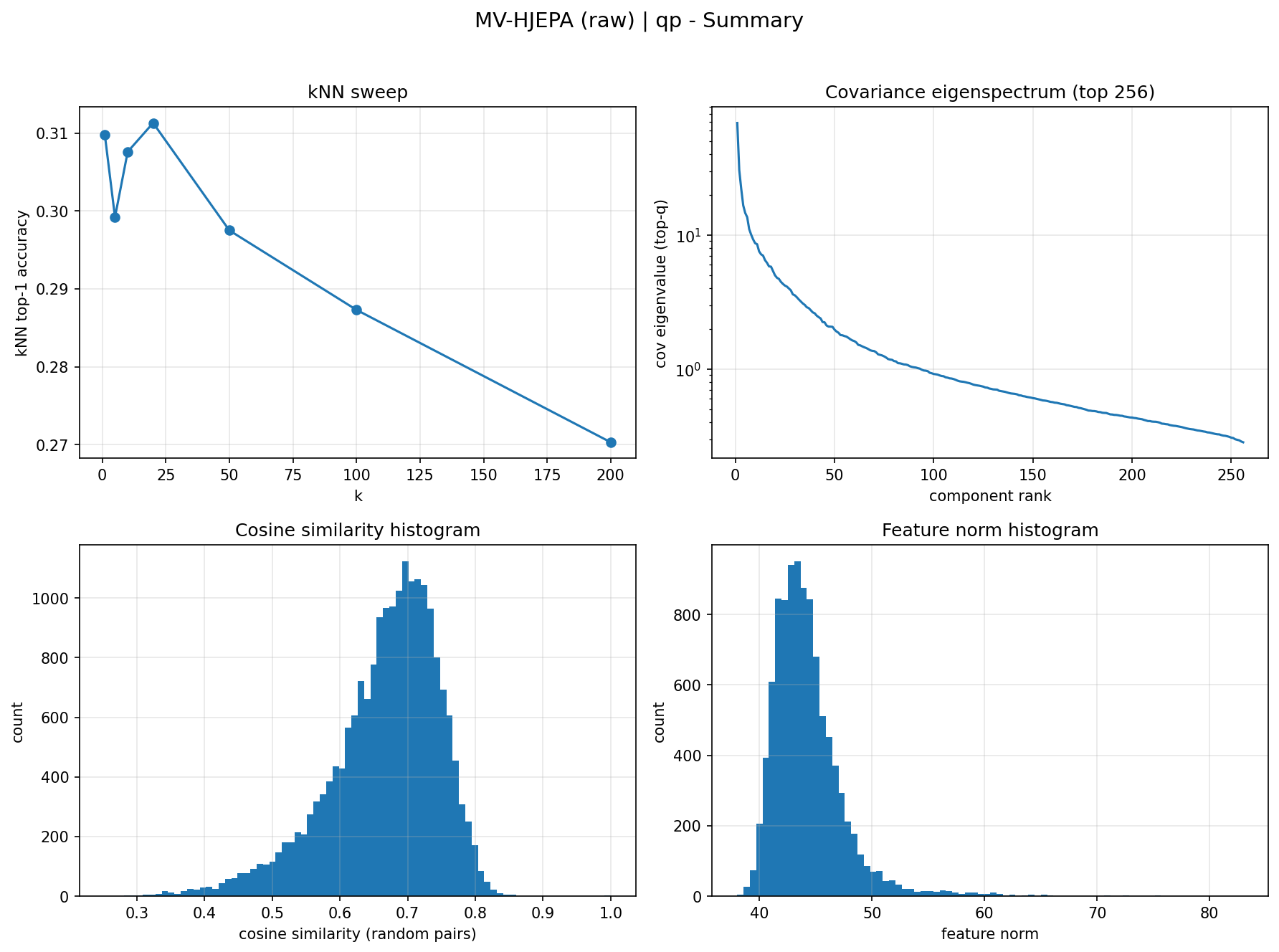}
    \caption{HamJEPA $(q,p)$ (raw).}
    \label{fig:diag_mv_hjepa_qp_raw}
  \end{subfigure}

  \caption{\textbf{Frozen-feature diagnostic summaries (raw) 30 epochs.}
  Each panel contains: (top-left) $k$NN sweep; (top-right) covariance eigenspectrum (top 256, log scale);
  (bottom-left) cosine similarity histogram (random pairs); (bottom-right) feature norm histogram.}
  \label{fig:diag_grid_raw}
\end{figure*}

\paragraph{(1) Neighborhood quality: $k$NN sweeps.}
In Fig.~\ref{fig:diag_sigreg_raw}, the SIGReg baseline peaks at small $k$ and degrades as $k$ grows, consistent with semantics being locally present but becoming less class-pure as neighborhoods expand.
Quantitatively, SIGReg peaks at $27.46 \pm 0.31\%$ (at $k{=}1$) and falls to $23.10 \pm 0.32\%$ (at $k{=}200$).

HamJEPA improves the entire sweep for all latent variants in Fig.~\ref{fig:diag_grid_raw}. Notably, the \emph{single} $q$ state (1024-D) outperforms the SIGReg baseline (2048-D) across the plotted $k$ values: $31.45 \pm 0.27\%$ at $k{=}20$ for $q$, versus $26.56 \pm 0.18\%$ for SIGReg at $k{=}20$ (a $+4.88 \pm 0.11$ point gain). The $p$ and $(q,p)$ variants achieve $29.71 \pm 0.25\%$ and $30.88 \pm 0.39\%$ at $k{=}20$, respectively.
That $(q,p)$ does not exceed $q$ suggests that, under the default weighting, $p$ contributes less additional \emph{metric-consistent} neighborhood structure than $q$, even though it remains informative for linear decoding (see below).

\paragraph{(2) Linear probe vs.\ $k$NN: separability without neighborhood purity.}
While $k$NN distinguishes the latent variants, the linear probe largely does not: SIGReg reaches
$30.43 \pm 0.30\%$ top-1, whereas HamJEPA reaches $33.95 \pm 0.24\%$, $33.07 \pm 0.36\%$ and $34.18 \pm 0.17\%$ top-1 for $q$, $p$, and $(q,p)$ respectively.
A useful interpretation is that $p$ is \emph{linearly} discriminative but induces less clean neighborhoods
under the raw metric, so its gains show up more clearly in the probe than in $k$NN.

\paragraph{(3) Covariance eigenspectrum: rank proxies and collapse checks.}
The top-right panels show mean-centered covariance spectra (top 256 eigenvalues, log-scale). A convenient summary
is the effective-rank proxy computed from these eigenvalues: SIGReg has effective rank $\approx 109.73 \pm 0.61$ (top-256),
whereas HamJEPA-$q$ increases to $125.58 \pm 3.33$. In contrast, HamJEPA-$p$ is more concentrated
(effective rank $79.46 \pm 3.99$), and $(q,p)$ lies in between ($93.10 \pm 4.00$).
Two points follow:
(i) HamJEPA-$q$ does \emph{not} achieve its downstream gain via low-rank collapse (the centered covariance is
high-rank), and
(ii) the $q/p$ split is geometrically meaningful: $q$ and $p$ allocate variance differently even when their linear
probe scores are similar.

\paragraph{(4) Cosine similarities: diagnosing mean-dominance.}
The bottom-left panels reveal the largest qualitative difference.
SIGReg’s random-pair cosines are centered near zero (mean $ 0.01699 \pm 0.00089$) with substantial negative mass,
which is consistent with its isotropy-encouraging design.

HamJEPA’s cosines shift sharply positive: mean cosine is $ 0.71561 \pm 0.00129$ for $q$, $0.61687 \pm 0.00409$ for $p$, and
$0.66613 \pm 0.00270$ for $(q,p)$.
Crucially, this strong alignment coexists with a \emph{high-rank mean-centered covariance} (especially for $q$).
This combination is most naturally explained by a large \emph{non-zero mean feature direction} that dominates
dot products when features are not explicitly centered: the covariance spectrum removes that mean direction by
construction, while the cosine histogram does not.
In other words, the CIFAR-100 HamJEPA embeddings look ``cone-like'' in raw normalized space primarily due to
mean-dominance rather than variance collapse.

\paragraph{(5) Feature norms: scale control and the $(q,p)$ energy split.}
SIGReg feature norms are concentrated around $44.891 \pm 0.090$, whereas HamJEPA’s $q$ and $p$ have smaller typical
norms around $31.358 \pm 0.017$ and $31.281 \pm 0.029$.
The concatenated $(q,p)$ norm shifts back to $44.303 \pm 0.027$, closely matching the Pythagorean combination
$\|[q;p]\|_2 \approx \sqrt{\|q\|_2^2 + \|p\|_2^2}$, which is what one expects if $q$ and $p$ maintain comparable
energy rather than one collapsing to near-zero.
This provides a simple consistency check that the phase-space parameterization retains a balanced contribution
from both coordinates.

\subsubsection{80 epoch results}

By design, the CIFAR-100 pretraining curves tend to plateau because the experiment is set up as a minimal, controlled training regime, whose goal is to isolate representational geometry effects rather than to chase absolute accuracy with an aggressively optimized recipe. CIFAR-100 is small and low-resolution, so the encoder typically learns the dominant semantic invariances early; after that point, additional epochs mainly yield diminishing returns (small refinements in neighborhood structure or feature scaling) that often do not translate into large gains in linear-probe/(k)NN top-1 under a fixed evaluation protocol. In other words, once both methods reach a “good-enough” feature space, the remaining headroom is mostly governed by recipe choices we intentionally avoid here (stronger augmentation, longer schedules, larger models, distillation/EMA, more hyperparameter search), since those would confound the attribution: we want any observed differences to come primarily from the geometry induced by the training dynamics, not from extra optimization budget.

\begin{figure*}[p]
  \centering

  \begin{subfigure}[t]{0.5\textwidth}
    \centering
    \includegraphics[width=\linewidth]{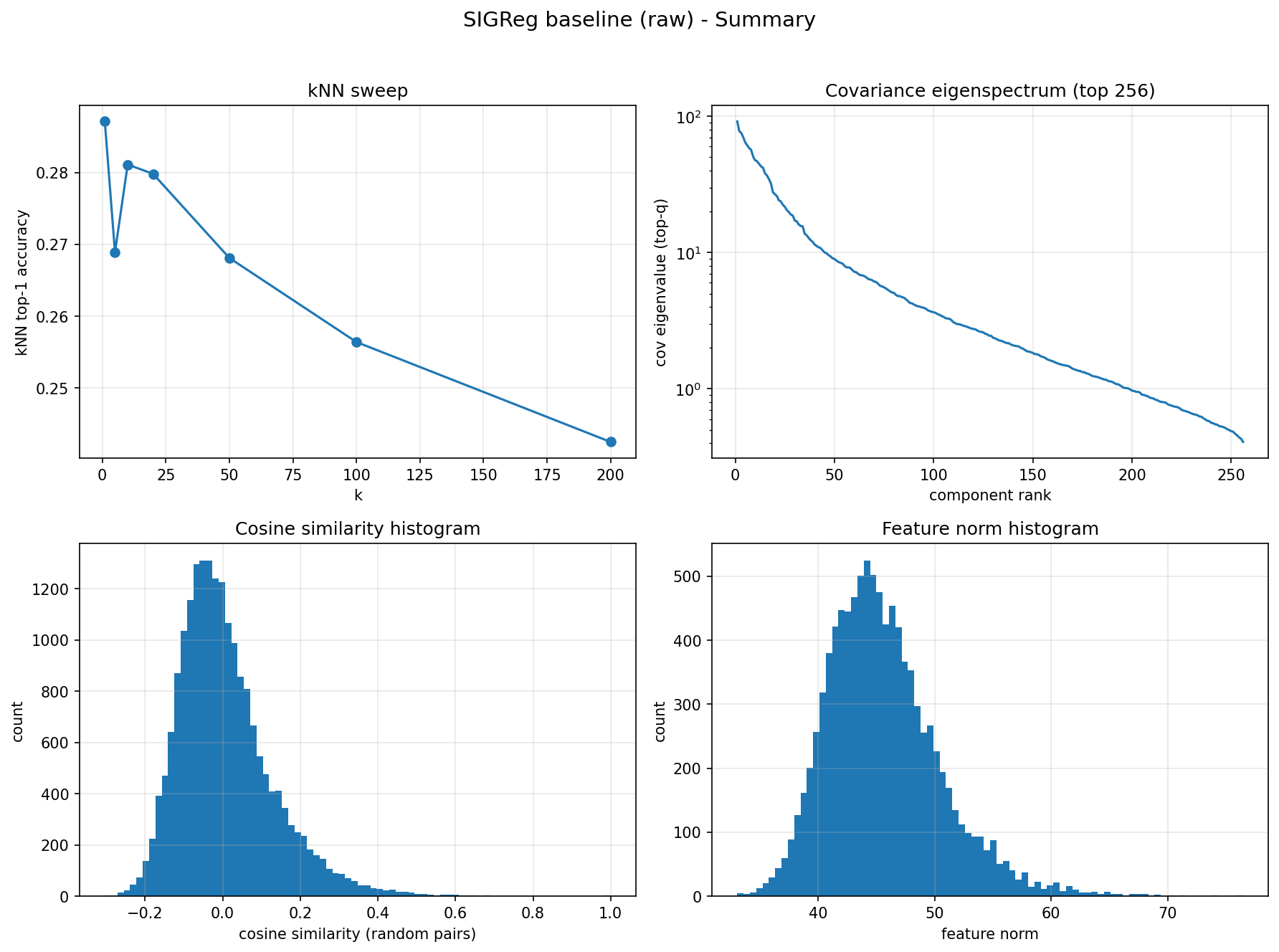}
    \caption{SIGReg (raw).}
    \label{fig:diag_sigreg_raw_80}
  \end{subfigure}\hfill
  \begin{subfigure}[t]{0.5\textwidth}
    \centering
    \includegraphics[width=\linewidth]{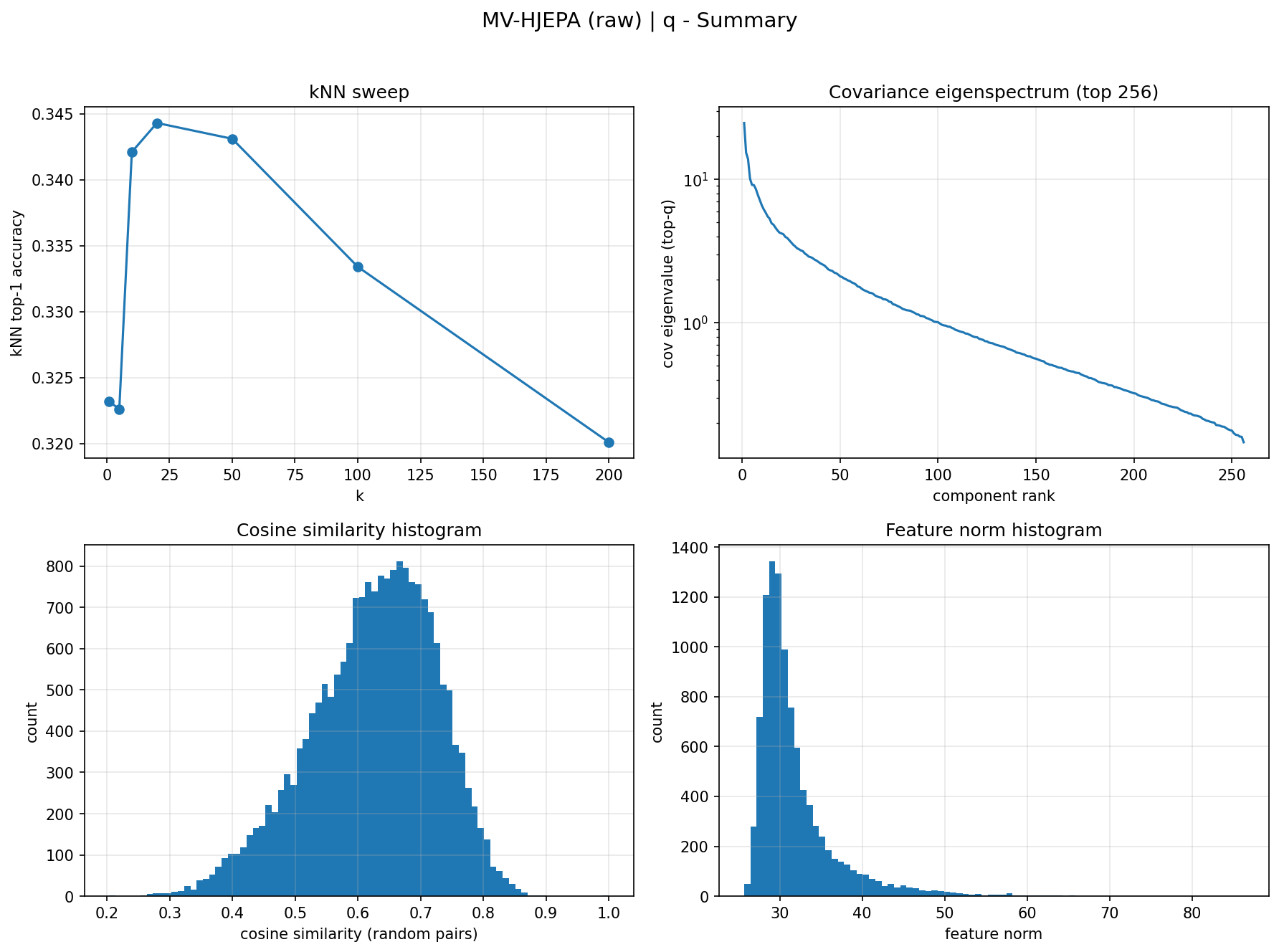}
    \caption{HamJEPA $q$ (raw).}
    \label{fig:diag_mv_hjepa_q_raw_80}
  \end{subfigure}

  \vspace{0.6em}

  \begin{subfigure}[t]{0.5\textwidth}
    \centering
    \includegraphics[width=\linewidth]{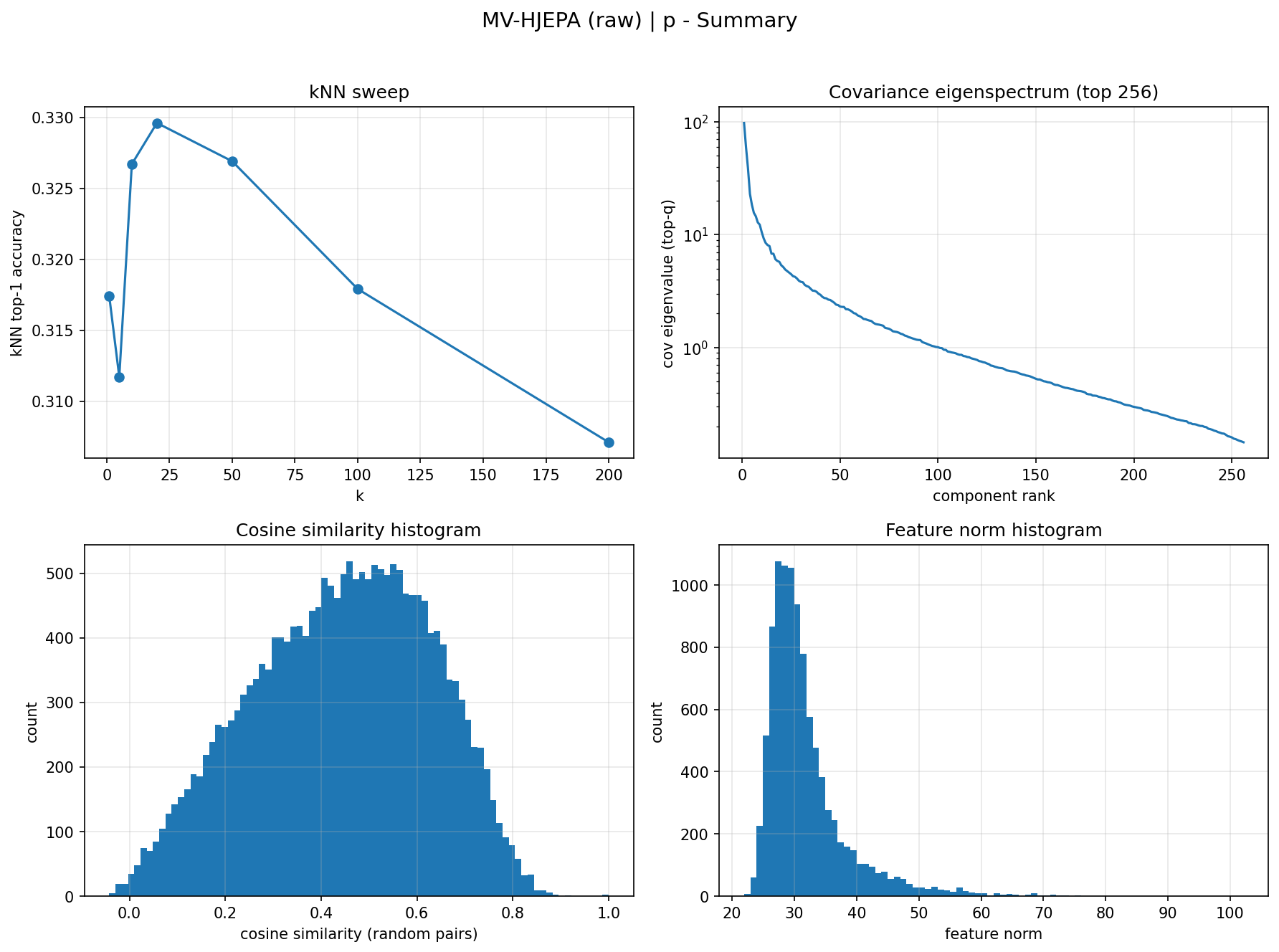}
    \caption{HamJEPA $p$ (raw).}
    \label{fig:diag_mv_hjepa_p_raw_80}
  \end{subfigure}\hfill
  \begin{subfigure}[t]{0.5\textwidth}
    \centering
    \includegraphics[width=\linewidth]{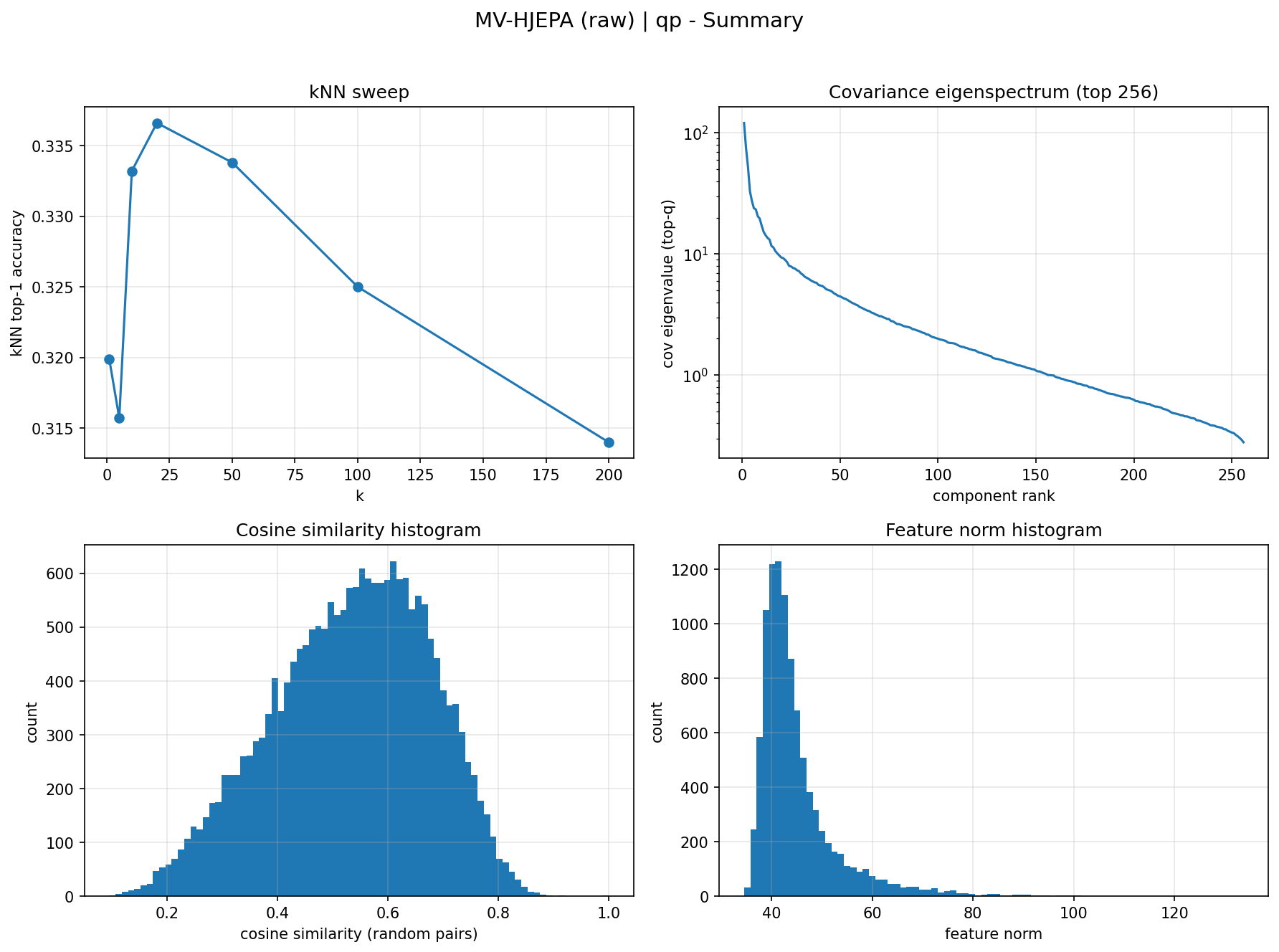}
    \caption{HamJEPA $(q,p)$ (raw).}
    \label{fig:diag_mv_hjepa_qp_raw_80}
  \end{subfigure}

  \caption{\textbf{CIFAR-100 frozen-feature diagnostic summaries (raw) 80 epochs.}
  Each panel contains: (top-left) $k$NN sweep; (top-right) mean-centered covariance eigenspectrum (top 256, log scale);
  (bottom-left) random-pair cosine similarity histogram; (bottom-right) feature norm histogram.}
  \label{fig:diag_grid_raw_80}
\end{figure*}

\begin{table}[H]
\centering
\small
\begin{tabular}{llccc}
\toprule
Method & Variant & Linear probe (\%) & $k$NN@20 (\%) & $k$NN best (\%) \\
\midrule
SIGReg & single & 33.95 & 27.98 & 28.71 @ $k{=}1$ \\
\midrule
MLP-HJEPA (non-symplectic) & $q$ & 42.26 & 28.10 & 28.10 @ $k{=}20$ \\
MLP-HJEPA (non-symplectic) & $p$ & 41.90 & 28.30 & 28.30 @ $k{=}20$ \\
MLP-HJEPA (non-symplectic) & $(q,p)$ & 42.77 & 28.15 & 28.15 @ $k{=}20$ \\
\midrule
HamJEPA (symplectic) & $q$ & 44.59 & 34.43 & 34.43 @ $k{=}20$ \\
HamJEPA (symplectic) & $p$ & 44.44 & 32.96 & 32.96 @ $k{=}20$ \\
HamJEPA (symplectic) & $(q,p)$ & 44.52 & 33.66 & 33.66 @ $k{=}20$ \\
\bottomrule
\end{tabular}
\caption{\textbf{CIFAR-100, 80-epoch predictor-family ablation.}
MLP-HJEPA keeps the phase-space encoder, channel-wise $q/p$ split, identity projector, prediction loss, anti-collapse regularizers, optimizer, batch size, and 80-epoch schedule fixed, and replaces only the symplectic leapfrog predictor by a non-symplectic residual MLP predictor.}
\label{tab:cifar80_predictor_ablation}
\end{table}

\paragraph{Predictor-family ablation: what changes and what stays fixed.}
In addition to SIGReg and symplectic HamJEPA, we evaluate a non-symplectic predictor ablation, \emph{MLP-HJEPA}, that keeps the phase-space encoder, channel-wise $q/p$ split, identity projector, prediction loss, anti-collapse regularizers, optimizer, batch size, and training schedule fixed, and replaces only the leapfrog Hamiltonian rollout by a generic residual MLP rollout predictor. Because Fig.~\ref{fig:diag_grid_raw_80} visualizes only SIGReg and symplectic HamJEPA, we summarize the MLP ablation numerically in Table~\ref{tab:cifar80_predictor_ablation}. This is therefore a clean apples-to-apples test of whether the gains come mainly from the phase-space representation and encoder-side regularization, or from the symplectic predictor itself.

\paragraph{(1) Neighborhood quality: $k$NN sweeps and local semantic structure.}
At 80 epochs, SIGReg reaches $27.98\%$ at $k$NN@20 and peaks at $28.71\%$ over the full sweep (at $k{=}1$). It peaks at small $k$ and degrades as $k$ grows, consistent with semantics being locally present but becoming less class-pure as neighborhoods expand. The non-symplectic MLP-HJEPA ablation barely changes this picture: its best $k$NN@20 is only $28.30\%$ (from the $p$ readout), a gain of just $+0.32$ points over SIGReg, and its best-over-$k$ score is in fact slightly \emph{below} SIGReg ($28.30\%$ vs.\ $28.71\%$). By contrast, symplectic HamJEPA is qualitatively different: the $q$ readout reaches $34.43\%$ at $k$NN@20 and also attains the best-over-$k$ value of $34.43\%$ (at $k{=}20$), namely gains of $+6.45$ points at $k$NN@20 and $+5.72$ points best-over-$k$ relative to SIGReg. 
Notably, the \emph{single} $q$ state (1024-D) outperforms the SIGReg baseline (2048-D) across the plotted $k$ values.
Best-vs.-best, the symplectic predictor outperforms the MLP predictor by $+6.13$ points at $k$NN@20. This is the sharpest evidence that the symplectic rollout is doing something specifically geometric: the phase-space encoder alone does \emph{not} automatically produce high-purity local neighborhoods.

\paragraph{(2) Linear probe: phase-space structure already helps, but symplecticity still matters.}
The linear-probe comparison is more nuanced. MLP-HJEPA already substantially outperforms SIGReg, reaching $42.77\%$ top-1 in its best variant ($(q,p)$) versus $33.95\%$ for SIGReg, a gain of $+8.82$ points. Symplectic HamJEPA pushes this further to $44.59\%$ (best $q$ readout), i.e.\ another $+1.82$ points over MLP-HJEPA and $+10.64$ over SIGReg. This suggests that a large part of the linear-separability gain comes from the \emph{phase-space parameterization plus encoder-side anti-collapse bridge}, not only from symplecticity. However, once nonparametric neighborhood quality is examined, the predictor family becomes decisive: the MLP predictor narrows much of the probe gap, but essentially none of the $k$NN gap.

\paragraph{(3) $q/p$ decomposition: structured phase space versus generic feature blocks.}
Within MLP-HJEPA, the three readouts are almost indistinguishable: linear probe is $42.26\%$/$41.90\%$/$42.77\%$ for $q$/$p$/$ (q,p)$, and $k$NN@20 is $28.10\%$/$28.30\%$/$28.15\%$. In other words, once the symplectic predictor is removed, the $q/p$ split behaves much more like a generic two-block partition of the feature vector than like a structured position-momentum state space. This contrasts with symplectic HamJEPA, where $q$ is the cleanest neighborhood readout, $p$ is slightly weaker, and $(q,p)$ remains close but does not dominate. The natural interpretation is that the Hamiltonian rollout induces a genuine functional differentiation between the two coordinates, whereas the MLP rollout does not.

\paragraph{(4) Mean-centered covariance geometry: high-rank symplectic $q$ versus concentrated MLP states.}
The scalar covariance diagnostics sharpen this distinction. SIGReg has effective rank $\approx 99.80$ and participation ratio $\approx 58.32$ on the top-256 eigenspectrum. Symplectic HamJEPA-$q$ is substantially broader still, with effective rank $\approx 128.98$ and participation ratio $\approx 70.34$, confirming that its downstream gains are \emph{not} coming from collapse onto a narrow subspace. By contrast, MLP-HJEPA is much more spectrally concentrated: its effective rank is only $\approx 42.06$ for $q$, $\approx 43.95$ for $p$, and $\approx 42.85$ for $(q,p)$, with participation ratios only $\approx 23.65$--$24.99$. So although the MLP ablation yields good linear-probe accuracy, it does so with a much more concentrated mean-centered covariance geometry than symplectic HamJEPA. Put differently: the MLP predictor can produce a linearly usable representation, but it does not organize a broad, high-rank content geometry in the way the symplectic predictor does.

\paragraph{(5) Cosine similarities and feature norms: the predictor changes geometry, not just scale.}
The random-pair cosine statistics again place MLP-HJEPA much closer to SIGReg than to symplectic HamJEPA. SIGReg has cosine mean $\approx 0.006$ with standard deviation $\approx 0.121$, while MLP-HJEPA remains near zero as well (cosine means $\approx 0.042$ for $q$, $\approx 0.053$ for $p$, and $\approx 0.047$ for $(q,p)$, all with standard deviation around $0.19$). Symplectic HamJEPA, in contrast, exhibits strongly positive cosine means ($\approx 0.625$ for $q$, $\approx 0.443$ for $p$, and $\approx 0.534$ for $(q,p)$), reflecting the markedly more aligned raw geometry visible in Fig.~\ref{fig:diag_grid_raw_80}. Importantly, this difference is \emph{not} explained by trivial scale effects: the feature norms of MLP-HJEPA are nearly the same as those of symplectic HamJEPA (means $\approx 32.21$, $31.89$, and $45.32$ for MLP-HJEPA versus $\approx 31.81$, $31.88$, and $45.06$ for symplectic HamJEPA). Thus the predictor family is primarily changing \emph{how} variance is organized and aligned, not simply how large the features are.

\paragraph{Takeaway (CIFAR-100, 80 epochs).}
The MLP ablation is informative in exactly the right way. It shows that the phase-space encoder, $q/p$ split, and non-isotropic anti-collapse regularization already explain a large fraction of the linear-probe improvement over SIGReg. However, they do \emph{not} explain the major improvement in nonparametric neighborhood quality. That gain appears to be specifically tied to the symplectic predictor: relative to the non-symplectic MLP rollout, symplectic HamJEPA produces much better $k$NN structure, a broader and less concentrated mean-centered covariance for the content coordinate, and a more distinctive raw embedding geometry. In this sense, the symplectic component is not merely a mild architectural preference; it is the ingredient that turns a linearly useful phase-space representation into one with substantially cleaner local semantic geometry.

\subsection{Discussion ImageNet-100}
\label{app:diag_discussion_imagenet}

\paragraph{(1) Frozen-feature quality: $k$NN and probe.}
On ImageNet-100, HamJEPA again improves non-parametric and linear evaluation.
In Fig.~\ref{fig:diag_grid_inet}, the dimension-matched comparison (2048-D) is SIGReg $(q,p)$ versus HamJEPA $(q,p)$.
At the final checkpoint, SIGReg $(q,p)$ reaches a best $k$NN top-1 of $21.7\%$ (peak near $k{=}50$),
whereas HamJEPA $(q,p)$ reaches $25.9\%$ (a $+4.2$ point gain).
For linear probing, SIGReg $(q,p)$ reaches $25.5\%$ top-1, whereas HamJEPA $(q,p)$ reaches $32.0\%$ (a $+6.5$ point gain).

Figure~\ref{fig:imagenet_epoch_dynamics} additionally provides a strict 1024-D comparison on $q$:
at epoch 45, HamJEPA-$q$ improves linear probe by $+7.52$ points and best $k$NN by $+4.90$ points over SIGReg-$q$
(as annotated).

\paragraph{(2) Covariance rank: HamJEPA spreads variance across many directions.}
The covariance spectra (top-right) and the effective-rank proxy show a stark contrast:
SIGReg-$q$ and SIGReg-$p$ have low effective rank ($\approx 38$--$39$ on the top-256 eigenvalues), whereas
HamJEPA-$q$ and HamJEPA-$p$ are much higher ($\approx 95$--$98$).
Thus, on ImageNet-100, HamJEPA’s gains are associated with \emph{more} variance being spread across principal
directions (higher rank), not less.

\paragraph{(3) Cosine similarities: mild global alignment without extreme cone structure.}
Unlike CIFAR-100, ImageNet-100 does \emph{not} show an extreme cone effect for HamJEPA.
SIGReg’s random-pair cosines remain centered near zero (mean $\approx 0$), while HamJEPA shifts to moderately
positive cosines (mean $\approx 0.27$--$0.28$ depending on $q/p/(q,p)$).
Given the simultaneously high effective rank of the mean-centered covariance, this again suggests a shared
direction and/or non-zero mean, but the effect is substantially weaker than on CIFAR-100.

\paragraph{(4) $q/p$ decomposition: complementarity for SIGReg, redundancy for HamJEPA.}
Figure~\ref{fig:imagenet_qp_decomp} makes the structural difference explicit.
For SIGReg+tokens, $(q,p)$ is consistently best throughout training, indicating that $q$ and $p$ halves are not
individually sufficient but become competitive when concatenated.
For HamJEPA, $q$ and $p$ are individually as strong as $(q,p)$ (both for probe and $k$NN), suggesting that both
coordinates carry similar semantic content and that concatenation adds little.

\paragraph{(5) Dynamics: accuracy tracks geometry for HamJEPA, not for SIGReg.}
In Fig.~\ref{fig:imagenet_epoch_dynamics}, HamJEPA’s effective rank increases over training while probe accuracy
improves, yielding a strong positive association in the probe-versus-rank plot (annotated $\rho$).
SIGReg shows the opposite: effective rank decreases while accuracy rises slightly, yielding a negative association.
This supports the interpretation that HamJEPA’s ImageNet gains co-occur with a progressively less concentrated
(mean-centered) covariance structure.

\paragraph{Takeaway (ImageNet-100).}
On ImageNet-100, HamJEPA improves $k$NN and linear probing while producing substantially higher-rank mean-centered
covariance than SIGReg and only moderately more positive cosine alignment. In contrast to SIGReg+tokens, HamJEPA
makes both $q$ and $p$ individually strong, with little benefit from concatenation.

\begin{figure*}[!t]
  \centering

  \begin{subfigure}[t]{0.49\textwidth}
    \centering
    \includegraphics[width=\linewidth]{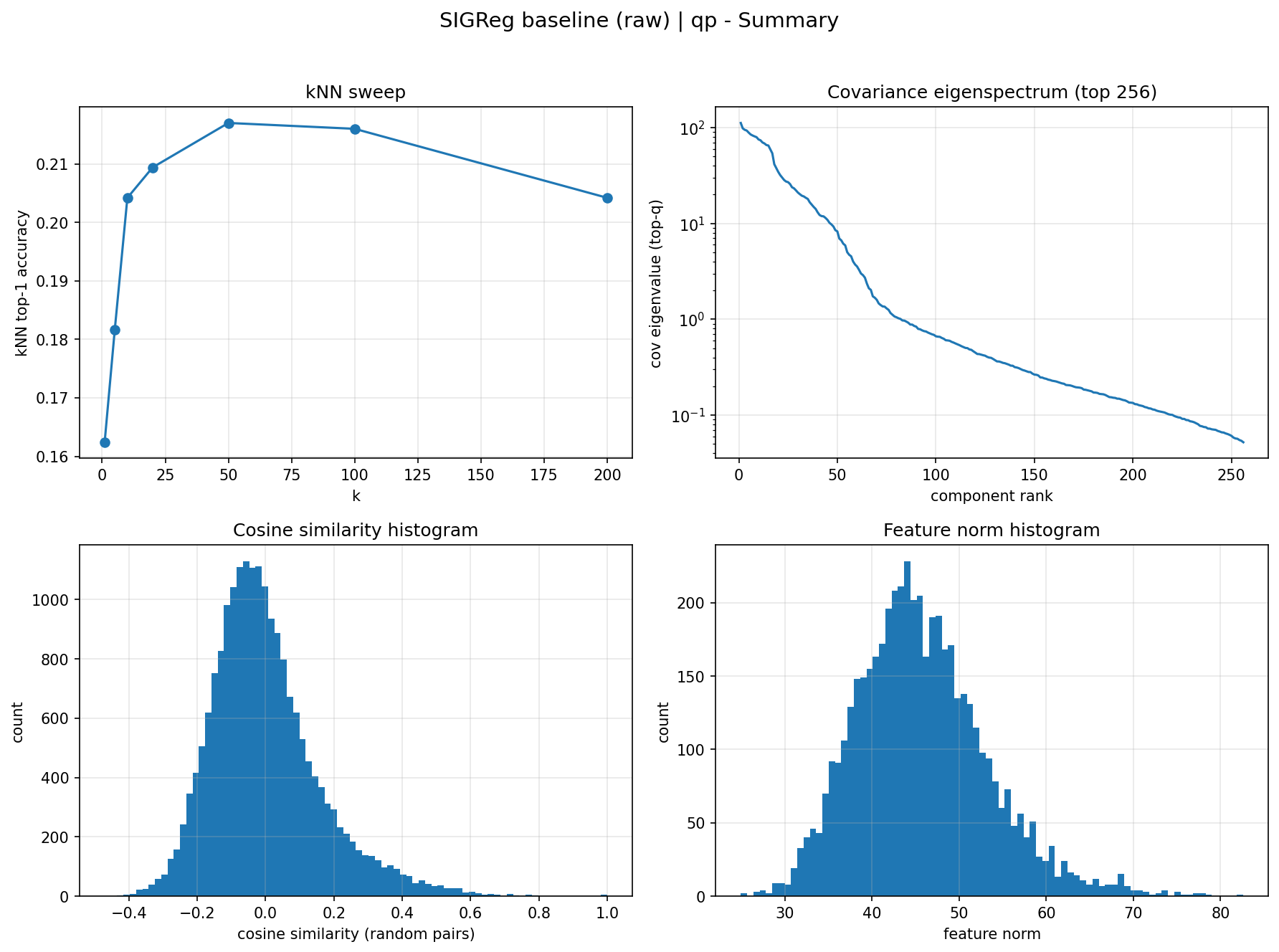}
    \caption{SIGReg (raw).}
    \label{fig:diag_sigreg_raw_inet}
  \end{subfigure}\hfill
  \begin{subfigure}[t]{0.49\textwidth}
    \centering
    \includegraphics[width=\linewidth]{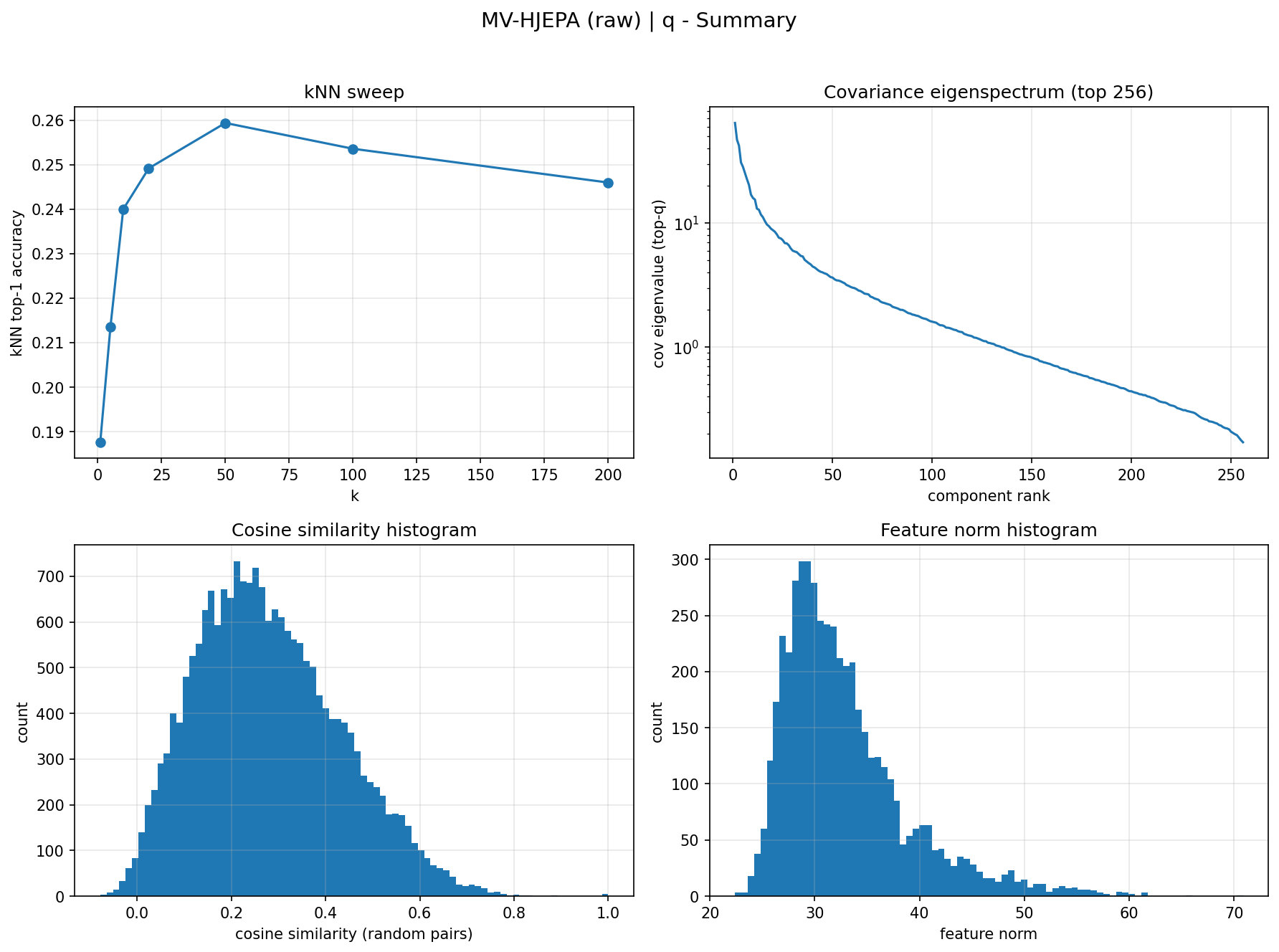}
    \caption{HamJEPA $q$ (raw).}
    \label{fig:diag_mv_hjepa_q_raw_inet}
  \end{subfigure}

  \vspace{0.6em}

  \begin{subfigure}[t]{0.49\textwidth}
    \centering
    \includegraphics[width=\linewidth]{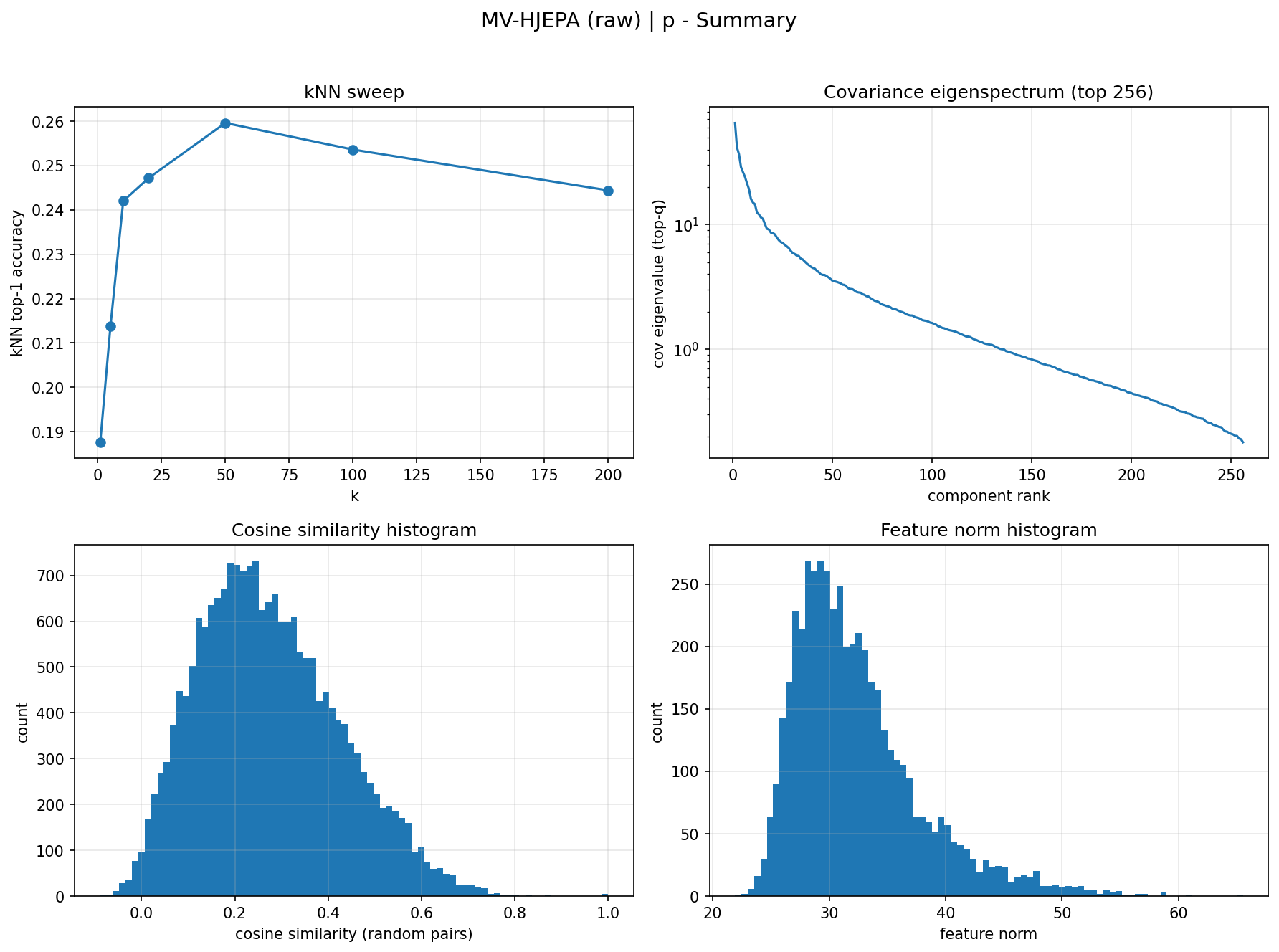}
    \caption{HamJEPA $p$ (raw).}
    \label{fig:diag_mv_hjepa_p_raw_inet}
  \end{subfigure}\hfill
  \begin{subfigure}[t]{0.49\textwidth}
    \centering
    \includegraphics[width=\linewidth]{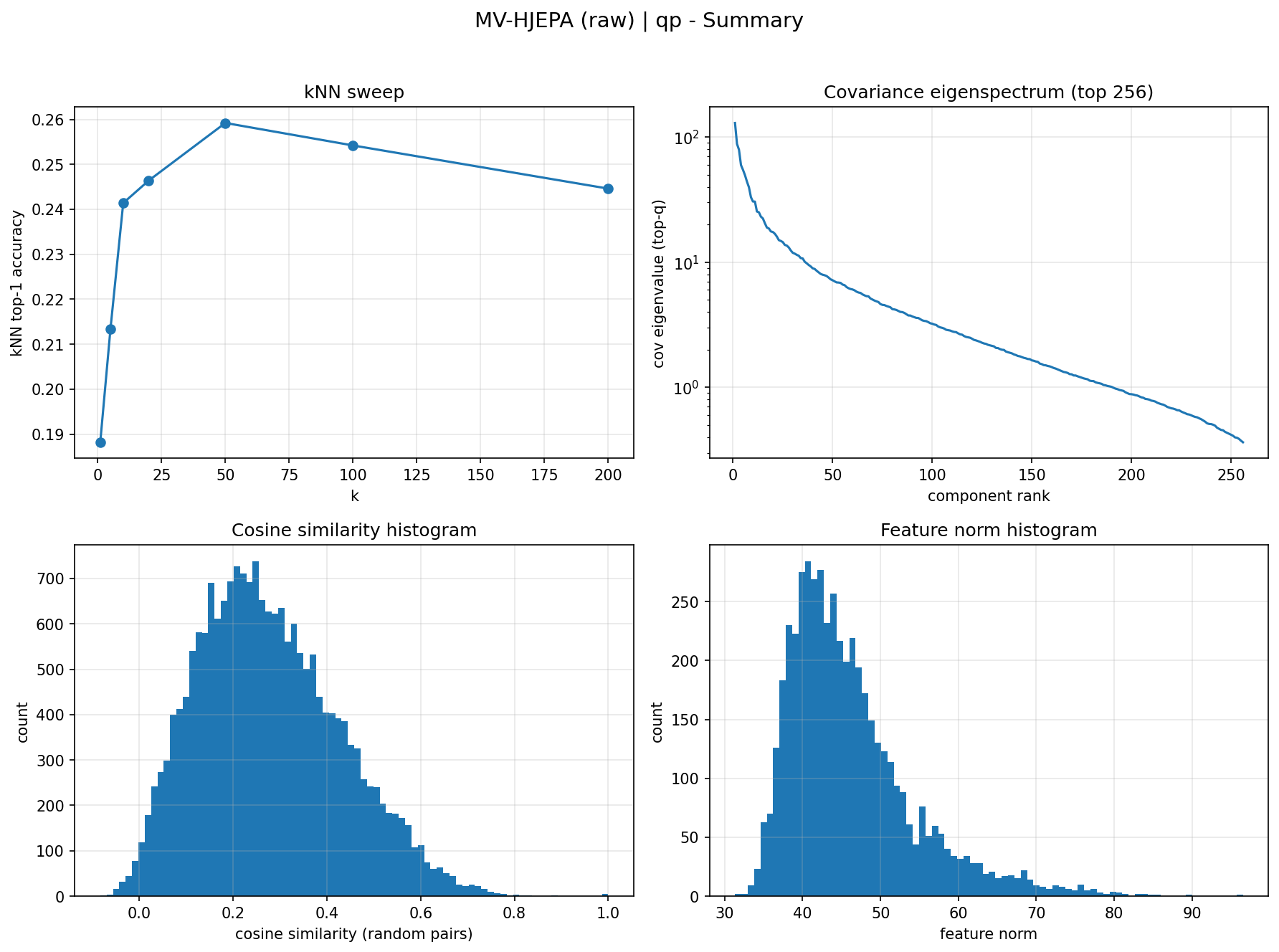}
    \caption{HamJEPA $(q,p)$ (raw).}
    \label{fig:diag_mv_hjepa_qp_raw_inet}
  \end{subfigure}

  \caption{\textbf{ImageNet-100 frozen-feature diagnostic summaries (raw).}
  Each panel contains: (top-left) $k$NN sweep; (top-right) covariance eigenspectrum (top 256, log scale);
  (bottom-left) cosine similarity histogram (random pairs); (bottom-right) feature norm histogram.}
  \label{fig:diag_grid_inet}
\end{figure*}

\FloatBarrier

\begin{figure*}[p]
  \centering
  \begin{subfigure}[t]{0.49\textwidth}
    \centering
    \includegraphics[width=\linewidth]{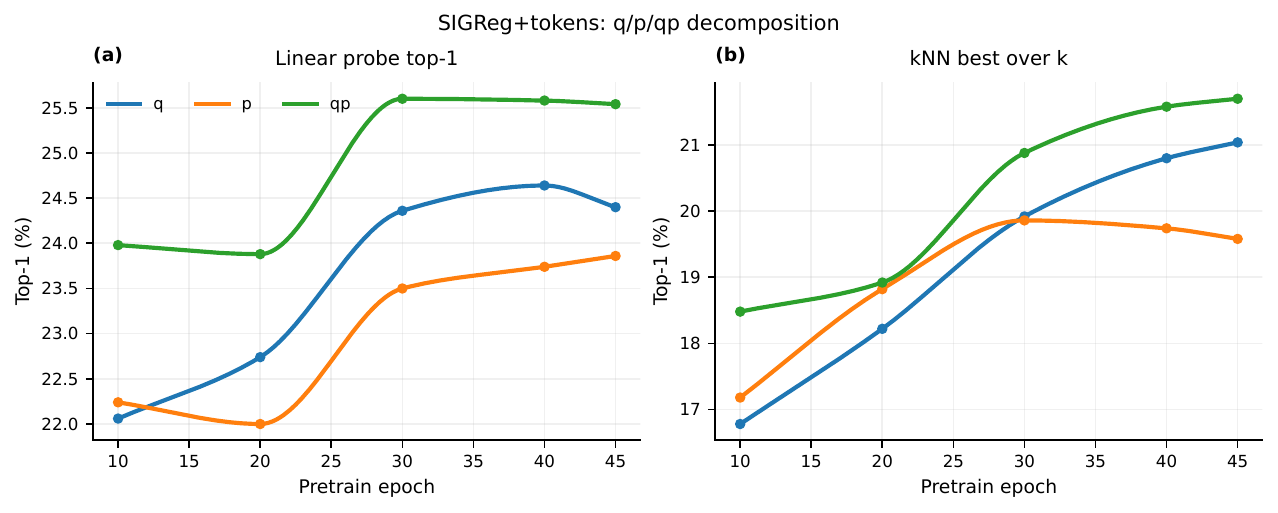}
    \caption{SIGReg (tokens).}
    \label{fig:imagenet_qp_decomp_sigreg}
  \end{subfigure}\hfill
  \begin{subfigure}[t]{0.49\textwidth}
    \centering
    \includegraphics[width=\linewidth]{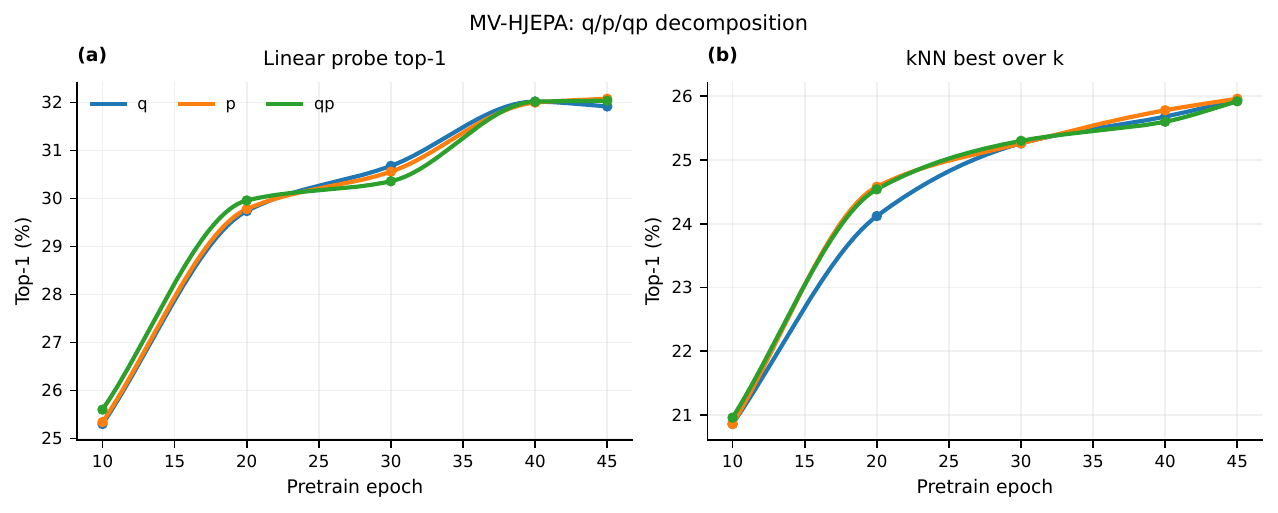}
    \caption{HamJEPA.}
    \label{fig:imagenet_qp_decomp_hjepa}
  \end{subfigure}

  \caption{\textbf{$q/p$ decomposition across pretraining (ImageNet-100).}
  We evaluate frozen features extracted at sparse pretraining checkpoints using (left) a linear probe top-1 and (right) the best $k$NN top-1 across the evaluated $k$ values.
  For SIGReg+tokens, concatenating $(q,p)$ is consistently better than either half alone, indicating that the split halves are complementary (and/or that doubling feature dimension helps). For HamJEPA, $q$, $p$, and $(q,p)$ track each other closely, suggesting that both phase-space coordinates are individually informative and largely redundant under the symplectic objective. (Left) SIGReg tokens. (Right) HamJEPA.}
  \label{fig:imagenet_qp_decomp}
\end{figure*}

\begin{figure*}[p]
  \centering
  \includegraphics[width=0.7\textwidth]{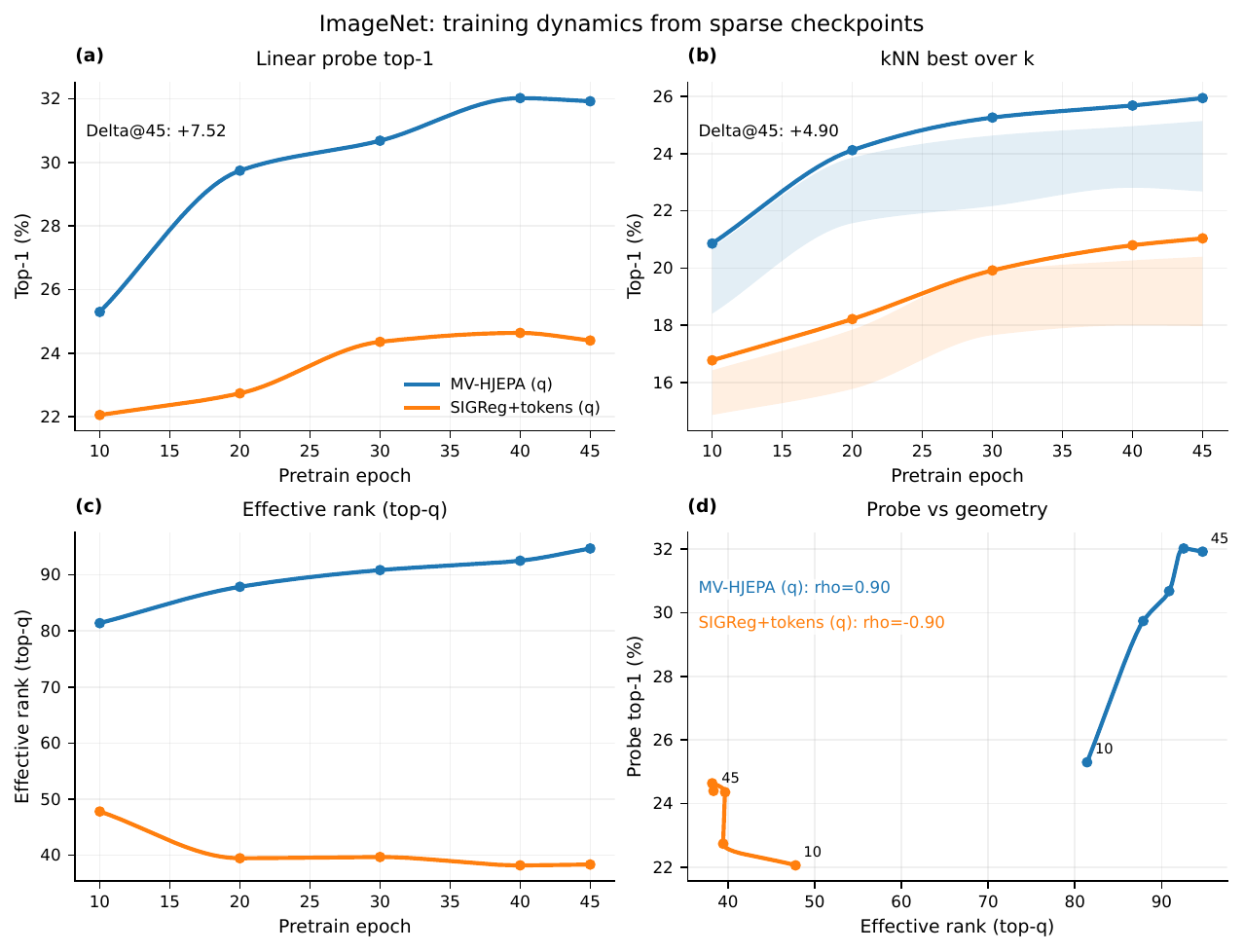}
  \caption{\textbf{ImageNet-100 training dynamics from sparse checkpoints (geometry vs.\ accuracy).}
  Using the $q$ representation at a set of pretraining epochs, we plot:
  (a) linear probe top-1; (b) best $k$NN top-1 over $k$ (shaded region indicates the spread over the $k$ sweep);
  (c) effective-rank proxy computed from the top-256 eigenvalues of the mean-centered $q$ covariance;
  (d) probe accuracy versus effective rank (points annotated by epoch), with the plotted correlation coefficient $\rho$.
  HamJEPA improves steadily and shows increasing effective rank, while SIGReg+tokens improves modestly while its
  effective rank decreases; the deltas at epoch 45 are annotated in panels (a,b).}
  \label{fig:imagenet_epoch_dynamics}
\end{figure*}
\FloatBarrier

\subsection{Further figures}

Figures~\ref{fig:lejepa_style_projection_cartoon}--\ref{fig:lejepa_style_directional_discrepancy}
provide an intuition for a LeJEPA-inspired \emph{sliced-projection} diagnostic: rather than comparing rollout
distributions only in aggregate, we compare many 1D projections (a Cram\'er--Wold viewpoint) to expose
direction-dependent geometric distortion.

\begin{figure*}[!b]
  \centering
  \includegraphics[width=0.9\textwidth]{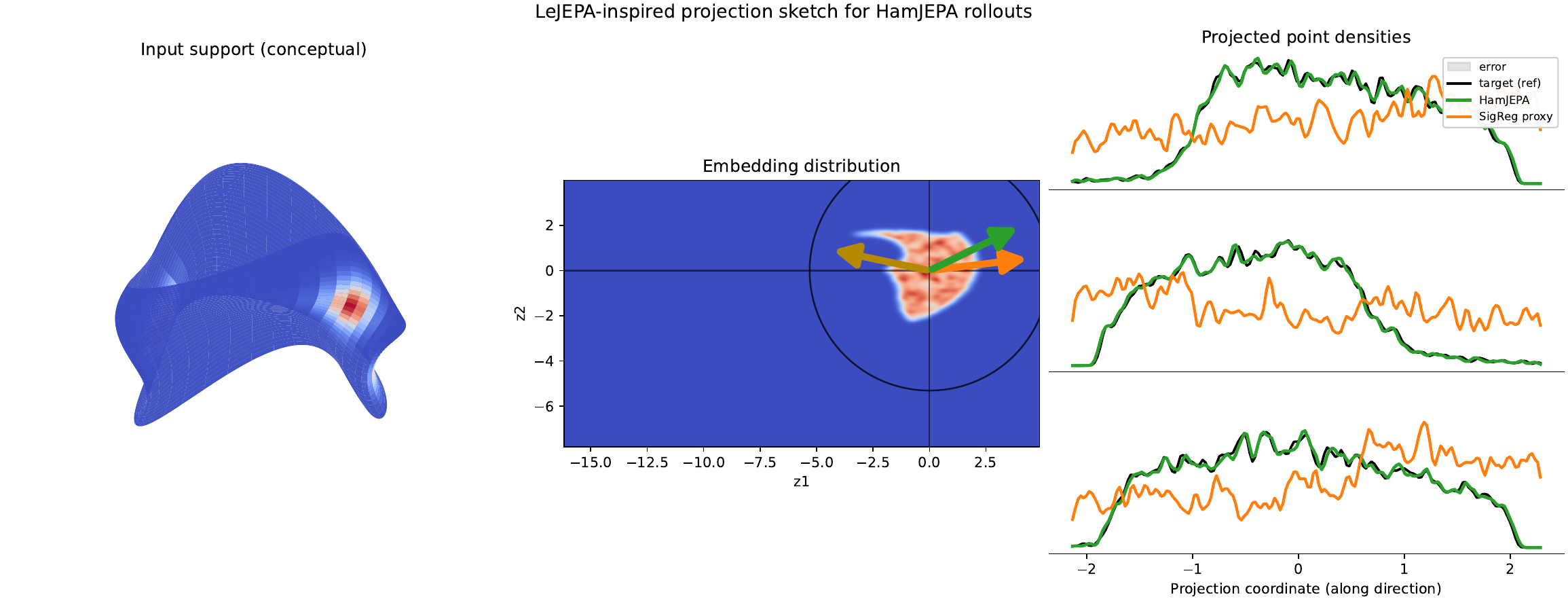}
  \caption{\textbf{LeJEPA-inspired sliced-projection diagnostic for HamJEPA rollouts (cartoon).}
  Left: conceptual support of the input distribution (manifold-structured data).
  Middle: an example embedding distribution in a 2D slice with several projection directions $u(\theta)$.
  Right: estimated 1D projected densities along representative directions; black denotes the reference/target,
  green denotes HamJEPA rollouts, orange denotes a SIGReg proxy baseline, and the shaded band visualizes the
  per-direction projection error.
  Matching many 1D projections (Cram\'er--Wold/sliced view) makes geometric distortion from non-structure-preserving
  dynamics visible, while symplectic rollouts remain close to the reference across directions.}
  \label{fig:lejepa_style_projection_cartoon}
\end{figure*}

\begin{figure*}[!b]
  \centering
  \includegraphics[width=0.9\textwidth]{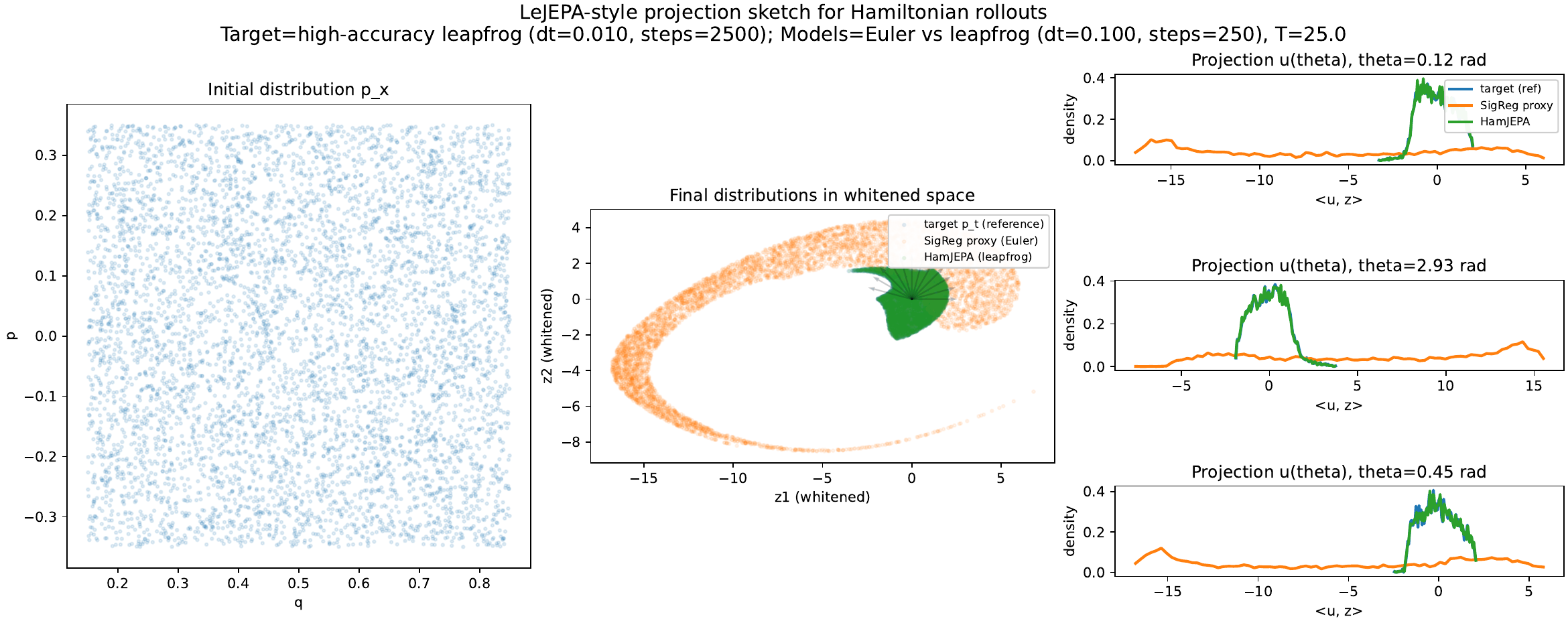}
  \caption{\textbf{LeJEPA-style projection sketch on a toy Hamiltonian transport problem.}
  Left: initial phase-space distribution $p_x$ over $(q,p)$.
  Middle: final distributions in whitened coordinates after rolling out to a fixed horizon $T$;
  the reference target is generated by a high-accuracy leapfrog integrator, while the two model curves compare
  a coarse Euler rollout (SIGReg proxy) against a coarse leapfrog rollout (HamJEPA).
  Right: 1D densities of projections $\langle u(\theta), z\rangle$ for representative angles $\theta$,
  showing that leapfrog-based rollouts track the reference distribution across directions, whereas Euler produces
  large direction-dependent mismatch.}
  \label{fig:lejepa_style_projection_sketch}
\end{figure*}

\begin{figure*}[!t]
  \centering
  \includegraphics[width=0.9\textwidth]{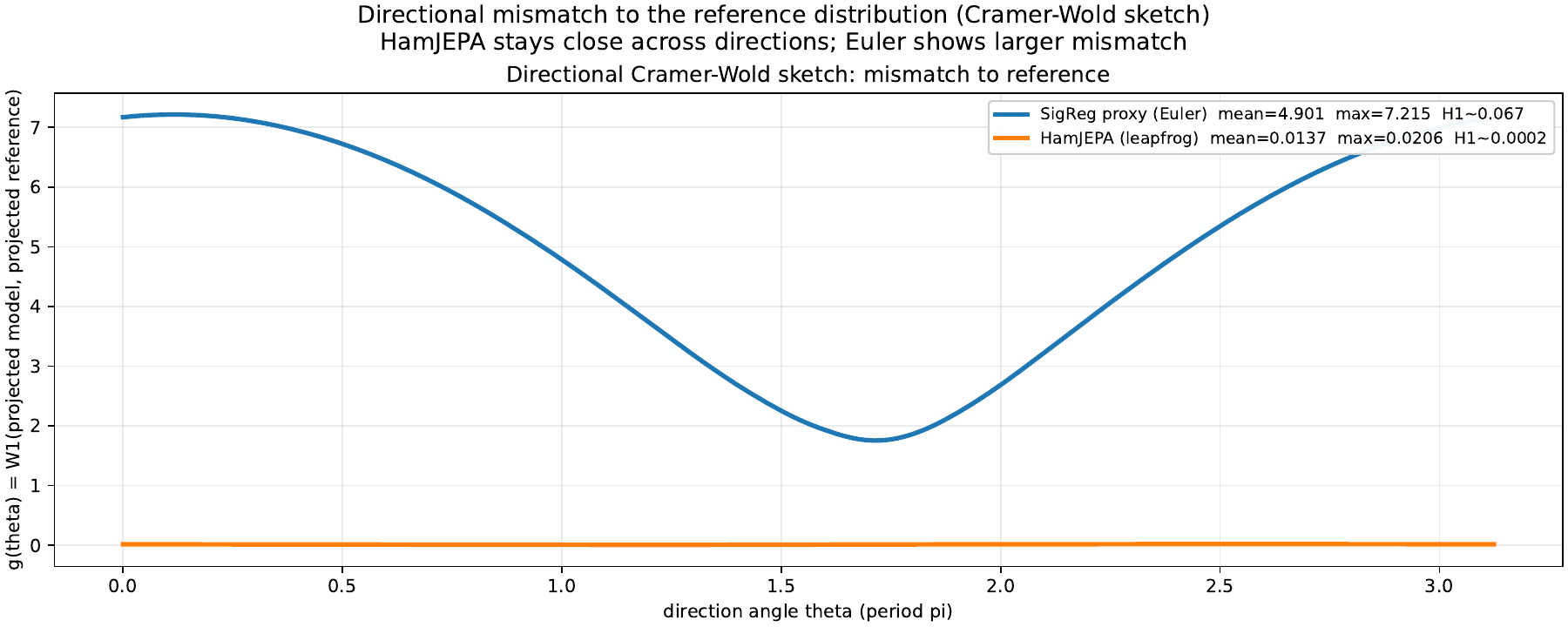}
  \caption{\textbf{Directional Cram\'er--Wold discrepancy (sliced mismatch) versus projection angle.}
  For each direction angle $\theta$ (period $\pi$), we compute a 1D mismatch
  $g(\theta)=W_1(\langle u(\theta),Z_{\text{model}}\rangle,\langle u(\theta),Z_{\text{ref}}\rangle)$ between the
  projected model and reference distributions.
  HamJEPA (leapfrog) stays uniformly close across directions, while the Euler/SIGReg proxy exhibits large,
  anisotropic deviation; the legend reports summary statistics over $\theta$ (mean/max and the aggregate score used
  in the plot).}
  \label{fig:lejepa_style_directional_discrepancy}
\end{figure*}
\FloatBarrier

\section{A visual guide to HamJEPA}
\label{app:visual_hamjepa}

This appendix gives a diagrammatic explanation of HamJEPA for readers who want a more visual account than the equation-level presentation in the main text. The figures here are explanatory schematics rather than empirical results. Their purpose is to separate the method into four conceptual layers: (i) the overall two-view prediction pipeline, (ii) how the encoder constructs the phase-space state $s=[q;p]$, (iii) how the predictor evolves that state by a symplectic Hamiltonian rollout, and (iv) why encoder-side anti-collapse regularization is still required even when the predictor itself is symplectic.

\FloatBarrier

\begin{figure}[H]
\centering
\resizebox{0.97\textwidth}{!}{%
\begin{tikzpicture}[x=1cm,y=1cm]

  \node[hambox, minimum width=1.25cm, minimum height=1.25cm] (img) at (0,0) {image\\$x$};
  \coordinate (branch) at (1.15,0);

  \node[hamsmall, minimum width=1.35cm] (taua) at (2.45,1.0) {$\tau_a$};
  \node[hamsmall, minimum width=1.35cm] (taub) at (2.45,-1.0) {$\tau_b$};

  \node[hambox, minimum width=1.55cm, minimum height=0.88cm] (va) at (4.55,1.0) {view $v_a$};
  \node[hambox, minimum width=1.55cm, minimum height=0.88cm] (vb) at (4.55,-1.0) {view $v_b$};

  \node[hambox, minimum width=1.8cm, minimum height=2.85cm, fill=hamlight] (enc) at (7.15,0) {shared\\encoder\\$E_\theta$};

  \draw[qfill, rounded corners=2pt, line width=0.9pt] (9.35,1.55) rectangle +(1.55,0.72);
  \draw[pfill, rounded corners=2pt, line width=0.9pt] (9.35,0.73) rectangle +(1.55,0.72);
  \node[hammath] at (10.125,1.91) {\hm{q_a}};
  \node[hammath] at (10.125,1.09) {\hm{p_a}};
  \node[hammath] at (10.125,2.62) {\hm{s_a}};

  \draw[qfill, rounded corners=2pt, line width=0.9pt] (9.35,-0.73) rectangle +(1.55,0.72);
  \draw[pfill, rounded corners=2pt, line width=0.9pt] (9.35,-1.55) rectangle +(1.55,0.72);
  \node[hammath] at (10.125,-0.37) {\hm{q_b}};
  \node[hammath] at (10.125,-1.19) {\hm{p_b}};
  \node[hammath] at (10.125,-1.90) {\hm{s_b}};

  \node[hambox, minimum width=2.55cm, minimum height=1.18cm, fill=hamlight] (flow) at (13.15,1.45) {symplectic flow\\\hm{\Phi_\phi^{K}}};

  \draw[qfill, rounded corners=2pt, line width=0.9pt] (15.75,1.91) rectangle +(1.55,0.72);
  \draw[pfill, rounded corners=2pt, line width=0.9pt] (15.75,1.09) rectangle +(1.55,0.72);
  \node[hammath] at (16.525,2.27) {\hm{\hat q_b}};
  \node[hammath] at (16.525,1.45) {\hm{\hat p_b}};
  \node[hammath] at (16.525,2.98) {\hm{\hat s_b}};

  \node[hambox, minimum width=2.15cm, minimum height=0.72cm] (loss) at (13.15,-1.15) {prediction loss};
  \node[hambox, regfill, minimum width=4.9cm, minimum height=0.88cm] (regs) at (12.35,-2.95) {encoder-side anti-collapse regularizers};

  \coordinate (saeast)   at (10.90,1.45);
  \coordinate (sbout)    at (10.90,-1.19);   
  \coordinate (shatleft) at (15.75,1.45);
  \coordinate (shatdown) at (16.525,1.09);

  \coordinate (lossL) at ($(loss.north)+(-0.48,0)$);
  \coordinate (lossR) at ($(loss.north)+(0.48,0)$);

  \coordinate (regL) at ($(regs.north)+(-1.10,0)$);
  \coordinate (regR) at ($(regs.north)+(1.10,0)$);

  \draw[hamarrow] (img.east) -- (branch);
  \draw[hamarrow] (branch) |- (taua.west);
  \draw[hamarrow] (branch) |- (taub.west);

  \draw[hamarrow] (taua.east) -- (va.west);
  \draw[hamarrow] (taub.east) -- (vb.west);

  \draw[hamarrow] (va.east) -- ($(enc.west)+(0,0.58)$);
  \draw[hamarrow] (vb.east) -- ($(enc.west)+(0,-0.58)$);

  \draw[hamarrow] ($(enc.east)+(0,0.58)$) -- (9.35,1.55);
  \draw[hamarrow] ($(enc.east)+(0,-0.58)$) -- (9.35,-0.73);

  \draw[hamarrow] (saeast) -- (flow.west);
  \draw[hamarrow] (flow.east) -- (shatleft);

  \draw[hamsoftarrow] (sbout) -- (loss.west);
  \draw[hamsoftarrow] (shatdown) -- ++(0,-0.30) -| (lossR);

  \draw[hamsoftarrow] (10.125,0.73)  -- ++(0,-0.42) -| (regL);
  \draw[hamsoftarrow] (10.125,-1.55) -- ++(0,-0.22) -| (regR);

\end{tikzpicture}%
}
\caption{\textbf{HamJEPA at the highest level.}
Two augmented views of the same image are encoded by a shared encoder into phase-space states $s_a=[q_a;p_a]$ and $s_b=[q_b;p_b]$. The predictor is not an unconstrained regression head: it is a structured Hamiltonian rollout $\Phi_\phi^{K}$ that evolves $s_a$ into a predicted target state $\hat s_b$. Training combines a prediction loss between $\hat s_b$ and $s_b$ with separate encoder-side anti-collapse regularizers applied across the batch of states.}
\label{fig:hamjepa_visual_overview}
\end{figure}


\begin{figure}[H]
\centering
\resizebox{0.97\textwidth}{!}{%
\begin{tikzpicture}[x=1cm,y=1cm]

  \node[hambox, minimum width=1.2cm, minimum height=1.15cm] (view) at (0,0) {view\\$v$};
  \node[hambox, minimum width=2.0cm, minimum height=1.0cm, fill=hamlight] (resnet) at (2.45,0) {ResNet-18\\token mode};
  \node[hambox, minimum width=1.9cm, minimum height=1.0cm, fill=hamlight] (feat) at (4.95,0) {layer3\\feature map};

  \begin{scope}[shift={(6.65,-1.15)}]
    \draw[hamgroup] (-0.35,-0.35) rectangle (2.95,2.95);
    \foreach \i in {0,1,2,3}{
      \foreach \j in {0,1,2,3}{
        \draw[line width=0.6pt, draw=black!55, fill=white]
          (0.67*\i,0.67*\j) rectangle +(0.60,0.60);
      }
    }
    \node[hammath] at (1.3,3.32) {token grid};

    \coordinate (gridtoken) at (1.64,1.64);
  \end{scope}

  \begin{scope}[shift={(10.3,-1.55)}]
    \draw[hamgroup] (-0.35,-0.35) rectangle (2.75,4.15);

    \node[hammath] at (1.15,3.78) {one token};

    \draw[line width=0.95pt, draw=black!70, fill=white] (0.60,0.25) rectangle (1.55,3.25);

    \coordinate (tokenstackin) at (0.58,2.05);

    \foreach \yy in {0.28,0.64,1.00,1.36}{
      \draw[pfill, line width=0.45pt] (0.60,\yy) rectangle (1.55,\yy+0.28);
    }

    \foreach \yy in {1.72,2.08,2.44,2.80}{
      \draw[qfill, line width=0.45pt] (0.60,\yy) rectangle (1.55,\yy+0.28);
    }

    \draw[decorate,decoration={brace,amplitude=3pt}] (1.72,3.08) -- (1.72,1.72)
      node[midway,right=4pt,font=\footnotesize] {$q$ half};
    \draw[decorate,decoration={brace,amplitude=3pt}] (1.72,1.58) -- (1.72,0.28)
      node[midway,right=4pt,font=\footnotesize] {$p$ half};

    \node[hammath] at (1.08,-0.02) {\hm{d_f\ \text{channels}}};

    \coordinate (qhalfout) at (1.72,2.40);
    \coordinate (phalfout) at (1.72,0.95);
  \end{scope}

  \draw[hamsoftarrow] (gridtoken) -- (tokenstackin);

  \node[hambox, qfill, minimum width=2.15cm, minimum height=0.82cm] (qvec) at (14.75,1.15) {\hm{q\in\mathbb{R}^{d}}};
  \node[hambox, pfill, minimum width=2.15cm, minimum height=0.82cm] (pvec) at (14.75,-1.15) {\hm{p\in\mathbb{R}^{d}}};

  \node[
    hambox,
    fill=hamlight,
    text width=2.55cm,
    minimum height=1.10cm,
    align=center
  ] (state) at (17.75,0)
  {%
    \raisebox{0.10cm}{%
      \begin{tabular}{@{}c@{}}
        \hm{s=[q;p]}\\[-1pt]
        \hm{\in\mathbb{R}^{2d}}
      \end{tabular}%
    }%
  };

  \node[hamsmall, minimum width=3.35cm] (idp) at (17.75,-2.55)
  {identity projector preserves the $q/p$ split};

  \draw[hamarrow] (view.east) -- (resnet.west);
  \draw[hamarrow] (resnet.east) -- (feat.west);
  \draw[hamarrow] (feat.east) -- (6.30,0);

  \draw[hamsoftarrow] (qhalfout) -- ++(0.35,0.05) -- (13.45,1.15);
  \draw[hamsoftarrow] (phalfout) -- ++(0.35,-0.05) -- (13.45,-1.15);

  \draw[hamarrow] (qvec.east) -- (state.west);
  \draw[hamarrow] (pvec.east) -- (state.west);

  \draw[hamsoftarrow] (state.south) -- (idp.north);

\end{tikzpicture}%
}
\caption{\textbf{How the encoder constructs the phase-space state.}
HamJEPA is not just taking an arbitrary vector and renaming its halves as $q$ and $p$. In the implementation, the encoder runs in token mode: an intermediate ResNet feature map is converted into a token grid, the \emph{channel stack of each token} is split into a $q$ half and a $p$ half, and those halves are flattened across the spatial grid to form $q\in\mathbb{R}^{d}$ and $p\in\mathbb{R}^{d}$. These are then concatenated into $s=[q;p]\in\mathbb{R}^{2d}$. This is why HamJEPA uses an identity projector: a learned MLP after the encoder would remix the coordinates and destroy the intended $q/p$ semantics.}
\label{fig:hamjepa_visual_encoder}
\end{figure}


\begin{figure}[H]
\centering
\resizebox{0.97\textwidth}{!}{%
\begin{tikzpicture}[x=1cm,y=1cm]

  \draw[qfill, rounded corners=2pt, line width=0.9pt] (0.0,1.0) rectangle +(1.45,0.72);
  \draw[pfill, rounded corners=2pt, line width=0.9pt] (0.0,0.18) rectangle +(1.45,0.72);
  \node[hammath] at (0.725,1.36) {\hm{q}};
  \node[hammath] at (0.725,0.54) {\hm{p}};
  \node[hammath] at (0.725,2.03) {\hm{s_a}};

  \draw[hamgroup] (3.0,1.95) rectangle (9.45,4.35);
  \node[hammath] at (6.22,4.72) {\hm{\mathcal H_\phi(q,p)=T(p)+V_\phi(q)}};

  \node[hamsmall, minimum width=4.4cm] (dyn) at (6.22,3.72)
  {\hm{\dot q=p,\qquad \dot p=-\nabla_q V_\phi(q)}};

  \node[hambox, goldfill, minimum width=2.4cm, minimum height=0.92cm] (Tbox) at (4.55,2.72)
  {\hm{T(p)=\tfrac12\|p\|^2}};
  \node[hambox, regfill, minimum width=2.25cm, minimum height=0.92cm] (Vbox) at (7.75,2.72)
  {\hm{V_\phi(q)}};

  \draw[hamgroup] (3.0,-1.22) rectangle (11.85,1.45);
  \node[hamsmall, fill=white, inner sep=2pt] at (4.10,1.72) {one leapfrog step};

  \node[hambox, minimum width=2.05cm, minimum height=0.86cm] (kick1) at (4.45,0.42) {half-kick};
  \node[hambox, minimum width=1.90cm, minimum height=0.86cm] (drift) at (7.42,0.42) {drift};
  \node[hambox, minimum width=2.05cm, minimum height=0.86cm] (kick2) at (10.38,0.42) {half-kick};

  \node[hamsmall] at (4.45,-0.55) {\hm{p\leftarrow p-\tfrac{\Delta t}{2}\nabla V(q)}};
  \node[hamsmall] at (7.42,-0.55) {\hm{q\leftarrow q+\Delta t\,p}};
  \node[hamsmall] at (10.38,-0.55) {\hm{p\leftarrow p-\tfrac{\Delta t}{2}\nabla V(q)}};

  \node[hamsmall, minimum width=2.0cm] at (7.42,-1.82) {repeat $K$ times};

  \draw[qfill, rounded corners=2pt, line width=0.9pt] (13.45,1.0) rectangle +(1.45,0.72);
  \draw[pfill, rounded corners=2pt, line width=0.9pt] (13.45,0.18) rectangle +(1.45,0.72);
  \node[hammath] at (14.175,1.36) {\hm{\hat q_b}};
  \node[hammath] at (14.175,0.54) {\hm{\hat p_b}};
  \node[hammath] at (14.175,2.03) {\hm{\hat s_b}};

  \draw[hamarrow] (1.45,0.90) -- (3.35,0.90) -- (kick1.west);
  \draw[hamarrow] (kick1.east) -- (drift.west);
  \draw[hamarrow] (drift.east) -- (kick2.west);
  \draw[hamarrow] (kick2.east) -- (13.45,0.90);

  \draw[-{Latex[length=2.5mm,width=2mm]}, line width=1.0pt, draw=green!50!black]
    (Vbox.south) -- (kick1.north);
  \draw[-{Latex[length=2.5mm,width=2mm]}, line width=1.0pt, draw=green!50!black]
    (Vbox.south) -- (kick2.north);
  \draw[-{Latex[length=2.5mm,width=2mm]}, line width=1.0pt, draw=yellow!65!orange!90!black]
    (Tbox.south) -- (drift.north);

  \coordinate (dynelbow) at (10.55,3.72);
  \coordinate (dynhook)  at (10.55,1.45);
  \draw[hamsoftarrow] (dyn.east) -- (dynelbow) -- (dynhook);

\end{tikzpicture}%
}
\caption{\textbf{What the predictor is doing.}
HamJEPA does not use an arbitrary predictor $z_a\mapsto \hat z_b$. Instead, it rolls the state forward with a learned separable Hamiltonian $\mathcal H_\phi(q,p)=T(p)+V_\phi(q)$ and a leapfrog integrator. The potential $V_\phi(q)$ drives the kick updates through $\nabla_qV_\phi(q)$, while the kinetic term $T(p)=\tfrac12\|p\|^2$ yields the drift update $q\leftarrow q+\Delta t\,p$. Repeating this structured step $K$ times produces the predicted target state $\hat s_b=\Phi_\phi^{K}(s_a)$.}
\label{fig:hamjepa_visual_predictor}
\end{figure}

\begin{figure}[H]
\centering
\resizebox{0.97\textwidth}{!}{%
\begin{tikzpicture}[x=1cm,y=1cm]
  \node[hammath] at (2.5,3.0) {generic learned map};
  \draw[hamgroup] (0.4,0.4) rectangle (4.6,2.6);
  \fill[hamlightblue, draw=hamblue!70!black, line width=0.9pt]
    plot[smooth cycle, tension=0.9]
    coordinates {(1.0,1.2) (1.5,2.1) (2.7,2.25) (3.8,1.65) (3.45,0.9) (2.2,0.65)};
  \draw[hamarrow] (4.95,1.5) -- (6.1,1.5);
  \draw[hamgroup] (6.4,0.4) rectangle (10.6,2.6);
  \fill[hamlightred, draw=hamred!70!black, line width=0.9pt]
    plot[smooth cycle, tension=0.9]
    coordinates {(7.1,1.15) (7.35,1.55) (9.6,1.72) (9.7,1.43) (8.2,1.28)};
  \node[hamsmall, warnfill, minimum width=2.2cm] at (8.5,-0.25) {can arbitrarily squash local volume};

  \node[hammath] at (14.2,3.0) {symplectic map};
  \draw[hamgroup] (12.1,0.4) rectangle (16.3,2.6);
  \fill[hamlightblue, draw=hamblue!70!black, line width=0.9pt]
    plot[smooth cycle, tension=0.9]
    coordinates {(12.7,1.2) (13.2,2.1) (14.4,2.25) (15.5,1.65) (15.15,0.9) (13.9,0.65)};
  \draw[hamarrow] (16.65,1.5) -- (17.8,1.5);
  \draw[hamgroup] (18.1,0.4) rectangle (22.3,2.6);
  \fill[hamlightgreen, draw=hamgreen!60!black, line width=0.9pt]
    plot[smooth cycle, tension=0.9]
    coordinates {(18.5,1.05) (19.2,2.15) (21.1,2.15) (21.8,1.4) (20.55,0.65) (19.0,0.72)};
  \node[hamsmall, regfill, minimum width=2.6cm] at (20.2,-0.25) {$|\det D\Phi|=1$: bends and shears, but preserves phase-space volume};
\end{tikzpicture}%
}
\caption{\textbf{What “symplectic” means geometrically.}
A generic learned map can bend, stretch, and arbitrarily collapse local geometry.
A symplectic map can still deform the state space, but it preserves total phase-space volume locally, i.e. $|\det D\Phi|=1$.
This gives the predictor a structured notion of reversibility and ensures that the predictor itself is not the source of volume collapse.}
\label{fig:hamjepa_visual_symplectic}
\end{figure}

Symplecticity constrains the predictor, but it does \emph{not} by itself prevent encoder collapse. A collapsed encoder could still map many different views to almost the same latent state, and a perfectly symplectic predictor would then only move around an already-degenerate state cloud. The next two figures therefore separate these two roles: first, why predictor structure alone is insufficient; second, how the encoder-side regularizers keep the latent space alive.

\begin{figure}[H]
\centering
\resizebox{0.97\textwidth}{!}{%
\begin{tikzpicture}[x=1cm,y=1cm]
  \node[hammath] at (2.3,2.75) {many different views};
  \foreach \x/\y in {0.8/1.6,1.5/2.0,2.4/1.8,3.0/1.3,1.2/1.0,2.2/0.9,3.2/2.0} {
    \node[hambox, minimum width=0.8cm, minimum height=0.65cm] at (\x,\y) {};
  }

  \node[hambox, minimum width=1.7cm, minimum height=1.1cm, fill=hamlight] (enc2) at (6.0,1.45) {shared\\encoder};

  \draw[hamgroup] (8.3,0.7) rectangle (11.2,2.2);
  \fill[warnfill, draw=hamred!70!black] (9.65,1.45) circle (0.16);
  \fill[warnfill, draw=hamred!70!black] (9.55,1.35) circle (0.16);
  \fill[warnfill, draw=hamred!70!black] (9.72,1.55) circle (0.16);
  \fill[warnfill, draw=hamred!70!black] (9.62,1.50) circle (0.16);
  \node[hammath] at (9.75,2.65) {collapsed states};

  \node[hambox, minimum width=2.3cm, minimum height=1.0cm, fill=hamlight] (flow2) at (13.7,1.45) {symplectic\\predictor};

  \draw[hamgroup] (16.2,0.7) rectangle (19.4,2.2);
  \fill[pfill] (17.6,1.35) ellipse (0.65 and 0.28);
  \node[hammath] at (17.8,2.65) {still low-volume};

  \node[hamsmall, warnfill, minimum width=5.4cm] at (10.2,-0.2) {predictor structure cannot rescue an already-collapsed encoder output cloud};

  \draw[hamarrow] (3.7,1.45) -- (enc2.west);
  \draw[hamarrow] (enc2.east) -- (8.3,1.45);
  \draw[hamarrow] (11.2,1.45) -- (flow2.west);
  \draw[hamarrow] (flow2.east) -- (16.2,1.45);
\end{tikzpicture}%
}
\caption{\textbf{Why the predictor alone is not enough.}
Even if the predictor is perfectly symplectic, the encoder could still map many different views to nearly the same latent state.
In that situation the rollout acts on an already-collapsed representation.
This is why HamJEPA needs a second component beyond the predictor: encoder-side anti-collapse regularization.}
\label{fig:hamjepa_visual_predictor_not_enough}
\end{figure}

\paragraph{Why the predictor alone is not enough.}
The symplectic constraint belongs to the predictor map, not to the encoder. This means it prevents the rollout from crushing phase-space volume, but it does \emph{not} stop the encoder itself from mapping many different views to nearly the same latent state. A collapsed encoder would therefore still give a degenerate state cloud, even with a perfectly symplectic predictor. The next figures isolate the role of each encoder-side anti-collapse regularizer.

\begin{figure}[H]
\centering
\begin{tikzpicture}[scale=0.9, every node/.style={transform shape}, x=1cm,y=1cm]

  \foreach \cx/\lab/\sc in {0/under-scale/0.42,4.6/target/0.78,9.2/over-scale/1.18}{
    \draw[hamgroup] (\cx-1.7,-1.55) rectangle (\cx+1.7,1.55);
    \draw[dashed, draw=hamgreen!70!black, line width=0.9pt] (\cx,0) circle (0.95);

    \begin{scope}[shift={(\cx,0)}, scale=\sc]
      \foreach \x/\y in {0.18/0.10,0.36/0.42,-0.25/0.24,-0.40/-0.16,0.02/-0.33,0.46/-0.12,-0.18/0.55}{
        \fill[hamblue!75] (\x,\y) circle (0.08);
      }
    \end{scope}

    \node[hamsmall, minimum width=1.55cm] at (\cx,-2.0) {\lab};
  }

\end{tikzpicture}%
\caption{\textbf{Energy budget.}
The energy budget keeps the average per-dimension scale of $q$ and $p$ near a target range. It penalizes both under-scaled states (which risk collapse toward zero) and over-scaled states (which can destabilize dynamics), so the latent cloud stays at a controlled overall radius.}
\label{fig:hamjepa_visual_energy_budget}
\end{figure}

\begin{figure}[H]
\centering
\begin{tikzpicture}[scale=0.9, every node/.style={transform shape}, x=1cm,y=1cm]

  \draw[hamgroup] (-1.9,-1.6) rectangle (1.9,1.7);
  \draw[black!55,->] (-1.35,-1.1) -- (-0.15,-1.1);
  \draw[black!55,->] (-1.35,-1.1) -- (-1.35,0.7);
  \node[font=\footnotesize] at (-0.05,-1.35) {$x_1$};
  \node[font=\footnotesize] at (-1.60,0.85) {$x_2$};

  \foreach \yy in {-0.8,-0.5,-0.2,0.1,0.4,0.7}{
    \fill[hamred!75] (-0.75,\yy) circle (0.08);
  }
  \node[hamsmall, warnfill, minimum width=1.8cm] at (0,-2.05) {dead coordinate};

  \draw[hamgroup] (3.0,-1.6) rectangle (6.8,1.7);
  \draw[black!55,->] (3.55,-1.1) -- (4.75,-1.1);
  \draw[black!55,->] (3.55,-1.1) -- (3.55,0.7);
  \node[font=\footnotesize] at (4.85,-1.35) {$x_1$};
  \node[font=\footnotesize] at (3.30,0.85) {$x_2$};

  \foreach \xx/\yy in {4.0/-0.6,4.5/-0.2,5.0/0.2,4.2/0.45,5.3/-0.4,5.55/0.55}{
    \fill[hamgreen!70!black] (\xx,\yy) circle (0.08);
  }
  \node[hamsmall, regfill, minimum width=1.8cm] at (4.9,-2.05) {all coordinates active};

\end{tikzpicture}%
\caption{\textbf{Variance floor.}
The variance floor prevents individual coordinates from becoming nearly constant across the batch. Visually, it revives dead axes: instead of allowing the latent cloud to collapse onto a lower-dimensional direction, it forces each coordinate to retain a minimum amount of variation.}
\label{fig:hamjepa_visual_variance_floor}
\end{figure}

\begin{figure}[H]
\centering
\begin{tikzpicture}[scale=0.95, every node/.style={transform shape}, x=1cm,y=1cm]

  \draw[hamgroup] (-1.9,-1.55) rectangle (1.9,1.65);
  \foreach \x/\y in {-0.8/-0.5,-0.5/0.2,-0.1/-0.2,0.2/0.5,0.55/-0.35,0.85/0.25,0.1/0.0}{
    \fill[hamgreen!70!black] (\x,\y) circle (0.08);
  }
  \node[hamsmall, regfill, minimum width=1.9cm] at (0,-2.0) {healthy volume};

  \draw[hamgroup] (3.0,-1.55) rectangle (6.8,1.65);
  \foreach \x/\y in {3.45/-0.15,3.95/-0.05,4.45/0.00,4.95/0.07,5.45/0.15,5.95/0.20,6.35/0.26}{
    \fill[hamred!75] (\x,\y) circle (0.08);
  }
  \node[hamsmall, warnfill, minimum width=1.9cm] at (4.9,-2.0) {thin sheet};

  \draw[hamgroup] (7.9,-1.55) rectangle (11.7,1.65);
  \foreach \yy in {-0.9,-0.55,-0.2,0.15,0.5,0.85}{
    \fill[hamred!75] (9.8,\yy) circle (0.08);
  }
  \node[hamsmall, warnfill, minimum width=1.9cm] at (9.8,-2.0) {line-like collapse};

\end{tikzpicture}%
\caption{\textbf{Projected log-det floor.}
The projected log-det floor prevents the batch representation from flattening into a very low-volume set. It does not require isotropy, but it does force the projected covariance to retain nontrivial volume, ruling out extreme degeneracies such as line-like or sheet-like collapse.}
\label{fig:hamjepa_visual_logdet_floor}
\end{figure}

\begin{figure}[H]
\centering
\begin{tikzpicture}[scale=0.95, every node/.style={transform shape}, x=1cm,y=1cm]

  \draw[hamgroup] (-1.8,-1.45) rectangle (2.2,1.85);
  \foreach \i/\h in {0/1.8,1/0.35,2/0.22,3/0.18,4/0.14}{
    \fill[hamred!75] (-1.3+0.62*\i,-1.0) rectangle +(0.36,\h);
  }
  \node[hamsmall, warnfill, minimum width=2.15cm] at (0.2,-1.95) {one dominant spike};

  \draw[hamgroup] (4.0,-1.45) rectangle (8.0,1.85);
  \foreach \i/\h in {0/1.30,1/1.10,2/0.92,3/0.78,4/0.62}{
    \fill[hamgreen!70!black] (4.5+0.62*\i,-1.0) rectangle +(0.36,\h);
  }
  \node[hamsmall, regfill, minimum width=2.15cm] at (6.0,-1.95) {variance spread across modes};

\end{tikzpicture}%
\caption{\textbf{Participation-ratio floor and top-eigenvalue control.}
These penalties prevent the representation from surviving only through one huge dominant direction. The participation-ratio floor encourages variance to remain spread across multiple modes, while the top-eigenvalue ceiling prevents a single spike from taking over the spectrum.}
\label{fig:hamjepa_visual_pr_floor}
\end{figure}

\begin{figure}[H]
\centering
\begin{tikzpicture}[scale=0.95, every node/.style={transform shape}, x=1cm,y=1cm]

  \draw[hamgroup] (-1.8,-1.55) rectangle (2.2,1.75);
  \draw[black!50,->] (-1.2,0) -- (1.6,0);
  \draw[black!50,->] (0,-1.1) -- (0,1.2);
  \foreach \x/\y in {-0.55/-0.25,-0.32/0.28,0.10/-0.12,0.35/0.25,0.62/-0.28,0.15/0.55}{
    \fill[hamgreen!70!black] (\x,\y) circle (0.08);
  }
  \node[hamsmall, regfill, minimum width=1.8cm] at (0.2,-2.0) {mean near zero};

  \draw[hamgroup] (4.0,-1.55) rectangle (8.0,1.75);
  \draw[black!50,->] (4.6,0) -- (7.4,0);
  \draw[black!50,->] (5.8,-1.1) -- (5.8,1.2);

  \foreach \x/\y in {6.20/0.25,6.45/0.78,6.85/0.38,7.05/0.75,7.32/0.20,6.72/1.02}{
    \fill[hamred!75] (\x,\y) circle (0.08);
  }
  \draw[hamarrow, draw=hamred!80!black] (5.8,0) -- (6.75,0.58);
  \node[hammath] at (6.15,0.82) {\hm{\mu}};
  \node[hamsmall, warnfill, minimum width=1.9cm] at (6.0,-2.0) {large offset direction};

\end{tikzpicture}%
\caption{\textbf{Mean penalty.}
A large global mean can create a dominant offset direction that makes random-pair cosine similarities look artificially cone-like. The mean penalty discourages this by keeping the batch mean closer to zero, so the geometry is not dominated by a single global shift.}
\label{fig:hamjepa_visual_mean_penalty}
\end{figure}

\begin{figure}[H]
\centering
\resizebox{0.97\textwidth}{!}{%
\begin{tikzpicture}[x=1cm,y=1cm]

  \draw[hamgroup] (-0.1,-1.8) rectangle (7.3,2.5);
  \node[hammath] at (3.6,2.95) {CIFAR-100};

  \draw[qfill, rounded corners=2pt, line width=1.1pt] (0.55,0.95) rectangle +(1.35,0.72);
  \draw[pfill, rounded corners=2pt, line width=0.7pt, draw=black!35, fill=hamlightorange!55] (0.55,0.13) rectangle +(1.35,0.72);
  \node[hammath] at (1.225,1.31) {\hm{\hat q_b}};
  \node[hammath] at (1.225,0.49) {\hm{\hat p_b}};

  \draw[qfill, rounded corners=2pt, line width=1.1pt] (5.25,0.95) rectangle +(1.35,0.72);
  \draw[pfill, rounded corners=2pt, line width=0.7pt, draw=black!35, fill=hamlightorange!55] (5.25,0.13) rectangle +(1.35,0.72);
  \node[hammath] at (5.925,1.31) {\hm{q_b}};
  \node[hammath] at (5.925,0.49) {\hm{p_b}};

  \node[hambox, minimum width=1.8cm, minimum height=0.9cm] (lossc) at (3.6,-0.35) {match mainly\\\hm{q}};
  \draw[hamarrow, draw=hamblue!80!black] (1.9,1.31) -- (lossc.west |- 3.6,0.05);
  \draw[hamarrow, draw=hamblue!80!black] (5.25,1.31) -- (lossc.east |- 3.6,0.05);

  \draw[hamsoftarrow, dashed] (1.9,0.49) -- (3.1,0.49);
  \draw[hamsoftarrow, dashed] (5.25,0.49) -- (4.1,0.49);

  \node[hamsmall, minimum width=3.7cm] at (3.6,-1.25) {$q$ is the main content coordinate; $p$ is more auxiliary};

  \draw[hamgroup] (8.2,-1.8) rectangle (15.6,2.5);
  \node[hammath] at (11.9,2.95) {ImageNet-100};

  \draw[qfill, rounded corners=2pt, line width=1.1pt] (8.85,0.95) rectangle +(1.35,0.72);
  \draw[pfill, rounded corners=2pt, line width=1.1pt] (8.85,0.13) rectangle +(1.35,0.72);
  \node[hammath] at (9.525,1.31) {\hm{\hat q_b}};
  \node[hammath] at (9.525,0.49) {\hm{\hat p_b}};

  \draw[qfill, rounded corners=2pt, line width=1.1pt] (13.55,0.95) rectangle +(1.35,0.72);
  \draw[pfill, rounded corners=2pt, line width=1.1pt] (13.55,0.13) rectangle +(1.35,0.72);
  \node[hammath] at (14.225,1.31) {\hm{q_b}};
  \node[hammath] at (14.225,0.49) {\hm{p_b}};

  \node[hambox, minimum width=1.8cm, minimum height=0.9cm] (lossi) at (11.9,-0.35) {match full\\\hm{[q;p]}};

  \draw[hamarrow, draw=hamblue!80!black] (10.2,1.31) -- (lossi.west |- 11.9,0.15);
  \draw[hamarrow, draw=hamblue!80!black] (13.55,1.31) -- (lossi.east |- 11.9,0.15);
  \draw[hamarrow, draw=hamorange!85!black] (10.2,0.49) -- (lossi.west |- 11.9,-0.15);
  \draw[hamarrow, draw=hamorange!85!black] (13.55,0.49) -- (lossi.east |- 11.9,-0.15);

  \node[hamsmall, minimum width=3.7cm] at (11.9,-1.25) {both coordinates are directly constrained};

\end{tikzpicture}%
}
\caption{\textbf{Why CIFAR-100 and ImageNet-100 use different matching choices.}
On CIFAR-100 the training objective emphasizes the content coordinate $q$, so $q$ is the cleanest downstream readout and $p$ mainly supports the dynamics. On ImageNet-100 the stable regime was to match the full state $[q;p]$, which directly constrains both coordinates and makes $p$ much more informative.}
\label{fig:hamjepa_visual_cifar_imagenet}
\end{figure}

\paragraph{How to read the sequence.}
Taken together, Figs.~\ref{fig:hamjepa_visual_overview}--\ref{fig:hamjepa_visual_cifar_imagenet} decompose the full method into three conceptual layers: the encoder constructs a structured phase-space state, the predictor evolves that state with a symplectic Hamiltonian rollout, and the regularizers ensure that the encoder supplies a non-degenerate cloud of states for those dynamics to act on. The distinctive point is therefore twofold: HamJEPA changes both the \emph{representation parameterization} (from a single embedding to a phase-space state) and the \emph{predictor family} (from arbitrary regression to Hamiltonian dynamics), while avoiding the need to force the encoder marginal toward Euclidean isotropy.

\section{Implementation Details and Algorithms}
\label{app:impl}

In this section (as in the code) we refer to \emph{HamJEPA} as \emph{MV-HJEPA}, denoting the minimal multi-view implementation used in our experiments.

\subsection{Compute resources}
\label{app:compute}

All reported experiments were run on NVIDIA A100 GPUs.
CIFAR-100 pretraining took approximately $40$ seconds per epoch, corresponding to roughly
$20$ minutes for a $30$-epoch run and roughly $53$ minutes for an $80$-epoch run.
ImageNet-100 pretraining took approximately $5$ hours for a $45$-epoch run.
Downstream evaluation was also run on A100 GPUs and required under $5$ minutes for CIFAR-100
and under $15$ minutes for ImageNet-100 per checkpoint/readout evaluation.
These timings are approximate wall-clock measurements; peak GPU memory was not separately profiled.

\subsection{Two training modes: MV-HJEPA vs. baseline (LeJEPA + regularizer)}
Our training scripts are \emph{mode-switched} by the presence of a \texttt{hjepa:} block in the YAML:
\begin{itemize}
  \item \textbf{MV-HJEPA mode} (\texttt{hjepa} present): we treat the representation as a phase-space state
  $z=[q;p]\in\mathbb{R}^{2d}$ and train a Hamiltonian flow predictor that maps between two global views.
  We additionally apply scale/diversity regularizers on \emph{all} views.
  \item \textbf{Baseline mode} (\texttt{hjepa} absent): we use the \emph{LeJEPA} prediction objective where all
  views predict the mean of the global views, and we optionally add a distributional regularizer
  such as SIGReg.
\end{itemize}

\subsection{Data pipeline and multi-view batching (CIFAR100 vs. ImageNet)}
\paragraph{Multi-crop dataset contract.}
Each dataset example yields a list of $V$ augmented views (multi-crop), plus labels:
\[
(\{x^{(v)}\}_{v=1}^V, y, y_{\text{coarse}}).
\]
In our configurations, we set \texttt{num\_global\_views=2} and \texttt{num\_local\_views=0}, so $V=2$.

\paragraph{CIFAR100MultiCrop.}
\texttt{eval/datasets/cifar100\_multicrop.py} applies the full augmentation \emph{in the worker} and returns
\emph{float tensors already normalized} (using \texttt{ToTensor()} + \texttt{Normalize}).
Notable defaults:
\begin{itemize}
  \item \texttt{out\_size=32} (CIFAR resolution)
  \item blur uses a small kernel (\texttt{kernel\_size=3})
\end{itemize}

\paragraph{ImageNetMultiCrop.}
\texttt{eval/datasets/imagenet\_multicrop.py} applies the same high-level augmentation pattern, but returns
\emph{uint8 tensors} using \texttt{PILToTensor()} to reduce dataloader IPC bandwidth. We then convert to float and
normalize \emph{on device} inside the ImageNet training loop.
Notable defaults:
\begin{itemize}
  \item \texttt{out\_size=224}
  \item blur uses a larger kernel (\texttt{kernel\_size=23}) appropriate for higher resolution
  \item global crop scale defaults match the YAML (\texttt{global\_scale=[0.4,1.0]})
\end{itemize}

\paragraph{Multi-view batching optimization.}
For computational efficiency, we concatenate views along the batch dimension and run a \emph{single}
encoder+projector forward:
\[
x_{\text{cat}} = \mathrm{concat}\left(x^{(1)},\dots,x^{(V)}\right)\in\mathbb{R}^{(VB)\times C\times H\times W},
\qquad
z_{\text{cat}} = P(E(x_{\text{cat}}))\in\mathbb{R}^{(VB)\times K}.
\]
We then reshape:
\[
z_{\text{views}} \in \mathbb{R}^{V\times B\times K},
\qquad
z_{\text{views}}=\mathrm{reshape}(z_{\text{cat}},(V,B,K)).
\]
This is exactly how both \texttt{train\_cifar\_hamjepa.py} and \texttt{train\_imagenet\_hamjepa.py} operate.

\subsection{Encoder and token-state parameterization}
\paragraph{ResNet18 backbone with dataset-specific stem.}
We use \texttt{torchvision.models.resnet18(weights=None)} and alter the stem depending on the dataset:
\begin{itemize}
  \item \textbf{CIFAR stem} (\texttt{encoder\_stem=cifar}): \texttt{conv1} is replaced by a $3\times3$ stride-1 conv and
  \texttt{maxpool} is removed.
  \item \textbf{ImageNet stem} (\texttt{encoder\_stem=imagenet}): keep the default $7\times7$ stride-2 conv and maxpool.
\end{itemize}

\paragraph{Token mode (used in all shown configs).}
With \texttt{encoder\_mode=tokens}, we expose an intermediate feature map at a chosen ResNet layer
(\texttt{token\_layer} in \{layer2, layer3, layer4\}), project its channels with a $1\times1$ conv to
\texttt{token\_d\_f}, optionally pool to a fixed spatial grid (\texttt{token\_hw}), then flatten.

Concretely, if the token feature map has shape \texttt{[B, token\_d\_f, h, w]}, the flattened representation is
\[
z \in \mathbb{R}^{B\times (h w\,\texttt{token\_d\_f})}.
\]
This is why \texttt{embed\_dim} must match $hw\,\texttt{token\_d\_f}$:
\begin{itemize}
  \item CIFAR100: \texttt{layer3} naturally yields an $8\times8$ map, and with \texttt{token\_d\_f=32} we get
  $8\cdot 8\cdot 32 = 2048$.
  \item ImageNet: \texttt{layer3} yields $14\times14$ pre-pool; we set \texttt{token\_hw=8} so adaptive pooling yields
  $8\times8$, again giving $2048$.
\end{itemize}

\paragraph{Projector.}
We support either an MLP projector or an identity projector. In MV-HJEPA we \emph{require}
\texttt{projector\_type=identity} and enforce it in code, because we interpret the representation channels
as structured phase-space coordinates and do not want a learned MLP to arbitrarily mix them.

\paragraph{Phase-space split: $z=[q;p]$.}
In MV-HJEPA we require an even representation dimension $K=2d$, and interpret:
\[
z = \begin{bmatrix} q \\ p \end{bmatrix}\in\mathbb{R}^{2d},
\qquad
q\in\mathbb{R}^{d},\; p\in\mathbb{R}^{d}.
\]
This split is implemented \emph{channel-wise per token} in the encoder when \texttt{split\_qp=true}:
the token channel dimension \texttt{token\_d\_f} is split into equal halves (first half $\to q$, second half $\to p$)
before flattening and concatenation.

\subsection{Hamiltonian flow predictor (MV-HJEPA)}
\paragraph{Hamiltonian dynamics.}
We predict one view's state from another by integrating Hamiltonian dynamics in latent phase space.
Given a Hamiltonian $H(q,p)$, the vector field is:
\[
\frac{dq}{dt} = \frac{\partial H}{\partial p},\qquad
\frac{dp}{dt} = -\frac{\partial H}{\partial q}.
\]
In code, $\partial H/\partial q$ and $\partial H/\partial p$ are obtained via automatic differentiation
(\texttt{torch.autograd.grad(create\_graph=True)}), so gradients propagate through the integrator.

\paragraph{Hamiltonian parameterizations.}
\texttt{hamjepa/hamiltonian.py} supports:
\begin{enumerate}
  \item \textbf{Quadratic}: $H(q,p)=\tfrac12(\|q\|^2+\|p\|^2)$ (no learnable parameters).
  \item \textbf{Learnable (residual MLP)}:
  \[
  H_\theta(q,p)=\tfrac12(\|q\|^2+\|p\|^2) + s\, f_\theta([q,p]),
  \]
  where $s$ $(\texttt{residual\_scale})$ is a stabilizing residual multiplier.
  \item \textbf{Separable (used for MV-HJEPA)}:
  \[
  H_\phi(q,p)=T(p)+V_\phi(q), \qquad T(p)=\tfrac12\|p\|^2,
  \qquad
  V_\phi(q)=\tfrac12\alpha\|q\|^2 + s\, f_\phi(q),
  \]
  where $\alpha$ $(\texttt{base\_coeff})$ is a fixed quadratic base coefficient and $s$ $(\texttt{residual\_scale})$ is a residual scaling factor.
\end{enumerate}
The separable form enables an explicit velocity-Verlet/leapfrog update that is symplectic.

\paragraph{Residual-scale warmup.}
In MV-HJEPA, we linearly warm up \texttt{residual\_scale} over \texttt{residual\_scale\_warmup\_epochs}.
Early in training the potential is close to purely quadratic (stable), and the learned residual
is introduced gradually.

\paragraph{Step size and fp32 integration.}
We integrate for \texttt{steps=S} steps with step size \texttt{dt} (fixed in MV-HJEPA; \texttt{learn\_dt=false}).
For mixed precision runs (bf16), we optionally do the dynamics computations in fp32 and cast back
(\texttt{integrate\_fp32=true}) to stabilize higher-order autodiff through the integrator.

\subsection{Symplectic integrators}
We implement:
\begin{itemize}
  \item \textbf{Symplectic Euler} for general $H(q,p)$.
  \item \textbf{Leapfrog / velocity-Verlet}:
    \begin{itemize}
      \item a general variant using autodiff for $\nabla_q H$ and $\nabla_p H$
      \item a specialized separable variant (used by MV-HJEPA when $H=T+V$) using only $\nabla V(q)$
    \end{itemize}
\end{itemize}

For separable $H(q,p)=\tfrac12\|p\|^2+V(q)$, the specialized update is:
\[
p_{1/2} = p - \tfrac{\Delta t}{2}\nabla V(q),\quad
q \leftarrow q + \Delta t\,p_{1/2},\quad
p \leftarrow p_{1/2} - \tfrac{\Delta t}{2}\nabla V(q).
\]

\subsection{MV-HJEPA prediction loss}
Given two global views' states $z_a=[q_a;p_a]$ and $z_b=[q_b;p_b]$, we predict:
\[
\hat z_b = \mathrm{Flow}_H(z_a;\Delta t,S),
\]
and compute a MSE matching term either on $q$ only or the full $(q,p)$ state:
\[
\mathcal{L}_{\text{match}}=
\begin{cases}
\mathrm{MSE}(\hat q_b,q_b) + w_p\,\mathrm{MSE}(\hat p_b,p_b) & \text{(match = $q$)}\\
\mathrm{MSE}(\hat z_b,z_b) & \text{(match = $qp$)}.
\end{cases}
\]
We typically stop-gradient the target branch (\texttt{detach\_target=true}) to prevent trivial collapse.

Optionally, we add an energy-consistency term that encourages the flow to preserve the input energy:
\[
\mathcal{L}_{\text{energy}}=\mathrm{MSE}\!\big(H(q_a,p_a),\,H(\hat q_b,\hat p_b)\big),
\]
and the full prediction loss is:
\[
\mathcal{L}_{\text{pred}} = \mathcal{L}_{\text{match}} + \lambda_E \mathcal{L}_{\text{energy}}.
\]

\paragraph{Bidirectional training.}
If enabled, we symmetrize by predicting in both directions. 
\[
\mathcal{L}_{\text{pred}} \leftarrow \tfrac12\left(\mathcal{L}_{\text{pred}}(z_0,z_1) + \mathcal{L}_{\text{pred}}(z_1,z_0)\right).
\]
In the most physically consistent variant,
the reverse direction uses reverse-time integration (negated $\Delta t$), which is supported by the predictor's
\texttt{direction} flag.

\subsection{Regularizers (applied across all views)}
We compute regularizers on the concatenation of all views' states:
\[
z_{\text{all}}\in\mathbb{R}^{(VB)\times 2d}.
\]

\paragraph{Phase-space energy budget (scale control).}
We stabilize scale in fixed units by enforcing per-coordinate second moments:
\[
\mathcal{L}_{\text{budget}}
=
\left(\mathbb{E}[q^2]-q_{\text{tgt}}\right)^2
+
\left(\mathbb{E}[p^2]-p_{\text{tgt}}\right)^2,
\]
where expectations are taken over batch and coordinates in fp32 and can be Distributed Data Parallel (DDP)-synchronized.

\paragraph{Variance floor (anti-collapse).}
Let $\sigma_i$ be the batch standard deviation of dimension $i$ (fp32 stats). We apply:
\[
\mathcal{L}_{\text{var}}=\frac{1}{d}\sum_{i=1}^d\max(0,\sigma_{\min}-\sigma_i)^2,
\]
typically on $q$ and optionally on $p$ depending on the experiment.

\paragraph{Projected log-determinant floor (spectral diversity).}
We enforce a lower bound on the log-determinant of a randomly projected covariance.
Given $x\in\mathbb{R}^{B\times d}$, sample an orthonormal projection $R\in\mathbb{R}^{d\times k}$,
compute $y=(x-\bar x)R$, then:
\[
C = \mathrm{Cov}(y) + \varepsilon I,
\qquad
\mathcal{L}_{\log\det}=\max\Big(0,\tau - \tfrac1k\log\det(C)\Big)^2.
\]
We optionally add constraints on the participation ratio $\mathrm{PR}=\mathrm{tr}(C)^2/\mathrm{tr}(C^2)$
and the top-eigenvalue fraction $\lambda_{\max}/\mathrm{tr}(C)$.

\paragraph{Mean penalty.}
We penalize non-zero feature means using the squared mean (fp32):
\[
\mathcal{L}_{\text{mean}} \propto \|\mathbb{E}[z]\|_2^2.
\]

\subsection{Training step summary}
\paragraph{MV-HJEPA.}
At each step:
\begin{enumerate}
  \item Build $V$ views per image, concatenate them, and compute $z_{\text{views}}\in\mathbb{R}^{V\times B\times 2d}$.
  \item Select the two global views $(z_0,z_1)$ and compute $\mathcal{L}_{\text{pred}}$ via Hamiltonian flow prediction.
  \item Compute $\mathcal{L}_{\text{budget}}$, $\mathcal{L}_{\text{var}}$, $\mathcal{L}_{\log\det}$, and $\mathcal{L}_{\text{mean}}$
        on all views.
  \item Optimize encoder + (identity) projector + predictor parameters with AdamW, optional gradient clipping,
        and cosine LR schedule with warmup.
\end{enumerate}

\paragraph{Baseline (LeJEPA + SIGReg/HamSIGReg).}
If \texttt{hjepa} is absent:
\begin{enumerate}
  \item Compute $z_{\text{views}}$ as above.
  \item Apply LeJEPA prediction loss: all views match the per-sample mean of global views.
  \item Add a distributional regularizer (e.g.\ SIGReg) with weight \texttt{lambda\_reg}.
  \item If the regularizer includes a learnable metric $H$, optionally update $H$ on a slower/periodic schedule.
\end{enumerate}

\subsection{Algorithms}
\label{subsec:impl-algorithms}
\FloatBarrier

\makeatletter
\let\origalgorithm\algorithm
\let\origendalgorithm\endalgorithm

\renewenvironment{algorithm}[1][\relax]{%
  \par\addvspace{0.35\baselineskip}%
  \refstepcounter{algorithm}%
  \hrule height .8pt depth 0pt\kern 2pt%
  \renewcommand{\caption}[2][\relax]{%
    {\raggedright\textbf{\ALG@name~\thealgorithm}\;##2\par}%
    \ifx\relax##1\relax
      \addcontentsline{loa}{algorithm}{\protect\numberline{\thealgorithm}##2}%
    \else
      \addcontentsline{loa}{algorithm}{\protect\numberline{\thealgorithm}##1}%
    \fi
    \kern 2pt\hrule\kern 2pt%
  }%
}{%
  \kern 2pt\hrule\par\addvspace{0.35\baselineskip}%
}
\makeatother

\paragraph{Notation.}
Let $V$ be the number of augmented views per image, $B$ the batch size, and $K$ the representation dimension.
In MV-HJEPA we enforce $K=2d$ and interpret each representation as a phase-space state
$z=[q;p]\in\mathbb{R}^{2d}$ with $q,p\in\mathbb{R}^{d}$ (channel-wise split, \texttt{split\_qp=true}).
We write $\mathrm{concat}_v x^{(v)}$ for concatenation of views along the batch dimension, and
$\mathrm{reshape}$ for a view/batch reshape with the same underlying memory order as the implementation.


\begin{algorithm}[H]
\caption{Multi-crop view generation (used by \texttt{CIFAR100MultiCrop} and \texttt{ImageNetMultiCrop})}
\label{alg:multicrop}
\begin{algorithmic}[1]
\Require Image $I$, multi-crop config: $(V_g,V_\ell)$, output size $s$, scales $(\alpha_g,\beta_g)$ and $(\alpha_\ell,\beta_\ell)$
\Ensure List of views $\{x^{(v)}\}_{v=1}^{V}$ where $V=V_g+V_\ell$
\State Initialize empty list $\texttt{views}\leftarrow[\;]$
\For{$i=1$ to $V_g$}
  \State $x \leftarrow \mathrm{Augment}(I;\; \mathrm{RandomResizedCrop}(s,\mathrm{scale}=[\alpha_g,\beta_g]),\mathrm{HFlip},\mathrm{CJ},\mathrm{Gray},\mathrm{Blur},\mathrm{Solarize})$
  \State Append $x$ to \texttt{views}
\EndFor
\For{$i=1$ to $V_\ell$}
  \State $x \leftarrow \mathrm{Augment}(I;\; \mathrm{RandomResizedCrop}(s,\mathrm{scale}=[\alpha_\ell,\beta_\ell]),\mathrm{HFlip},\mathrm{CJ},\mathrm{Gray},\mathrm{Blur},\mathrm{Solarize})$
  \State Append $x$ to \texttt{views}
\EndFor
\State \Return \texttt{views}
\end{algorithmic}
\end{algorithm}

\noindent
\textbf{Implementation note (dataset-specific):}
CIFAR views are produced as float tensors with normalization applied in the dataset pipeline; ImageNet views are emitted as compact \texttt{uint8} tensors (via \texttt{PILToTensor}) and normalized on-device in the training loop.

\begin{algorithm}[H]
\caption{ResNet token encoder forward pass (token mode, optional $q/p$ split)}
\label{alg:token-encoder}
\begin{algorithmic}[1]
\Require Image batch $x\in\mathbb{R}^{B\times 3\times H\times W}$, ResNet stem+layers, token layer $\ell\in\{\texttt{layer2},\texttt{layer3},\texttt{layer4}\}$, channel projection to $d_f$, optional pooling to $\texttt{token\_hw}=h=w$, \texttt{split\_qp} flag
\Ensure Representation $z\in\mathbb{R}^{B\times K}$ where $K=h\cdot w\cdot d_f$
\State $u \leftarrow \mathrm{stem}(x)$
\State $u \leftarrow \mathrm{layer1}(u)$
\State $u \leftarrow \mathrm{layer2}(u)$; set $\mathrm{fmap}\leftarrow u$ if $\ell=\texttt{layer2}$
\State $u \leftarrow \mathrm{layer3}(u)$; set $\mathrm{fmap}\leftarrow u$ if $\ell=\texttt{layer3}$
\State $u \leftarrow \mathrm{layer4}(u)$; set $\mathrm{fmap}\leftarrow u$ if $\ell=\texttt{layer4}$
\If{\texttt{token\_hw} is set}
  \State $\mathrm{fmap}\leftarrow \mathrm{AdaptiveAvgPool2d}(\mathrm{fmap},(h,w))$
\EndIf
\State $t \leftarrow \mathrm{Conv1\times1+BN}(\mathrm{fmap}) \in \mathbb{R}^{B\times d_f\times h\times w}$
\State $t \leftarrow \mathrm{permute}(t)\in\mathbb{R}^{B\times h\times w\times d_f}$
\If{\texttt{split\_qp} = false}
  \State $z \leftarrow \mathrm{flatten}(t)\in\mathbb{R}^{B\times (hwd_f)}$
\Else
  \State Split channels: $t\mapsto (t_q,t_p)$ with $t_q,t_p\in\mathbb{R}^{B\times h\times w\times (d_f/2)}$
  \State $q\leftarrow \mathrm{flatten}(t_q)\in\mathbb{R}^{B\times (hw(d_f/2))}$;\;\; $p\leftarrow \mathrm{flatten}(t_p)\in\mathbb{R}^{B\times (hw(d_f/2))}$
  \State $z\leftarrow [q;\,p]\in\mathbb{R}^{B\times (hwd_f)}$
\EndIf
\State \Return $z$
\end{algorithmic}
\end{algorithm}

\begin{algorithm}[H]
\caption{MV-HJEPA training step (matches \texttt{train\_*\_hamjepa.py} when \texttt{hjepa:} is present)}
\label{alg:mv-hjepa-train}
\begin{algorithmic}[1]
\Require Views $\{x^{(v)}\}_{v=1}^V$, encoder $E$, projector $P$ (identity for MV-HJEPA), predictor $\mathrm{Flow}_H$, loss/regularizer weights $\lambda_{\bullet}$
\Require MV-HJEPA requires \texttt{split\_qp=true} and $K=2d$
\State $x_{\text{cat}} \leftarrow \mathrm{concat}_v\,x^{(v)} \in \mathbb{R}^{(VB)\times C\times H\times W}$
\State $z_{\text{cat}} \leftarrow P(E(x_{\text{cat}})) \in \mathbb{R}^{(VB)\times 2d}$
\State $z_{\text{views}} \leftarrow \mathrm{reshape}(z_{\text{cat}})\in \mathbb{R}^{V\times B\times 2d}$
\State $z_0 \leftarrow z_{\text{views}}[0],\;\; z_1 \leftarrow z_{\text{views}}[1]$
\State $\mathcal{L}_{\text{pred}} \leftarrow \mathrm{HamConsistency}(z_0,z_1)$
\If{\texttt{bidirectional}}
  \State $\mathcal{L}_{\text{pred}} \leftarrow \tfrac12\left(\mathcal{L}_{\text{pred}} + \mathrm{HamConsistency}(z_1,z_0)\right)$
\EndIf
\State $z_{\text{all}} \leftarrow z_{\text{views}}.\mathrm{reshape}(VB,2d)$
\State Split $z_{\text{all}} \mapsto (q_{\text{all}},p_{\text{all}})$ along the last dimension
\State $\mathcal{L}_{\text{budget}} \leftarrow \mathrm{EnergyBudget}(z_{\text{all}})$
\State $\mathcal{L}_{\text{var}} \leftarrow \mathrm{VarFloor}(q_{\text{all}})\;(+\;\mathrm{VarFloor}(p_{\text{all}})\ \text{if enabled in that script})$
\State $\mathcal{L}_{\log\det} \leftarrow \mathrm{LogDetFloor}(q_{\text{all}})+\mathrm{LogDetFloor}(p_{\text{all}})$
\State $\mathcal{L}_{\text{mean}} \leftarrow \mathrm{mean}(z_{\text{all}})^{\odot 2}\ \text{averaged over dimensions}$ \Comment{matches \texttt{mean().square().mean()}}
\State $\mathcal{L} \leftarrow \mathcal{L}_{\text{pred}}
        + \lambda_{\text{budget}}\mathcal{L}_{\text{budget}}
        + \lambda_{\text{var}}\mathcal{L}_{\text{var}}
        + \lambda_{\log\det}\mathcal{L}_{\log\det}
        + \lambda_{\text{mean}}\mathcal{L}_{\text{mean}}$
\State Update parameters of $\{E,P,\mathrm{Flow}_H\}$ with AdamW; optionally clip gradients and step LR schedule
\end{algorithmic}
\end{algorithm}

\begin{algorithm}[H]
\caption{Hamiltonian Consistency loss (\texttt{HamiltonianConsistencyLoss})}
\label{alg:ham-consistency}
\begin{algorithmic}[1]
\Require States $z_a,z_b\in\mathbb{R}^{B\times 2d}$, predictor $\mathrm{Flow}_H$, options: \texttt{detach\_target}, \texttt{match}\,$\in\{\texttt{q},\texttt{qp}\}$, \texttt{p\_weight}, \texttt{energy\_weight}, \texttt{bidirectional}
\Ensure Scalar loss $\mathcal{L}$
\State $\widehat{z}_{a\to b}\leftarrow \mathrm{Flow}_H(z_a;\mathrm{direction}=+1)$
\State $z_b^{\star}\leftarrow z_b$ if not \texttt{detach\_target} else $\mathrm{stopgrad}(z_b)$
\If{\texttt{match}=\texttt{qp}}
  \State $\ell_{a\to b}\leftarrow \mathrm{MSE}(\widehat{z}_{a\to b}, z_b^{\star})$
\Else \Comment{\texttt{match}=\texttt{q}}
  \State $\ell_{a\to b}\leftarrow \mathrm{MSE}(\widehat{z}_{a\to b}[:,:d], z_b^{\star}[:,:d])$
  \If{$\texttt{p\_weight}>0$}
    \State $\ell_{a\to b}\leftarrow \ell_{a\to b} + \texttt{p\_weight}\cdot \mathrm{MSE}(\widehat{z}_{a\to b}[:,d:], z_b^{\star}[:,d:])$
  \EndIf
\EndIf
\If{$\texttt{energy\_weight}>0$}
  \State Split $z_a\mapsto(q_a,p_a)$ and $\widehat{z}_{a\to b}\mapsto(\hat{q},\hat{p})$
  \State $H_a\leftarrow \mathrm{stopgrad}\big(H(q_a,p_a)\big)$;\;\; $H_{\hat{z}}\leftarrow H(\hat{q},\hat{p})$
  \State $\ell_{a\to b}\leftarrow \ell_{a\to b} + \texttt{energy\_weight}\cdot \mathrm{MSE}(H_{\hat{z}},H_a)$
\EndIf
\State $\mathcal{L}\leftarrow \ell_{a\to b}$
\If{\texttt{bidirectional}}
  \State Compute $\ell_{b\to a}$ identically using $\mathrm{direction}=-1$ and target $z_a$ (optionally detached)
  \State $\mathcal{L}\leftarrow \tfrac12(\ell_{a\to b}+\ell_{b\to a})$
\EndIf
\State \Return $\mathcal{L}$
\end{algorithmic}
\end{algorithm}

\begin{algorithm}[H]
\caption{Hamiltonian Flow Predictor (forward pass, \texttt{HamiltonianFlowPredictor})}
\label{alg:hflow}
\begin{algorithmic}[1]
\Require State $z_0\in\mathbb{R}^{\cdots\times 2d}$, Hamiltonian module $H$, method, steps $S$, base step size $\Delta t>0$, direction $s\in\{+1,-1\}$, \texttt{integrate\_fp32}, \texttt{learn\_dt}
\Ensure State $z_T\in\mathbb{R}^{\cdots\times 2d}$
\State Split $z_0 \mapsto (q_0,p_0)$
\State $\Delta t \leftarrow s\cdot \Delta t$;\;\; if \texttt{learn\_dt}, set $\Delta t\leftarrow s\cdot \mathrm{softplus}(\texttt{raw\_dt})$
\If{\texttt{integrate\_fp32} enabled \textbf{and} $\mathrm{dtype}(z_0)\in\{\mathrm{fp16},\mathrm{bf16}\}$}
  \State Cast $(q_0,p_0)$ to fp32
\EndIf
\If{$H$ is separable \textbf{and} method is leapfrog}
  \State $(q_T,p_T)\leftarrow \mathrm{IntegrateSeparableLeapfrog}(V_\phi,q_0,p_0,\Delta t,S)$
\Else
  \State $(q_T,p_T)\leftarrow \mathrm{IntegrateHamiltonian}(H,q_0,p_0,\Delta t,S,\text{method})$
\EndIf
\State $z_T \leftarrow [q_T;\, p_T]$
\If{\texttt{integrate\_fp32} enabled \textbf{and} original dtype was bf16/fp16}
  \State Cast $z_T$ back to the original dtype
\EndIf
\State \Return $z_T$
\end{algorithmic}
\end{algorithm}

\begin{algorithm}[H]
\caption{Symplectic integration routines (as in \texttt{hamjepa/integrators.py})}
\label{alg:integrators}
\begin{algorithmic}[1]
\Require General Hamiltonian $H(q,p)$ or separable potential $V(q)$, initial $(q,p)$, step size $\Delta t$, steps $S$, method $\in\{\texttt{leapfrog},\texttt{symplectic\_euler}\}$
\Ensure Final $(q,p)$ after $S$ steps
\Function{IntegrateHamiltonian}{$H,q,p,\Delta t,S,\text{method}$}
  \For{$n=1$ to $S$}
    \If{method = symplectic\_euler}
      \State $p \leftarrow p - \Delta t\,\nabla_q H(q,p)$
      \State $q \leftarrow q + \Delta t\,\nabla_p H(q,p)$ \Comment{$\nabla_p$ uses updated $p$}
    \Else \Comment{method = leapfrog (velocity Verlet)}
      \State $p_{1/2} \leftarrow p - \tfrac{\Delta t}{2}\nabla_q H(q,p)$
      \State $q \leftarrow q + \Delta t\,\nabla_p H(q,p_{1/2})$
      \State $p \leftarrow p_{1/2} - \tfrac{\Delta t}{2}\nabla_q H(q,p_{1/2})$
    \EndIf
  \EndFor
  \State \Return $(q,p)$
\EndFunction
\Function{IntegrateSeparableLeapfrog}{$V,q,p,\Delta t,S$}
  \For{$n=1$ to $S$}
    \State $p_{1/2}\leftarrow p - \tfrac{\Delta t}{2}\nabla_q V(q)$
    \State $q \leftarrow q + \Delta t\,p_{1/2}$
    \State $p \leftarrow p_{1/2} - \tfrac{\Delta t}{2}\nabla_q V(q)$ \Comment{$\nabla_q V$ uses updated $q$}
  \EndFor
  \State \Return $(q,p)$
\EndFunction
\end{algorithmic}
\end{algorithm}

\begin{algorithm}[H]
\caption{Phase-space energy budget regularizer (\texttt{PhaseSpaceEnergyBudget})}
\label{alg:energy-budget}
\begin{algorithmic}[1]
\Require States $z\in\mathbb{R}^{\cdots\times 2d}$, targets $(q_{\text{tgt}},p_{\text{tgt}})$, \texttt{ddp\_sync}
\Ensure Scalar loss $\mathcal{L}_{\text{budget}}$
\State Flatten $z\to z_f\in\mathbb{R}^{N\times 2d}$ and cast to fp32
\State Split $z_f\mapsto(q,p)$ with $q,p\in\mathbb{R}^{N\times d}$
\State $q2\_sum\leftarrow \sum q^{\odot 2}$,\;\; $p2\_sum\leftarrow \sum p^{\odot 2}$,\;\; $\texttt{count}\leftarrow Nd$
\If{\texttt{ddp\_sync} and DDP initialized}
  \State All-reduce ($\texttt{SUM}$) $q2\_sum$, $p2\_sum$, and \texttt{count}
\EndIf
\State $q2\_\mu \leftarrow q2\_sum/\texttt{count}$;\;\; $p2\_\mu \leftarrow p2\_sum/\texttt{count}$
\State $\mathcal{L}_{\text{budget}}\leftarrow (q2\_\mu-q_{\text{tgt}})^2+(p2\_\mu-p_{\text{tgt}})^2$
\State \Return $\mathcal{L}_{\text{budget}}$
\end{algorithmic}
\end{algorithm}

\begin{algorithm}[t]
\caption{Variance floor regularizer (\texttt{VarianceFloor})}
\label{alg:var-floor}
\begin{algorithmic}[1]
\Require Features $x\in\mathbb{R}^{N\times d}$, std floor $\sigma_{\min}$, $\varepsilon$, \texttt{ddp\_sync}
\Ensure Scalar loss $\mathcal{L}_{\text{var}}$
\State Cast $x$ to fp32; if $N<2$ return $0$
\State Compute sufficient stats: $s_1\leftarrow \sum_{n} x_n$, $s_2\leftarrow \sum_{n} x_n^{\odot 2}$, and $n\leftarrow N$
\If{\texttt{ddp\_sync} and DDP initialized}
  \State All-reduce ($\texttt{SUM}$) $s_1$, $s_2$, and $n$
\EndIf
\State $\mu \leftarrow s_1/n$;\;\; $\mathrm{var}\leftarrow s_2/n-\mu^{\odot 2}$;\;\; $\sigma\leftarrow \sqrt{\max(\mathrm{var},0)+\varepsilon}$
\State $\mathcal{L}_{\text{var}} \leftarrow \frac{1}{d}\sum_{i=1}^d \mathrm{ReLU}(\sigma_{\min}-\sigma_i)^2$
\State \Return $\mathcal{L}_{\text{var}}$
\end{algorithmic}
\end{algorithm}

\begin{algorithm}[H]
\caption{Projected log-det floor (\texttt{ProjectedLogDetFloor})}
\label{alg:logdet-floor}
\begin{algorithmic}[1]
\Require Features $x\in\mathbb{R}^{B\times D}$, projection dim $k$, log-det floor $\ell_{\min}$, $\varepsilon$, refresh interval $r$, optional $\texttt{pr\_norm\_floor}$ and $\texttt{eigmax\_frac\_ceiling}$, \texttt{ddp\_sync}
\Ensure Scalar loss $\mathcal{L}_{\log\det}$
\State Cast $x$ to fp32; if $B<2$ return $0$
\State $k \leftarrow \min(k,D,B-1)$
\If{need resample (first call / device change / shape change / step $\bmod\, r = 0$)}
  \State Sample $R\sim \mathcal{N}(0,1)^{D\times k}$ and orthonormalize columns via QR: $R\leftarrow \mathrm{qr}(R)$
\EndIf
\State Center and project: $x_c \leftarrow x-\mathrm{mean}(x)$;\;\; $y\leftarrow x_c R\in\mathbb{R}^{B\times k}$
\State Compute sufficient stats: $\texttt{sum\_y}\leftarrow \sum_b y_b$,\;\; $\texttt{sum\_yy}\leftarrow y^\top y$,\;\; $n\leftarrow B$
\If{\texttt{ddp\_sync} and DDP initialized}
  \State All-reduce ($\texttt{SUM}$) \texttt{sum\_y}, \texttt{sum\_yy}, and $n$
\EndIf
\State $n\leftarrow \max(n,2)$;\;\; $\mu_y\leftarrow \texttt{sum\_y}/n$
\State $\Sigma \leftarrow \frac{\texttt{sum\_yy}-n\cdot (\mu_y\mu_y^\top)}{n-1}$;\;\; $\Sigma\leftarrow \tfrac12(\Sigma+\Sigma^\top)+\varepsilon I$
\State $\ell \leftarrow \frac{1}{k}\log\det(\Sigma)$ \Comment{implemented via \texttt{slogdet}}
\State $\mathcal{L}_{\log\det}\leftarrow \mathrm{ReLU}(\ell_{\min}-\ell)^2$
\State Compute $ \mathrm{PR}=\frac{\mathrm{tr}(\Sigma)^2}{\mathrm{tr}(\Sigma^2)}$, $\mathrm{pr\_norm}=\mathrm{PR}/k$, and $\mathrm{eigmax\_frac}=\lambda_{\max}/\mathrm{tr}(\Sigma)$
\If{\texttt{pr\_norm\_floor} is set}
  \State $\mathcal{L}_{\log\det}\leftarrow \mathcal{L}_{\log\det} + \mathrm{ReLU}(\texttt{pr\_norm\_floor}-\mathrm{pr\_norm})^2$
\EndIf
\If{\texttt{eigmax\_frac\_ceiling} is set}
  \State $\mathcal{L}_{\log\det}\leftarrow \mathcal{L}_{\log\det} + \mathrm{ReLU}(\mathrm{eigmax\_frac}-\texttt{eigmax\_frac\_ceiling})^2$
\EndIf
\State \Return $\mathcal{L}_{\log\det}$
\end{algorithmic}
\end{algorithm}

\begin{algorithm}[H]
\caption{SIGReg forward pass (random slicing + Epps--Pulley statistic)}
\label{alg:sigreg}
\begin{algorithmic}[1]
\Require Batch $z\in\mathbb{R}^{\cdots\times N\times D}$, slices $K$, knots $t_1,\dots,t_T$ (with weights $w$), refresh interval $r$, \texttt{ddp\_sync}
\Ensure Scalar statistic $S(z)$ (mean-reduced in our configs)
\State \textbf{(Slicing)} If step $\bmod\, r = 0$, sample $A\in\mathbb{R}^{D\times K}$ with i.i.d. $\mathcal{N}(0,1)$ columns and normalize each column to unit norm
\If{\texttt{ddp\_sync} and DDP initialized}
  \State Ensure all ranks share the same sampling seed for $A$ (broadcast seed from rank 0)
\EndIf
\State Project: $y \leftarrow zA \in \mathbb{R}^{\cdots\times N\times K}$
\State \textbf{(Epps--Pulley)} Compute empirical CF on each slice:
\State \hspace{1em} $\hat{c}(t)=\frac{1}{N}\sum_{n}\cos(t\,y_n)$,\;\; $\hat{s}(t)=\frac{1}{N}\sum_{n}\sin(t\,y_n)$ (optionally globally averaged via all-reduce if \texttt{ddp\_sync})
\State \hspace{1em} Target CF for $\mathcal{N}(0,1)$ is $\phi(t)=e^{-t^2/2}$
\State \hspace{1em} $S_k \leftarrow N\sum_{j=1}^{T} w_j\Big[(\hat{c}(t_j)-\phi(t_j))^2+\hat{s}(t_j)^2\Big]$ for each slice $k$
\State Reduce over slices (and leading dims) by mean: $S(z)\leftarrow \mathrm{mean}_k(S_k)$
\State \Return $S(z)$
\end{algorithmic}
\end{algorithm}

\begin{algorithm}[H]
\caption{LeJEPA baseline step with SIGReg regularization (non-MV branch in \texttt{train\_*\_hamjepa.py})}
\label{alg:lejepa-sigreg-train}
\begin{algorithmic}[1]
\Require Views $\{x^{(v)}\}_{v=1}^V$, encoder $E$, projector $P$, number of global views $V_g$, regularizer $\mathcal{R}$ (SIGReg), weight $\lambda_{\mathrm{reg}}$
\State $x_{\text{cat}} \leftarrow \mathrm{concat}_v\,x^{(v)}$
\State $z_{\text{cat}} \leftarrow P(E(x_{\text{cat}}))\in\mathbb{R}^{(VB)\times D}$
\State $z_{\text{views}} \leftarrow \mathrm{reshape}(z_{\text{cat}})\in\mathbb{R}^{V\times B\times D}$
\State $\bar{z}\leftarrow \frac{1}{V_g}\sum_{v=1}^{V_g} z_{\text{views}}[v] \in \mathbb{R}^{B\times D}$ \Comment{mean of global views}
\State $\mathcal{L}_{\text{pred}} \leftarrow \tfrac12\ \mathrm{mean}\big(\|z_{\text{views}}-\bar{z}\|_2^2\big)$ \Comment{matches \texttt{lejepa\_prediction\_loss}}
\State $\mathcal{L}_{\text{reg}} \leftarrow \mathcal{R}(z_{\text{views}})$ \Comment{SIGReg expects shape $(\cdots,N,D)$}
\State $\mathcal{L} \leftarrow \mathcal{L}_{\text{pred}} + \lambda_{\mathrm{reg}}\mathcal{L}_{\text{reg}}$ \Comment{spectral term omitted here (zero in our SIGReg configs)}
\State Update $\{E,P\}$ with AdamW; optionally step LR schedule and log diagnostics
\end{algorithmic}
\end{algorithm}

\makeatletter
\let\algorithm\origalgorithm
\let\endalgorithm\origendalgorithm
\makeatother

\subsection{YAML Configuration Knobs}
\label{subsec:impl-yaml-knobs}

The repository is fully config-driven: the training scripts read a YAML file and route into either
(i) MV-HJEPA mode when the \texttt{hjepa:} stanza is present (and \texttt{split\_qp=true}),
or (ii) the LeJEPA+regularizer baseline when \texttt{hjepa:} is absent (e.g.\ \texttt{regularizer.type=sigreg}).
Below we document every key appearing in the provided configs, exactly as it is consumed by the code.
We keep \texttt{seed=42} for reproducibility across runs.

\subsubsection{Data (\texttt{data:})}
\begin{itemize}
  \item \texttt{root}: Dataset root path passed to \texttt{CIFAR100} / \texttt{ImageFolder} loader.
  \item \texttt{batch\_size}: DataLoader batch size (\texttt{batch\_size=batch\_size}). We use larger batches on CIFAR (512) and smaller on ImageNet (256) to balance GPU memory and throughput.
  \item \texttt{num\_workers}: DataLoader worker count. Higher values improve input pipeline throughput when CPU-bound.
  \item \texttt{drop\_last}: If true, drops the last incomplete batch. This avoids smaller last batches, which can destabilize batch statistics and covariance-based regularizers (and helps DDP batch-size consistency).
  \item \texttt{pin\_memory} (ImageNet configs): Enables pinned host memory to speed CPU$\to$GPU transfers.
  \item \texttt{persistent\_workers} (ImageNet configs): Keeps workers alive across epochs to reduce worker spin-up overhead.
  \item \texttt{prefetch\_factor} (ImageNet configs): Worker prefetch depth; increases overlap between CPU augmentation and GPU training.
  \item \texttt{num\_global\_views}, \texttt{num\_local\_views}: Multi-crop configuration. In all provided runs we set two global views and no local views (\texttt{2/0}), matching the two-view training objective in both MV-HJEPA and the LeJEPA baseline.
  \item \texttt{out\_size} (ImageNet configs): Output crop resolution for \texttt{RandomResizedCrop}. We use 224 (standard ImageNet scale).
  \item \texttt{global\_scale}, \texttt{local\_scale} (ImageNet configs): Scale ranges for global/local crops. Only global crops are used when \texttt{num\_local\_views=0}.
  \item \emph{CIFAR note:} the CIFAR YAMLs omit \texttt{out\_size/global\_scale/local\_scale}, so the dataset defaults are used (\texttt{out\_size=32}, \texttt{global\_scale=(0.3,1.0)}, \texttt{local\_scale=(0.05,0.3)}).
\end{itemize}

\subsubsection{Model (\texttt{model:})}
\begin{itemize}
  \item \texttt{encoder\_mode}: \texttt{global} uses GAP (global average pooling)+FC; \texttt{tokens} exposes an intermediate feature map and flattens it into tokens. All provided configs use \texttt{tokens}.
  \item \texttt{encoder\_stem}: \texttt{cifar} modifies ResNet-18 stem (3$\times$3 stride 1, no maxpool) for 32$\times$32 inputs; \texttt{imagenet} keeps the default 7$\times$7 stride 2 + maxpool stem.
  \item \texttt{token\_layer}: Which ResNet stage to expose as the token grid (\texttt{layer2/layer3/layer4}). We use \texttt{layer3}, giving an 8$\times$8 grid on CIFAR and a 14$\times$14 grid on ImageNet before any optional pooling.
  \item \texttt{token\_d\_f}: Number of channels after the $1{\times}1$ token projection (\texttt{Conv1x1+BN}). In token mode, the representation dimension is $K=h\cdot w\cdot \texttt{token\_d\_f}$.
  \item \texttt{token\_hw} (ImageNet configs): Optional adaptive pooling size for the token grid. We set \texttt{token\_hw=8} on ImageNet so that with \texttt{token\_d\_f=32} we obtain $K=8\cdot 8\cdot 32=2048$ (matching \texttt{embed\_dim}).
  \item \texttt{embed\_dim}: Output dimension $K$ of the encoder in token mode. For our settings this is tied to the token grid geometry (e.g.\ $8\cdot 8\cdot 32=2048$).
  \item \texttt{proj\_hidden}, \texttt{proj\_out}: Hidden/output sizes for the optional MLP projector. These are read from the YAML but are only used when \texttt{projector\_type=mlp}.
  \item \texttt{projector\_type}: \texttt{identity} returns $z$ unchanged; \texttt{mlp} applies \texttt{MLPProjector}. MV-HJEPA requires \texttt{identity} so the channel-wise $q/p$ semantics are preserved.
  \item \texttt{split\_qp}: If true, the token channels are split into $(q,p)$ halves per token and concatenated, enforcing $K=2d$ and enabling the phase-space interpretation required by MV-HJEPA.
\end{itemize}

\subsubsection{MV-HJEPA predictor (\texttt{hjepa:})}
(Only present in \texttt{*\_hjepa\_mv.yaml} configs.)
\begin{itemize}
  \item \texttt{hamiltonian}: Choice of Hamiltonian parameterization. We use \texttt{separable}, i.e.\ $H(q,p)=\tfrac12\|p\|^2 + V_\phi(q)$, to enable explicit symplectic leapfrog integration.
  \item \texttt{method}: Integrator choice. We use \texttt{leapfrog} (velocity Verlet), implemented in \texttt{integrate\_separable\_leapfrog} for separable $H$.
  \item \texttt{steps}: Number of integrator steps $S$ per forward pass. We use \texttt{2} to keep compute low while still giving a nontrivial flow.
  \item \texttt{dt}: Fixed step size $\Delta t$. Used as a constant buffer when \texttt{learn\_dt=false}.
  \item \texttt{learn\_dt}: If true, $\Delta t$ becomes learnable via a softplus parameterization. MV-HJEPA disables this (\texttt{false}) and the training script explicitly rejects learnable-$dt$ to avoid degenerate solutions (``dt-collapse'').
  \item \texttt{hidden\_dim}, \texttt{depth}: MLP size/depth used inside the Hamiltonian residual network (for $V_\phi$ in the separable case).
  \item \texttt{residual\_scale}: Multiplier on the Hamiltonian residual output. This is a stability knob: the code is designed so the Hamiltonian has a quadratic base plus a small learned residual.
  \item \texttt{residual\_scale\_warmup\_epochs}: Linearly ramps \texttt{residual\_scale} from 0 to its target value over the first epochs (implemented by overwriting \texttt{predictor.H.residual\_scale} each epoch).
  \item \texttt{base\_coeff}: Coefficient on the quadratic base potential term in $V(q)$ (i.e.\ $0.5\,\texttt{base\_coeff}\,\|q\|^2$). Keeping this nonzero prevents the ``trivial potential'' loophole where dynamics can collapse.
  \item \texttt{integrate\_fp32}: If true, the predictor casts $z$ to fp32 during integration when training in bf16/fp16, improving numerical stability of gradient-based integration.
\end{itemize}

\subsubsection{MV-HJEPA loss (\texttt{loss:})}
\begin{itemize}
  \item \texttt{match}: Which part of the state to match in the consistency loss:
    \texttt{q} matches positions only (optionally with a small $p$ penalty), while \texttt{qp} matches the full state.
  \item \texttt{p\_weight}: Only used when \texttt{match=q}. Adds an auxiliary MSE on $p$ with weight \texttt{p\_weight}.
  \item \texttt{detach\_target}: If true, the target branch is detached in the prediction loss (safer default to prevent the target moving to satisfy the predictor).
  \item \texttt{energy\_weight}: Optional extra term matching $H(z)$ before/after the flow. Set to \texttt{0.0} in all provided configs (disabled).
  \item \texttt{bidirectional}: If true, averages forward and backward consistency ($z_0\!\to\!z_1$ and $z_1\!\to\!z_0$). In the ImageNet training script this averaging is done explicitly in the loop; in the CIFAR script it can be handled inside the loss module.
\end{itemize}

\subsubsection{MV-HJEPA anti-collapse regularizers (\texttt{regularizer:})}
(These are the \emph{phase-space} regularizers used in the MV-HJEPA branch; they are not the SIGReg baseline.)
\begin{itemize}
  \item \texttt{q\_per\_dim\_target}, \texttt{p\_per\_dim\_target}: Targets for the phase-space energy budget:
  they control the desired per-dimension mean-square magnitude of $q$ and $p$.
  \item \texttt{q\_std\_floor}: Per-dimension standard deviation floor used by \texttt{VarianceFloor} on $q$ (and on $p$ in the CIFAR MV script; ImageNet MV applies the variance floor to $q$ only).
  \item \texttt{ddp\_sync}: If true and DDP is initialized, the sufficient statistics for the budget/variance/logdet terms are aggregated across ranks using all-reduce. This makes the regularizers behave as if computed on the global batch.
  \item \texttt{q\_logdet\_proj\_dim}, \texttt{p\_logdet\_proj\_dim}: Projection dimension $k$ for the log-det regularizer on $q$ and $p$.
  \item \texttt{q\_logdet\_floor}, \texttt{p\_logdet\_floor}: Minimum allowed log-determinant per projected dimension (a volume floor) used by \texttt{ProjectedLogDetFloor}.
  \item \texttt{q\_logdet\_eps}, \texttt{p\_logdet\_eps}: Diagonal jitter $\varepsilon I$ added to the covariance before computing \texttt{slogdet}.
  \item \texttt{q\_logdet\_refresh\_interval}, \texttt{p\_logdet\_refresh\_interval}: How often the random projection matrix is resampled (larger values reduce overhead and noise by reusing projections).
  \item \texttt{q\_pr\_norm\_floor}, \texttt{p\_pr\_norm\_floor}: Optional lower bounds on normalized participation ratio $\mathrm{PR}/k$ (encourages higher effective rank).
  \item \texttt{q\_eigmax\_frac\_ceiling}, \texttt{p\_eigmax\_frac\_ceiling}: Optional upper bounds on $\lambda_{\max}/\mathrm{tr}(\Sigma)$ to prevent a single dominant direction.
\end{itemize}

\subsubsection{Training (\texttt{train:})}
\begin{itemize}
  \item \texttt{epochs}: Total training epochs.
  \item \texttt{lr}: Base AdamW learning rate for encoder/projector (and predictor in CIFAR MV).
  \item \texttt{h\_lr} (ImageNet MV config): Separate learning rate used specifically for the Hamiltonian parameters inside the predictor (\texttt{predictor.H.parameters()}); the ImageNet training script builds a two-group optimizer when these parameters exist.
  \item \texttt{weight\_decay}: AdamW weight decay for the main optimizer.
  \item \texttt{warmup\_epochs}: Warmup duration used by the linear warmup + cosine decay scheduler (when enabled by the script).
  \item \texttt{min\_lr\_ratio}: Sets $\eta_{\min}=\texttt{lr}\cdot\texttt{min\_lr\_ratio}$ for cosine annealing.
  \item \texttt{precision}: Mixed precision mode. \texttt{bf16} enables CUDA autocast bf16 in training.
  \item \texttt{ckpt\_dir}: Output directory for checkpoints.
  \item \texttt{checkpoint\_every} (ImageNet configs): Save checkpoints every $k$ epochs (ImageNet script only).
  \item \texttt{grad\_clip}: If $>0$, applies global-norm gradient clipping (\texttt{clip\_grad\_norm\_}).
  \item \texttt{log\_every}: Print training diagnostics every \texttt{log\_every} steps.
  \item \texttt{profile\_timing}, \texttt{profile\_every} (ImageNet MV config): Optional lightweight profiling of data time vs step time (only used in the ImageNet script).
  \item \textbf{MV-HJEPA loss weights:} \texttt{lambda\_budget}, \texttt{lambda\_var}, \texttt{lambda\_logdet}, \texttt{lambda\_mean} weight the corresponding regularizers in Algorithm~\ref{alg:mv-hjepa-train}.
  \item \textbf{SIGReg baseline weights:} \texttt{lambda\_reg} is the coefficient on the SIGReg term in Algorithm~\ref{alg:lejepa-sigreg-train}.
  \item \texttt{h\_update\_interval}, \texttt{h\_grad\_clip} (SIGReg YAMLs): These knobs are parsed by the training scripts for the \emph{learnable Hamiltonian regularizer} path (\texttt{regularizer.type=ham\_sigreg} with learnable $H$). In the provided SIGReg-only configs (\texttt{type=sigreg}), there are no $H$ parameters, so these settings are effectively inert.
\end{itemize}

\end{document}